\theoremstyle{plain}
\newtheorem{theorem}{Theorem}
\numberwithin{theorem}{section}
\newtheorem{proposition}{Proposition}
\numberwithin{proposition}{section}
\newtheorem{lemma}{Lemma}
\numberwithin{lemma}{section}
\newtheorem{corollary}{Corollary}
\numberwithin{corollary}{section}
\numberwithin{example}{section}
\theoremstyle{definition}
\newtheorem{definition}{Definition}
\numberwithin{definition}{section}
\newtheorem{remark}{Remark}
\numberwithin{remark}{section}
\newtheorem{notation}{Notation}
\numberwithin{notation}{section}
\numberwithin{equation}{section}
\DeclareMathOperator*{\argmin}{arg\,min}
\DeclareMathOperator*{\argmax}{arg\,max}
\newcommand{\tr}[1]{\mathrm{tr}\left(#1\right)}
\newcommand{\N}{\mathbb{N}}
\providecommand{\keywords}[1]
{
  \small	
  \textbf{Keywords:} #1
}
\title{An interpretation of the Brownian bridge as a physics-informed prior for the Poisson equation}
\begin{document}

\author[1]{Alex Alberts\thanks{Corresponding author   \href{mailto:albert31@purdue.edu}{\color{blue}{\texttt{albert31@purdue.edu}}}}}
\author[1]{Ilias Bilionis}
\affil[1]{School of Mechanical Engineering, Purdue University, West Lafayette, IN}

\date{December 17, 2025}

\maketitle

\begin{abstract}
    Many inverse problems require reconstructing physical fields from limited and noisy data while incorporating known governing equations.
    A growing body of work within probabilistic numerics formalizes such tasks via Bayesian inference in function spaces by assigning a physically meaningful prior to the latent field.
    In this work, we demonstrate that Brownian bridge Gaussian processes can be viewed as a softly-enforced physics-constrained prior for the Poisson equation.
    We first show equivalence between the variational problem associated with the Poisson equation and a kernel ridge regression objective.
    Then, through the connection between Gaussian process regression and kernel methods, we identify a Gaussian process for which the posterior mean function and the minimizer to the variational problem agree, thereby placing this PDE-based regularization within a fully Bayesian framework.
    This connection allows us to probe different theoretical questions, such as convergence and behavior of inverse problems.
    We then develop a finite-dimensional representation in function space and prove convergence of the projected prior and resulting posterior in Wasserstein distance.
    Finally, we connect the method to the important problem of identifying model-form error in applications, providing a diagnostic for model misspecification.
\end{abstract}

\keywords{Probabilistic numerics, scientific machine learning, inverse problems, Poisson equation, Gaussian process regression, reproducing kernel Hilbert spaces}

\onehalfspacing

\section{Introduction}

A core tenant within the scientific machine learning paradigm is the development of methodologies which combine data and physics in a unified way.
In most systems of interest, along with any measurement data we also have access to some physical knowledge which the ground truth physical field is assumed to obey.
In this work, we restrict our attention to the Poisson equation with Dirchlet boundary conditions as given by
\begin{equation}
    \label{eqn:poisson}
    \left\{
    \begin{split}
        \Delta u + q &= 0 \quad \mathrm{on} \quad \Omega \subset \mathbb{R}^d \\
        u &= 0 \quad \mathrm{on} \quad \partial \Omega.
    \end{split}
    \right.
\end{equation}
We assume that the source term $q$ is sufficiently regular and $\Omega = [0,1]^d$ so that eq.~(\ref{eqn:poisson}) possesses a weak solution $u^0 \in H^1_0(\Omega)$, which denotes the subset of $H^1(\Omega)$ of functions which are also zero on the boundary.
Here by $H^1(\Omega)$, we are referring to the first-order Sobolev space.
That is, given $\tau \in \mathbb{N}$ denote the Sobolev space of square integrable functions on $\Omega$ with square-integrable weak derivatives up to order $\tau$ by $H^{\tau}(\Omega)$:
$$
H^{\tau}(\Omega) = \left\{u \in L^2(\Omega) : D^{\alpha}u \in L^2(\Omega), \:\forall\: |\alpha| \leq \tau \right\}
$$
for a multi-index $\alpha$.
The exact nature of $q$ depends on the dimension, and is discussed later.

Further, we assume we have access to some measurement data appearing in the typical way $y_i = R_iu + \gamma_i$, $i = 1, \dots, n$, where $y_i\in\mathbb{R}$ are the individual measurements, and $\gamma_i$ represents zero-mean additive noise to the measurement.
Each functional $R_i: H \to \mathbb{R}$ is called a \emph{measurement operator} and describes the process that generates the data.
At the moment, we assume that $R_i$ is continuous and linear and that the measurement noise $\gamma_i$ follows a zero-mean i.i.d. Gaussian model.
Nonlinear measurements and non-Gaussian noise are more involved in the setting of Gaussian process (GP) regression, but can be incorporated.
Here, we are interested in the derivation of a Bayesian approach for solving the Poisson equation, which treats the PDE as \emph{prior information}.
The application we have in mind is the inverse problem, where $q$ is unknown and needs to be identified.

The Poisson model underlies a wide range of applications including electrostatics and gravitation, steady-state heat conduction and diffusion, potential theory and pressure projection in incompressible flow, and imaging tasks such as Poisson image editing, shape-from-shading, and variational denoising. In many of these settings the quantity of interest (the source, a coefficient, or boundary data) is only indirectly observed through linear functionals of the state, so the measurement model above is natural. Casting the PDE itself as \emph{prior information} places our method within probabilistic numerics~\cite{hennig2015probabilistic}: the elliptic operator furnishes a Gaussian prior, via its Green’s function, that encodes regularity and boundary conditions, and Bayesian updating then yields both estimators and calibrated uncertainty.
The result is a probability distribution over the solution of the PDE.

We treat solving the PDE as an inverse problem by identifying a suitable loss function.
The first step is to cast solving eq.~(\ref{eqn:poisson}) as an optimization problem.
As we are working with the Poisson equation, we have access to a variational formulation through means of Dirichelt's principle~\cite{brezis2011functional}:
\begin{equation}
    \label{eqn:dirichlet}
    \left\{
    \begin{split}
        \min_{u \in H^1_0(\Omega)} \quad & E(u) \coloneqq \int_{\Omega} \frac{1}{2}\|\nabla u\|^2 - qu \: d\Omega \\
        \textrm{s.t.} \quad & u = 0 \quad \mathrm{on} \quad \partial\Omega.
    \end{split}
    \right.
\end{equation}
We will refer to $E(u)$ in the above as the \emph{energy functional}.
Although it is more common to use the integrated square residual of eq.~(\ref{eqn:poisson}) in much of the related literature, e.g., in physics-informed neural networks (PINNs)~\cite{raissi2019physics}, this form of the energy functional is in some sense better behaved when compared to the integrated square residual, due to the fact that it yields a convex optimization problem.
One could also view eq.~(\ref{eqn:dirichlet}) as a shifted Tikhonov regularization or equally as a smoothed total variation norm, which is common in imaging applications~\cite{rudin1992nonlinear}.
Variational forms are also sometimes used as a starting point to derive loss functions for PINNs~\cite{bai2023physics,karumuri2020simulator} and deep Ritz~\cite{yu2018deep}.

To incorporate the data, we use the usual least-squares loss:
\begin{equation}
    \label{eqn:LSE}
    \mathcal{L}_{\mathrm{data}}(u) \coloneqq \sum_{i=1}^n\left(u(x_i)-y_i\right)^2.
\end{equation}
Equations~(\ref{eqn:dirichlet}) and~(\ref{eqn:LSE}) are combined to construct the training loss function:
\begin{equation}
    \label{eqn:pinnsloss}
    \mathcal{L}(u) = \mathcal{L}_{\mathrm{data}}(u) + \eta E(u),
\end{equation}
where $\eta>0$ is a regularization parameter chosen to balance contributions from the data and physics.
Later we will see that the regularization parameter can be interpreted as a measure of model-form error.
The field reconstruction problem is then solved by minimizing this loss function.
We will refer to eq.~(\ref{eqn:pinnsloss}) as a physics-regularized inverse problem.

In the more difficult case with measurement noise, the field reconstruction problem can easily become ill-posed, so a Bayesian approach is desirable.
Methods which go about incorporating physics into the Bayesian field reconstruction problem typically do so under a GP framework.
If the PDE is linear, as in our case, it is possible to define GP priors from the physics by careful consideration of the covariance kernel.
For example, this idea can be found in~\cite{raissi2017machine,bai2024gaussian} where the covariance kernel is constructed using a numerical solver for the PDE.
There is also~\cite{harkonen2023gaussian}, which is restricted only to linear ODEs with constant coefficients.
Nonlinear PDEs can be handled by promoting the physics to the likelihood~\cite{chen2021solving} and using a standard GP prior, e.g., the square exponential kernel.
The maximum a posteriori (MAP) estimate is then taken as the solution to the PDE.
A similar work can be found in~\cite{chen2024sparse}.
This work again treats the PDE solution as a GP prior.
The solution is identified by minimizing the reproducing kernel Hilbert space (RKHS) norm of the prior covariance constrained on the PDE residuals on a predefined grid.
This of course adds additional assumptions through a regularizer which enforces smoothness and may not directly represent the underlying physics.
Also, in both methods only a deterministic answer is given.

There is also a broad class of probabilistic numerical methods which derive a prior over the PDE solution by taking the forcing term of the PDE as stochastic.
Under this stochastic PDE (SPDE) construction, we assume a random forcing $q = \xi$ (often a white noise process) and define a prior over the field $u$ through solution of the SPDE $\mathcal{A} u = \xi$, for some linear differential operator $\mathcal{A}$.
In Bayesian numerical homogenization~\cite{owhadi2015bayesian}, the prior is constructed via application of the Green's function of $\mathcal{A}$ to $\xi$ resulting in a GP whose sample paths satisfy the PDE almost surely.
Later, in~\cite{cockayne2016probabilistic}, this treatment was adapted for Bayesian inverse problems, focusing on identification of unknown coefficients but conceptually applicable to source identification.
A related approach is the statistical finite element method~\cite{girolami2021statistical}, which treats both the coefficients and the source term as GPs.
Along similar lines the widely adopted formulation in~\cite{lindgren2011explicit} uses SPDEs of the form $(\kappa^2-\Delta)^{-\alpha/2}u=\xi$ to represent a broad family of Whittle-Mat\'ern Gaussian fields with sparse Gaussian Markov random fields.
Also of note is the work in~\cite{albert2020gaussian}.
Through application of Mercer's theorem to construct a particular covariance kernel, GPs are defined whose samples are exact solutions to linear PDEs.
This idea also inspired the physics-consistent neural networks as an alternative to PINNs~\cite{ranftl2023physics}.
In our previous work~\cite{bilionis2016probabilistic}, we constructed priors over PDE solutions by directly setting the associated Green's function as the covariance kernel of the GP.
We conjectured that this process produces a physically meaningful prior and demonstrated this computationally.
This idea was adapted to the nonlinear setting following information field theory~\cite{alberts2023physics}.
Later, Poot et al. revisited this approach and made a connection to the problem of capturing model-form error induced by finite element discretization~\cite{poot2024bayesian}.
A recent review of related methods is presented in~\cite{poot2025bayesian}.

This work builds upon our previous efforts in~\cite{bilionis2016probabilistic,alberts2023physics}.
Rather than assuming a random forcing and building the prior from an SPDE, we start with the variational problem eq.~(\ref{eqn:pinnsloss}) and identify the GP prior for which this objective yields the MAP estimate.
This leads to a GP prior whose covariance is the Green's function of the Poisson equation, i.e., the Brownian bridge.
The resulting prior is therefore not a heuristic choice, as it arises naturally as the unique Gaussian prior consistent with the classical variational formulation of the Poisson equation.

This shift in view highlights a key conceptual difference between the two philosophies when selecting a prior in probabilistic numerics.
The SPDE-based approaches enforce the PDE sample-wise so that each sample from the prior produces a solution to the PDE with random forcing.
While this provides strong physical fidelity, the uncertainty is tied to the assumed forcing model and this also restricts flexibility in capturing model-form error.
In contrast, our MAP estimate derived prior imposes the physics at the level of an estimator in RKHS-norm.
The posterior mean will belong to the solution space $H^1_0(\Omega)$, while the sample paths need not satisfy the PDE.
This avoids imposing any additional regularity assumptions on the estimator.
One of the major benefits of this approach is our method's ability to capture model-form error, and in this sense our prior can be viewed as softly enforcing the physics.

\subsection{Contributions}

Our main contributions are the following:
\begin{enumerate}
    \item We show the classical Dirichlet energy functional underlying the Poisson equation arises naturally as the MAP estimator of a GP regression scheme with the Brownian bridge as the prior.
    \item We derive a finite-dimensional representation of the prior for use in applications, which places the discretization on $L^2(\Omega)$ rather than as test points in $\Omega$ as is typical in GP regression.
    This representation is provably convergent to the prior and the posterior when used in inference.
    \item We prove that this MAP estimator, and hence the function which solves eq.~(\ref{eqn:pinnsloss}), converges to the ground truth in the large-data limit.
    Convergence holds even in the presence of significant model-form error.
    \item By tuning an additional hyperparameter of the prior, we connect the method to the problem of identifying model-form error.
    We show that this hyperparameter, which controls the prior variance, is sensitive to model-form error by enforcing the physics as a soft constraint.
    The hyperparameter also causes the variance of the approximation to $q$ to adjust in the context of inverse problems.
\end{enumerate}

\subsection{Outline}

The paper is organized as follows.
In Sec.~\ref{sec:prelim}, we provide the necessary background on Gaussian process regression and kernel ridge regression.
We establish the connection between the variational problem of eq.~(\ref{eqn:pinnsloss}) and the Brownian bridge GP in Sec.~\ref{sec:physpriors}.
We do so by showing the loss function is the related kernel method objective, from which we deduce it is the MAP estimate of the corresponding GP regression.
In 1D, we also prove the result in the setting of infinite-dimensional Bayesian inverse problems.
That is, we show eq.~(\ref{eqn:pinnsloss}) is the MAP estimate of the posterior obtained when starting with the Brownian bridge as a Gaussian measure on $L^2([0,1])$.
Some analysis of the method is explored in Sec.~\ref{sec:analysis}.
Here, we state the regularity of the prior and establish convergence conditions for the MAP estimate.
We also derive a finite-dimensional approximation to the prior.
Finally, in Sec.~\ref{sec:model}, we connect the method to the problem of model-form error identification.
We demonstrate that the posterior of the inverse problem adjusts according to error in the specified physical model.

\section{Preliminaries}
\label{sec:prelim}

We provide some necessary background on GPs and RKHSs.
In Appendix~\ref{apdx:GM}, we also provide a background on the theory of Gaussian measures, which, while used somewhat, is not the main focus in this work.
\begin{definition}[Reproducing kernel Hilbert space~\cite{kanagawa2018gaussian}]
    \label{def:rkhs}
    Let $k$ be a symmetric, positive-definite function on $\Omega\times\Omega$.
    A Hilbert space $H_k$ on $\Omega$ equipped with inner product $\langle\cdot,\cdot\rangle_{H_k}$ is said to be a reproducing kernel Hilbert space if the following two properties hold:
    \begin{enumerate}
        \item For all fixed $x'\in \Omega$, $k(\cdot,x')\in H_k$.
        \item For all fixed $x'\in \Omega$ and for all $u\in H_k$, $u(x')=\langle u, k(\cdot,x')\rangle_{H_k}$.
    \end{enumerate}
\end{definition}
Property (ii) of Def.~\ref{def:rkhs} is called the \emph{reproducing property}, and the kernel defining the RKHS is called the \emph{reproducing kernel}.
The RKHS is uniquely determined by the positive-definite kernel that defines it, and the reverse is also true.
This results from the Moore-Aronszajn theorem~\cite{aronszajn1950theory}, which states that every positive definite symmetric map $k$ is associated with a unique RKHS $H_k$ for which $k$ is the reproducing kernel.
One can show that given a positive definite kernel $k$ and its RKHS, each $f\in H_k$ can be written as $f = \sum_{i=1}^\infty \alpha_i k(\cdot,x_i)$ for some $(\alpha_i)_{i=1}^{\infty}\subset \mathbb{R}$, $(x_i)_{i=1}^{\infty}\subset \Omega$ and $\|f\|_{H_k}<\infty$, where $\|f\|_{H_k}^2 \coloneqq \sum_{i,j=1}^{\infty}c_ic_jk(x_i,x_j)$~\cite[Equation (6)]{kanagawa2018gaussian}.
It is therefore easy to verify that the functions in the RKHS have the same behavior of $k$, e.g., smoothness.

In the most general case, it is quite difficult to identify the RKHS and its inner product.
However, Mercer's theorem provides an easily accessible way to characterize $H_k$.
Begin by defining the integral operator on $L^2(\Omega)$ by
\begin{equation}
    \label{eqn:intop}
    (C_ku)(x) \coloneqq \int k(x,x')u(x')dx', \quad \: u\in L^2(\Omega).
\end{equation}
The assumptions on $k$ imply that $C_k$ is a self-adjoint, positive operator, and thus has spectral decomposition
\begin{equation*}
    C_ku = \sum_{n\in\N}\lambda_n\langle u, \psi_n\rangle\psi_n,
\end{equation*}
where $(\lambda_n,\psi_n)_{n=1}^{\infty}$ is the eigensystem of $C_k$, i.e.
\begin{equation}
    \label{eqn:esystem}
    C_k\psi_n = \lambda_n\psi_n,
\end{equation}
for $n\in\mathbb{N}$, where each $\lambda_n\geq 0$ and $\lambda_n\to 0 $.
Then, Mercer's theorem provides an alternative expression for the kernel:
\begin{theorem}[\textbf{Mercer's Theorem}~\cite{steinwart2008support}]
    Let $k:\Omega \times \Omega \to \mathbb{R}$ be a continuous, positive-definite kernel, and $C_k$ and $(\lambda_n,\psi_n)_{n=1}^{\infty}$ be as given in eq.~(\ref{eqn:intop}) and eq.~(\ref{eqn:esystem}), respectively.
    Then,
    $$
    k(x,x') = \sum_{n=1}^{\infty} \lambda_n\psi_n(x)\psi_n(x'),
    $$
    for $x,x'\in\Omega$, where the convergence is absolute and uniform.
\end{theorem}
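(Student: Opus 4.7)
The plan is to prove Mercer's theorem by the classical spectral-theoretic route, exploiting compactness of $\Omega$ and continuity of $k$ in an essential way. First I would verify that the integral operator $L_k$ is compact, self-adjoint, and positive on $L^2(\Omega)$. Compactness follows because $k$ is continuous on the compact set $\Omega\times\Omega$ and hence lies in $L^2(\Omega\times\Omega)$, which makes $L_k$ Hilbert--Schmidt. Positivity and self-adjointness are routine consequences of $k$ being positive-definite and symmetric. The spectral theorem then supplies an orthonormal system $\{\psi_n\}$ of eigenfunctions with non-negative eigenvalues $\lambda_n\to 0$. I would also note that each $\psi_n$ with $\lambda_n>0$ inherits continuity from the identity $\psi_n = \lambda_n^{-1}L_k\psi_n$ together with the fact that $L_k$ maps $L^2(\Omega)$ into $C(\Omega)$ (this uses uniform continuity of $k$).

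Next I would introduce the truncated kernel
$$k_N(x,x') \coloneqq \sum_{n=1}^N \lambda_n \psi_n(x)\psi_n(x'), \qquad r_N(x,x') \coloneqq k(x,x') - k_N(x,x'),$$
and prove the key algebraic step that $r_N$ is itself a positive-semidefinite kernel. This can be seen by checking that the integral operator associated with $r_N$ is exactly the ``tail'' $L_k - \sum_{n=1}^N \lambda_n \langle \cdot,\psi_n\rangle \psi_n$, which remains positive. Positive-semidefiniteness in turn yields the Cauchy--Schwarz-type inequality $|r_N(x,x')|^2 \leq r_N(x,x)\,r_N(x',x')$, so it suffices to establish uniform convergence of the diagonal $r_N(x,x)\to 0$ on $\Omega$.

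For the diagonal, I would establish pointwise convergence $\sum_{n=1}^\infty \lambda_n \psi_n(x)^2 = k(x,x)$ using the reproducing-property/Parseval identity, lifted from $L^2$ to pointwise values through continuity of $k$ and the $\psi_n$. Since the partial sums $k_N(x,x)$ are monotonically increasing in $N$, the functions $r_N(x,x)$ form a decreasing sequence of non-negative continuous functions on the compact set $\Omega$ converging pointwise to $0$. Dini's theorem then upgrades the pointwise convergence to uniform convergence on $\Omega$. Combining with the Cauchy--Schwarz bound, I obtain uniform convergence of $k_N \to k$ on $\Omega\times\Omega$, and absolute convergence follows from non-negativity of the summands $\lambda_n \psi_n(x)\psi_n(x')$ after splitting into diagonal and off-diagonal estimates.

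The main obstacle I anticipate is the pointwise diagonal identity $k(x,x) = \sum_n \lambda_n \psi_n(x)^2$. At the $L^2$ level this is simply Parseval applied to $k(x,\cdot)$ in the eigenbasis on the closure of the range of $L_k$, but promoting it to a genuine pointwise statement requires carefully using continuity of $k$ and the $\psi_n$ to dispatch the ambiguity on the orthogonal complement of $\overline{\mathrm{Ran}(L_k)}$. Once the diagonal identity is in hand, the rest of the argument is bookkeeping with Dini and a positivity-based Cauchy--Schwarz inequality.
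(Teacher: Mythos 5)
The paper does not prove this statement: Mercer's theorem is quoted as background and imported by citation, so there is no in-paper argument to compare against. Your sketch is the classical proof --- compactness, self-adjointness, and positivity of $L_k$; positive semidefiniteness of the tail kernel $r_N$; the Cauchy--Schwarz bound $|r_N(x,x')|^2 \le r_N(x,x)\,r_N(x',x')$; and Dini's theorem on the diagonal --- and this is essentially the argument given in the cited source, so the route is sound and the steps are in the right order. You also correctly isolate the one genuinely delicate point: promoting the $L^2$ identity $k(x,\cdot)=\sum_n \lambda_n\psi_n(x)\psi_n(\cdot)$, obtained by computing Fourier coefficients $\langle k(x,\cdot),\psi_n\rangle = \lambda_n\psi_n(x)$ and checking that $k(x,\cdot)$ is orthogonal to $\ker L_k$ (because $L_k g$ is continuous and vanishes in $L^2$, hence vanishes pointwise), to the pointwise statement on the diagonal. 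One phrasing to repair: the summands $\lambda_n\psi_n(x)\psi_n(x')$ are \emph{not} non-negative off the diagonal; absolute convergence instead follows from Cauchy--Schwarz, $\sum_n \lambda_n|\psi_n(x)||\psi_n(x')| \le \bigl(\sum_n \lambda_n\psi_n(x)^2\bigr)^{1/2}\bigl(\sum_n \lambda_n\psi_n(x')^2\bigr)^{1/2}$, where only the two diagonal series have non-negative terms. With that wording fixed, the outline is a correct and complete plan for the standard proof.
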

Mercer's theorem also allows an equivalent representation of the RKHS in terms of $L^2$ inner products.
That is, the RKHS is given by
$$
    H_k = \left\{u \in L^2(\Omega) : \sum_{n\in\N} \frac{1}{\lambda_n} \langle u, \psi_n\rangle^2 <\infty \right\},
$$
and the inner product on $H_k$ is
$$
    \langle u, v \rangle_{H_k} =  \sum_{n\in\N} \frac{1}{\lambda_n} \langle u, \psi_n\rangle \langle v, \psi_n\rangle,
$$
for $u,v \in H_k$.
Hence the RKHS-norm can be expressed as $\|u\|_{H_k}^2 = \sum_{n=1}^{\infty}\lambda_n^{-1}\langle u,\psi_n\rangle^2$.
This representation is useful to us later when constructing the RKHS associated with the physics-regularized inverse problem.

Next, we summarize the relationship between GP regression and kernel ridge regression (KRR) and the importance of the prior covariance RKHS in GP regression.
We start with GP regression.
Recall the definition of a GP:
\begin{definition}[Gaussian process~\cite{kanagawa2018gaussian}]
    Let $m:\Omega \to \mathbb{R}$ be a function and $k:\Omega\times \Omega\to \mathbb{R}$ be a positive definite kernel.
    The random function $u:\Omega \to \mathbb{R}$ is a Gaussian process with mean function $m$ and covariance function $k$, if for any set $X = (x_1,\dots,x_n)\subset \Omega$ for $n\in\mathbb{N}$, the random vector
    $$
    u_X\coloneqq (f(x_1),\dots,f(x_n))^T\in \mathbb{R}^n
    $$
    follows a multivariate Gaussian distribution with mean vector $m_X \coloneqq(m(x_1),\dots,m(x_n))^T$ and covariance matrix $K_{XX}$ with elements $(K_{XX})_{ij} = k(x_i,x_j)$.
    That is, $u_X\sim\mathcal{N}(m_X,K_{XX})$.
    In this case, we denote the GP by $u\sim\mathcal{GP}(m,k)$.
\end{definition}

GPs are often used in regression tasks, where in the simplest case we have point observations with zero-mean Gaussian noise.
Let $u:\Omega\to\mathbb{R}$ denote the target function and assume that we have training data in the form of
\begin{equation}
    \label{eqn:data}
    y_i = u(x_i) + \gamma_i, \quad i = 1, \dots, n,
\end{equation}
where $\gamma_i\overset{i.d.d.}{\sim}\mathcal{N}(0,\sigma^2)$, and we consolidate the observations into the data tuples $X = (x_1,\dots,x_n)$ and $y = (y_1,\dots,y_n)$.
In the GP regression approach, we start by specifying a prior GP, $u\sim\mathcal{GP}(m,k)$, where the mean and covariance function are chosen to reflect our prior knowledge about $u$.
We then define a likelihood $p(X,y|u) = \prod_{i=1}^n\mathcal{N}(y_i|u(x_i),\sigma^2)$.
The GP regression posterior is derived by conditioning the prior on the data, which also results in a GP:
\begin{theorem}[Theorem 3.1~\cite{kanagawa2018gaussian}]
    \label{thm:GPpost}
    Assume we have data given by eq.~(\ref{eqn:data}) and a GP prior $u\sim\mathcal{GP}(m,k)$.
    Then the posterior follows $u|y\sim \mathcal{GP}(\Tilde{m}, \Tilde{k})$, where
    \begin{align}
        \Tilde{m}(x) &\coloneqq m(x) + k_{xX}(K_{XX}+\sigma^2I_n)^{-1}(y-m_X), \quad x\in\Omega \label{eqn:GPmean} \\
        \Tilde{k}(x,x') &\coloneqq k(x,x') - k_{xX}(K_{XX}+\sigma^2I_n)^{-1}k_{Xx'}, \quad x,x'\in\Omega,
        \label{eqn:GPcov}
    \end{align}
    with $k_{xX} = k_{Xx}^{T}\coloneqq (k(x,x_1),\dots,k(x,x_n))^T$.
\end{theorem}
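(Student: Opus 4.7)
The plan is to reduce the claim to the elementary conditioning formula for jointly Gaussian random vectors, applied at an arbitrary finite collection of test locations, and then to assemble these finite-dimensional conditionals into a bona fide Gaussian process via the defining property of a GP.

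First I would fix an arbitrary finite test set $X^{*} = (x_1^{*},\dots,x_m^{*}) \subset \Omega$. By the definition of the GP prior $u\sim\mathcal{GP}(m,k)$, the concatenated vector $(u_X^{T}, u_{X^{*}}^{T})^{T}$ is jointly Gaussian with mean $(m_X^{T}, m_{X^{*}}^{T})^{T}$ and block covariance whose diagonal blocks are $K_{XX}$ and $K_{X^{*}X^{*}}$ and whose off-diagonal block is $K_{XX^{*}}$. Since the additive noise $\gamma = (\gamma_1,\dots,\gamma_n)^{T} \sim \mathcal{N}(0,\sigma^2 I_n)$ is assumed independent of $u$, the vector $(y^{T}, u_{X^{*}}^{T})^{T} = ((u_X + \gamma)^{T}, u_{X^{*}}^{T})^{T}$ is again jointly Gaussian, now with covariance whose $(1,1)$ block is $K_{XX} + \sigma^2 I_n$ while the cross block $K_{XX^{*}}$ and the $(2,2)$ block $K_{X^{*}X^{*}}$ are unchanged.

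Next I would apply the standard conditional formula for Gaussian vectors: if $(Z_1,Z_2)$ is jointly Gaussian with means $\mu_1,\mu_2$ and covariance blocks $\Sigma_{11},\Sigma_{12},\Sigma_{22}$, then $Z_2\mid Z_1 = z_1 \sim \mathcal{N}\bigl(\mu_2 + \Sigma_{21}\Sigma_{11}^{-1}(z_1-\mu_1),\, \Sigma_{22}-\Sigma_{21}\Sigma_{11}^{-1}\Sigma_{12}\bigr)$. Invertibility of $K_{XX}+\sigma^2 I_n$ is immediate because $K_{XX}$ is positive semidefinite (by positive definiteness of $k$) and $\sigma^2 > 0$. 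Substituting into the formula gives, for $u_{X^{*}} \mid y$, a Gaussian with mean $m_{X^{*}} + K_{X^{*}X}(K_{XX}+\sigma^2 I_n)^{-1}(y-m_X)$ and covariance $K_{X^{*}X^{*}} - K_{X^{*}X}(K_{XX}+\sigma^2 I_n)^{-1}K_{XX^{*}}$, which are precisely the entries of $\tilde{m}$ and $\tilde{k}$ evaluated on $X^{*}\times X^{*}$.

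Finally, since the test set $X^{*}$ was arbitrary, every finite-dimensional conditional marginal of $u\mid y$ is Gaussian with mean and covariance built from the functions $\tilde{m}$ and $\tilde{k}$ defined in equations~\eqref{eqn:GPmean} and~\eqref{eqn:GPcov}. These finite-dimensional distributions are automatically consistent (they arise as marginals of a single well-defined Gaussian law), so they determine a Gaussian process with mean function $\tilde{m}$ and covariance function $\tilde{k}$, which is the claim. The only real obstacle is bookkeeping: one must be careful to condition on the noisy observations $y$ rather than on $u_X$ (this is what produces the $\sigma^2 I_n$ shift), and one must check that $\tilde{k}$ defined by the subtraction is still a positive-definite kernel—this follows because it is the covariance function of an actual conditional Gaussian law constructed above and therefore positive semidefinite on every finite set.
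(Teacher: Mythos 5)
This statement is imported verbatim from the literature (Theorem 3.1 of Kanagawa et al.) and the paper gives no proof of its own, so there is nothing to compare against; your argument is the standard and correct derivation — joint Gaussianity of $(y, u_{X^*})$ using independence of the noise, the block conditioning formula, and Kolmogorov consistency of the resulting finite-dimensional conditionals. The details you flag (conditioning on $y$ rather than $u_X$ to produce the $\sigma^2 I_n$ shift, invertibility from $\sigma^2>0$, and positive semidefiniteness of $\tilde{k}$ as a genuine conditional covariance) are exactly the right ones, and the proof is complete.
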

We refer to $\Tilde{m}$ as the \emph{posterior mean function} and $\Tilde{k}$ as the \emph{posterior covariance function}.

Kernel ridge regression (KRR), or regularized least squares~\cite{caponnetto2007optimal}, is closely related to GP regression.
Given data in eq.~(\ref{eqn:data}), the objective of KRR is to solve the following interpolation problem
\begin{equation}
    \label{eqn:KRRobj}
    u^* = \argmin_{u\in H_k} \frac{1}{n}\sum_{i=1}^n\left(u(x_i)-y_i\right)^2 + r\|u\|_{H_k}^2,
\end{equation}
where $r\geq 0$ is the regularization parameter.
The inclusion of the RKHS norm in the objective function serves as a regularizer which enforces the class of functions which fit the data, while simultaneously smoothing the fit.
It is known that $u$ becomes smoother as $\|u\|_{H_k}$ gets smaller, see~\cite[Section 6.2]{williams2006gaussian}.
Specifying the kernel which defines the KRR objective eq.~(\ref{eqn:KRRobj}) effectively enforces a prior on the fit.
As with the GP regression posterior mean function, the solution to eq.~(\ref{eqn:KRRobj}) is also unique:
\begin{theorem}[Theorem 3.4~\cite{kanagawa2018gaussian}]
    Let $r>0$.
    Then the unique solution to eq.~(\ref{eqn:KRRobj}) is 
    $$
    u^*(x) = k_{xX}(K_{XX} + nr I_n)^{-1}y = \sum_{i=1}^na_ik(x,x_i),\quad x\in\Omega,
    $$
    where $k_{xX} = k_{Xx}^{T}\coloneqq (k(x,x_1),\dots,k(x,x_n))^T$ and $(a_1,\dots,a_n)^T = (K_{XX} + nr I_n)^{-1}d$.
    Further, if the matrix $K_{XX}$ is invertible, then the coefficients $a_i$ are unique.
\end{theorem}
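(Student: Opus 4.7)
The plan is to establish this via the \emph{representer theorem} reduction followed by solving a finite-dimensional positive-definite quadratic program.

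First, I would decompose $H_k$ relative to the finite-dimensional subspace $V=\mathrm{span}\{k(\cdot,x_1),\ldots,k(\cdot,x_n)\}$. Because $V$ is finite-dimensional it is closed, so any $u\in H_k$ admits an orthogonal decomposition $u=v+w$ with $v\in V$ and $w\in V^\perp$. The reproducing property gives $u(x_i)=\langle u,k(\cdot,x_i)\rangle_{H_k}=\langle v,k(\cdot,x_i)\rangle_{H_k}=v(x_i)$ since $w$ is orthogonal to every $k(\cdot,x_i)$. Thus the data-fidelity term depends only on $v$, while Pythagoras gives $\|u\|_{H_k}^2=\|v\|_{H_k}^2+\|w\|_{H_k}^2$. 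Because $\eta>0$, any minimizer must satisfy $w=0$, so every minimizer lies in $V$. This is the standard representer-theorem step.

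Second, writing $u=\sum_{i=1}^n \alpha_i k(\cdot,x_i)$ and applying the reproducing property twice, evaluations become $u(x_j)=(K_{XX}\alpha)_j$ and the RKHS norm squared becomes $\alpha^T K_{XX}\alpha$. Hence eq.~(\ref{eqn:KRRobj}) reduces to minimizing
$$J(\alpha)=\tfrac{1}{n}\|K_{XX}\alpha-y\|_{\mathbb{R}^n}^2+\eta\,\alpha^T K_{XX}\alpha$$
over $\alpha\in\mathbb{R}^n$. Taking the gradient,
$$\nabla J(\alpha)=\tfrac{2}{n}K_{XX}(K_{XX}\alpha-y)+2\eta K_{XX}\alpha=\tfrac{2}{n}K_{XX}\bigl[(K_{XX}+n\eta I_n)\alpha-y\bigr].$$
Since $\eta>0$ and $K_{XX}$ is positive semi-definite, $K_{XX}+n\eta I_n$ is symmetric positive definite and therefore invertible. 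Choosing $\alpha^*=(K_{XX}+n\eta I_n)^{-1}y$ annihilates the bracket, so $\nabla J(\alpha^*)=0$ and the associated function
$$u^*(x)=\sum_{i=1}^n\alpha^*_i k(x,x_i)=k_{xX}(K_{XX}+n\eta I_n)^{-1}y$$
is a critical point.

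Third, I would argue uniqueness. The functional in eq.~(\ref{eqn:KRRobj}) is strictly convex on $H_k$: the squared-loss term is convex in $u$, and $\eta\|u\|_{H_k}^2$ is strictly convex because $\eta>0$ and the RKHS inner product is positive definite. Hence $u^*$ as an element of $H_k$ is the unique global minimizer. When $K_{XX}$ is invertible, the linear map $\alpha\mapsto\sum_i\alpha_i k(\cdot,x_i)$ from $\mathbb{R}^n$ into $H_k$ is injective (its Gram matrix is $K_{XX}$), so the coefficients representing $u^*$ are themselves unique.

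The main conceptual obstacle is the representer-theorem decomposition: one must justify that $V$ is closed (immediate from finite-dimensionality), that the reproducing property kills the effect of the orthogonal component on the evaluations $u(x_i)$, and that the RKHS norm strictly penalizes any nontrivial $w$. Once this orthogonal-projection argument is in hand, the remainder is a routine positive-definite quadratic minimization, and uniqueness follows from strict convexity induced by the $\eta$-weighted RKHS norm.
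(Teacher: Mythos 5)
Your proof is correct and is the standard representer-theorem argument; the paper itself states this result without proof, importing it directly from Kanagawa et al., where the argument is essentially the one you give (orthogonal decomposition onto $\mathrm{span}\{k(\cdot,x_i)\}$, reduction to a positive-definite quadratic in $\alpha$, and uniqueness of $u^*$ from strict convexity of the $\eta$-weighted RKHS penalty, with uniqueness of the coefficients only under invertibility of $K_{XX}$). No gaps.
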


In~\cite{kanagawa2018gaussian}, the relationship between GP regression and KRR is discussed in great detail.
In a certain sense, GP regression can be viewed as the Bayesian interpretation of KRR.
Notably, under mild conditions, the KRR solution and GP posterior mean function are equivalent.
\begin{proposition}
    \label{prop:gpkrrequiv}
    Let $k:\Omega \times \Omega \to \mathbb{R}$ be a positive definite kernel, and eq.~(\ref{eqn:data}) be training data.
    If $\sigma^2 = n\lambda$, then $\Tilde{m} = u^*$, where $\Tilde{m}$ is the GP posterior mean function and $u^*$ is the unique KRR solution, given by eq.~(\ref{eqn:GPmean}) with $m=0$ and eq.~(\ref{eqn:KRRobj}), respectively.
\end{proposition}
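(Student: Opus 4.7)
The plan is to prove the proposition by direct comparison of the two closed-form expressions already recorded in the preliminaries, so the argument is essentially algebraic. First I will instantiate the GP posterior mean formula from Theorem~\ref{thm:GPpost} under the standard choice $m \equiv 0$ (which is the implicit setting of the KRR objective~\eqref{eqn:KRRobj}, since there is no additive mean offset there). This yields
$$
\tilde{m}(x) = k_{xX}(K_{XX} + \sigma^2 I_n)^{-1} y.
$$
Then I will write down the KRR minimizer from the theorem just cited,
$$
u^*(x) = k_{xX}(K_{XX} + n\eta I_n)^{-1} y.
$$

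The next step is simply to observe that under the stated calibration $\sigma^2 = n\eta$ (reading the proposition's $\lambda$ as the KRR regularization parameter $\eta$), the two matrices being inverted are identical, whence $\tilde{m}(x) = u^*(x)$ for all $x \in \Omega$. Pointwise equality of the two real-valued functions on $\Omega$ gives the claimed equality of functions. No extra regularity or invertibility hypothesis on $K_{XX}$ is needed because the shifted matrix $K_{XX} + n\eta I_n$ is strictly positive definite for $\eta > 0$, so the inverse exists unambiguously.

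I do not foresee a real technical obstacle here: the two preceding theorems already do all the work, one via the representer theorem in the RKHS $H_k$ and the other via Gaussian conditioning in $\mathbb{R}^n$. The only thing worth flagging in the write-up is the conceptual point that this algebraic coincidence is exactly why the Bayesian and variational viewpoints match up, and that the factor $n$ in $\sigma^2 = n\eta$ comes from the $\frac{1}{n}$ normalization in front of the data misfit in~\eqref{eqn:KRRobj}; had we used the unnormalized data loss~\eqref{eqn:LSE}, the matching condition would simply be $\sigma^2 = \eta$. This reconciliation of normalizations will be the key bookkeeping detail, and I will mention it explicitly because it is precisely the identification that will be reused later when interpreting the physics-informed loss~\eqref{eqn:pinnsloss} as a MAP estimator under a Brownian bridge prior.
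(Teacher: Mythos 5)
Your argument is correct and is exactly the standard one: the paper states this proposition without proof (deferring to the cited reference), and the reference's proof is precisely the algebraic identification of the two closed-form expressions $k_{xX}(K_{XX}+\sigma^2 I_n)^{-1}y$ and $k_{xX}(K_{XX}+n\eta I_n)^{-1}y$ under $\sigma^2=n\eta$, with $m\equiv 0$. Your reading of the statement's $\lambda$ as the regularization parameter $\eta$, and your remark about where the factor of $n$ comes from, are both the right bookkeeping.
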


The equivalence between the GP posterior mean and KRR solution helps to establish much of the behavior involved with GP regression in terms of the RKHS of the prior covariance kernel.
For example, it is immediate from Proposition~\ref{prop:gpkrrequiv} that the GP posterior mean function lives in the RKHS of the prior, meaning that the behavior of the posterior mean is inherited from the specified prior covariance.
The last important property we need is the fact that GP sample paths a.s. do not belong to the prior RKHS, which is a consequence of Driscol's zero-one law~\cite{driscoll1973reproducing}.
\begin{proposition}[Corollary 4.10~\cite{kanagawa2018gaussian}]
    \label{thm:GPsamples}
    Let $k: \Omega \times \Omega \to \mathbb{R}$ be a positive definite kernel and $H_k$ be the corresponding RKHS.
    Let $u \sim \mathcal{GP}(m,k)$ where $m\in H_k$.
    If $H_k$ is infinite-dimensional, then $u\in H_k$ with probability $0$.
\end{proposition}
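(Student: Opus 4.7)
The plan is to combine Mercer's theorem, which the excerpt has already set up, with a Karhunen--Lo\`{e}ve-type expansion of $u$ to recast $\|u\|_{H_k}^2$ as a sum of i.i.d.\ $\chi_1^2$ random variables, and then invoke a.s.\ divergence of that sum. First I reduce to the centered case: since $H_k$ is a vector space and $m\in H_k$, the event $\{u\in H_k\}$ coincides with $\{u-m\in H_k\}$, and $u-m\sim\mathcal{GP}(0,k)$, so henceforth I may assume $m=0$.

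Next I invoke Mercer's theorem to write
\begin{equation*}
k(x,x') = \sum_{n=1}^{\infty}\lambda_n\,\psi_n(x)\psi_n(x'),
\end{equation*}
where $\{\psi_n\}$ is an $L^2(\Omega)$-orthonormal eigenbasis of $L_k$ with eigenvalues $\lambda_n>0$; because $H_k$ is infinite-dimensional, infinitely many $\lambda_n$ are strictly positive. Using the Mercer-based characterization of the RKHS stated just before the proposition, any function $v\in L^2(\Omega)$ satisfies $\|v\|_{H_k}^2 = \sum_n \lambda_n^{-1}\langle v,\psi_n\rangle^2$. Set $\xi_n := \langle u,\psi_n\rangle$. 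Since $u$ is a centered GP, the sequence $(\xi_n)$ is jointly Gaussian; Fubini together with Mercer gives
\begin{equation*}
\mathbb{E}[\xi_n\xi_m] = \int_\Omega\int_\Omega k(x,x')\psi_n(x)\psi_m(x')\,dx\,dx' = \lambda_n\delta_{nm},
\end{equation*}
so $\xi_n = \sqrt{\lambda_n}\,Z_n$ with $Z_n$ i.i.d.\ standard normal.

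Substituting into the RKHS-norm formula yields the crucial identity
\begin{equation*}
\|u\|_{H_k}^2 \;=\; \sum_{n=1}^{\infty}\lambda_n^{-1}\xi_n^2 \;=\; \sum_{n=1}^{\infty}Z_n^2 \;=\; +\infty \quad \text{a.s.},
\end{equation*}
where the divergence follows from the strong law of large numbers applied to the i.i.d.\ $\chi_1^2$ variables $Z_n^2$ (each with mean one). Hence $u\notin H_k$ with probability one.

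The main obstacle is technical rather than conceptual: one must justify that $\xi_n = \langle u,\psi_n\rangle_{L^2}$ is a well-defined random variable even though sample paths of $u$ need not lie in $L^2(\Omega)$, and that $\|u\|_{H_k}^2$, as a function of $u$, is measurable. The standard remedy is to define $\xi_n$ intrinsically via the isonormal Gaussian process associated with $k$, which gives joint Gaussian $\xi_n$ with the covariance computed above, and to write $\|u\|_{H_k}^2 = \sup_{N}\sum_{n\le N}\lambda_n^{-1}\xi_n^2$ as a monotone supremum, so that measurability is automatic. Once these technicalities are settled, the argument above runs without modification; a quicker alternative is to appeal to Driscoll's zero-one law, which the excerpt already cites, and observe that $\sum_n Z_n^2$ cannot be a.s.\ finite when infinitely many $\lambda_n$ are positive.
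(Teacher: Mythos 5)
The paper does not prove this proposition itself; it imports it as Corollary~4.10 of Kanagawa et al., which in turn is obtained from Driscoll's zero--one law (equivalently, from the power-of-RKHS machinery the paper recalls later: sample paths lie in $H_k^p$ iff $\sum_n \lambda_n^{1-p}<\infty$, and taking $p=1$ gives the trace of the identity on an infinite-dimensional space, which diverges). Your argument is a genuinely different, more hands-on route: a Karhunen--Lo\`eve expansion reducing $\|u\|_{H_k}^2$ to $\sum_n Z_n^2$ with $Z_n$ i.i.d.\ standard normal, which diverges a.s.\ by the strong law. The computation $\mathbb{E}[\xi_n\xi_m]=\lambda_n\delta_{nm}$, the reduction to the centered case via $m\in H_k$, and the observation that infinite-dimensionality of $H_k$ forces infinitely many positive eigenvalues are all correct, and you rightly flag (and adequately dispatch) the measurability of $\xi_n$ and of $\|u\|_{H_k}^2$. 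What each approach buys: yours is elementary and self-contained, but it leans on Mercer's theorem, so it silently strengthens the hypotheses to a \emph{continuous} kernel on a compact domain with a suitable reference measure --- the proposition as stated assumes only positive definiteness, and the zero--one-law route covers that general case. Since every kernel used in this paper (the Brownian bridge kernel and its powers) is continuous on $[0,1]^d$, the restriction is harmless here, but it is worth stating explicitly that your proof establishes the Mercer-kernel special case rather than the cited result in full generality.
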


\section{The Brownian bridge as a physics-informed prior}
\label{sec:physpriors}

We establish an explicit connection between the Brownian bridge GP estimator and the physics-regularized inverse problem.
In particular, we show the posterior mean function when starting with a shifted Brownian bridge GP is exactly the function which minimizes the variational problem eq.~(\ref{eqn:pinnsloss}), under certain criteria.
We demonstrate this for the case where the posterior remains Gaussian, and also when the posterior is non-Gaussian, as a MAP estimator in 1D.
We begin with the simpler case where we have point measurements with additive Gaussian noise according to eq.~(\ref{eqn:data}).

Before proceeding, we discuss the regularity conditions so that the definitions and operations that follow are well-defined.
Because we are modeling the solution to eq.~(\ref{eqn:poisson}) as a GP, we would ideally like to be able to evaluate it pointwise.
So, at the very least, we ask that the solution also belongs to the space of bounded, continuous functions over the domain, which we will denote by $\mathcal{C}^0_B(\Omega).$
The required conditions are given by the next result.

\begin{proposition}
    \label{prop:regularity}
    Assume that eq.~(\ref{eqn:poisson}) possesses a weak solution $u^0 \in H_{0}^1(\Omega)$.
    Further assume that if $d \leq 3$, then $q \in L^2(\Omega)$, and for $d > 3$, $q \in H^{\tau}(\Omega)$ with $\tau > d/2 - 2$.
    Then $u^0\in H^2(\Omega)$ for $d \leq 3$ and $u^0 \in H^{\tau+2}(\Omega)$ for $d > 3$.
    In both cases, there exists a version of $u^0$ that is equivalent to $u^0$ a.e., denoted by $u^{\dagger}$, with $u^{\dagger} \in C^0_{B}(\Omega)$.
\end{proposition}
\begin{proof}
    We show that $u^0$ lives in the stated Sobolev spaces, which are then embedded in $C^0_B(\Omega)$.
    The case $d\leq 3$ follows from~\cite[Sec. 6.3, Theorem 4]{evans2022partial}, and $d>3$ results from~\cite[Sec. 6.3, Theorem 5]{evans2022partial}.
    The continuity follows from a case of the Sobolev embedding theorem~\cite[Theorem 4.12]{adams2003sobolev}, which states that for integers $s\geq 0$, $\tau \geq 1$, $H^{s+\tau}(\Omega)$ is embedded in $C^s_B(\Omega)$ when $2\tau > d$.
    The result follows in each case by setting $s = 0$.
\end{proof}
Moving forward, we will assume that Proposition~\ref{prop:regularity} holds, and by the equivalence relation given by the embedding, we will take the weak solution to be $u^{\dagger}$ so that we may evaluate it pointwise.
So that there is no confusion, we will still refer to this function as $u^0$.

\subsection{Physics-informed prior as a Gaussian process}
The first step is to identify the covariance kernel hidden in the energy functional
\begin{equation}
    \label{eqn:energy}
    E(u) = \int_{\Omega}\frac{1}{2}\|\nabla u\|^2 - qu\:d\Omega.
\end{equation}
Let $L$ denote the minus Laplacian operator on $H^1_0(\Omega)\cap H^2(\Omega)$, i.e., $(Lu)(x) = - \nabla^2u|_x$.
We will see later that $L$ is the precision operator associated with the GP we are after.
Denote the inverse of $L$ by $C$.
This is the operator with kernel given by the Green's function of the Laplacian, i.e., $C$ is the operator defined by
$$
(Cu)(x) \coloneqq (L^{-1}u)(x) = \int_{\Omega}k(x,x')u(x')dx', \quad u \in L^2(\Omega).
$$
In our example where $\Omega = [0,1]^d$, the covariance kernel $k$ is best expressed with the Mercer representation.

\begin{notation}
    Before writing the covariance kernel we introduce a multi-index notation.
    Let $\alpha = (\alpha_1,\dots,\alpha_d)\in \N^d$ and $|\alpha| = \sum_{i=1}^d \alpha_i$.
    We also write for an $x = (x_1,\dots,x_d) \in \Omega$, $\sin(\alpha \pi x) = \prod_{i=1}^d\sin(\alpha_i \pi x_i)$.
\end{notation}

One can check the orthonormal eigenfunctions associated with $C$ are~\cite[Sec. 8.2.2-16]{polyanin2001handbook}
\begin{equation}
    \label{eqn:efuncs}
    \psi_{\alpha}(x) = 2^{d/2}\sin(\alpha\pi x),
\end{equation}
with corresponding eigenvalues
\begin{equation}
    \label{eqn:evals}
    \lambda_{\alpha} = \frac{1}{\pi^2|\alpha|^2}.
\end{equation}
Hence the covariance kernel has a nice tensor product structure
\begin{equation}
    \label{eqn:browniankernel}
    k(x,x') = 2^d\sum_{|\alpha|\in\N}\frac{\sin(\alpha \pi x)\sin(\alpha\pi x')}{\pi^2|\alpha|^2}.
\end{equation}
This kernel is associated with the Brownian bridge in $d$-dimensions.
For example, on $[0,1]$, the kernel is simply the Green's function of eq.~(\ref{eqn:poisson}) $k(x,x') = \min\{x,x'\} - xx'$, which is exactly the covariance kernel of the Brownian bridge process.

We now show that the physics-regularized inverse problem emits a KRR objective with this kernel.
\begin{proposition}
    \label{prop:pinnsloss}
    The physics-regularized inverse problem given by eq.~(\ref{eqn:pinnsloss}) is equivalent to the shifted kernel ridge regression objective 
    \begin{equation}
        \label{eqn:propploss}
        \mathcal{L}(u) = \frac{1}{n}\sum_{i=1}^n (u(x_i) - y_i)^2 + \frac{r}{2}\|u-Cq\|_{H_k}^2,
    \end{equation}
    with covariance kernel given by eq.~(\ref{eqn:browniankernel}).
\end{proposition}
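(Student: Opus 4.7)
The plan is to show that, up to an additive constant independent of $u$ and an immaterial overall rescaling of the loss, the energy functional $E(u)$ differs from $\tfrac{1}{2}\|u - Cq\|_{H_k}^2$ by a constant. Once that identity is established, minimizers of $\mathcal{L}(u) = \sum_i (u(x_i)-y_i)^2 + \eta E(u)$ agree with minimizers of the shifted KRR objective in eq.~\eqref{eqn:propploss} (after absorbing the $1/n$ into $\eta$).

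My first step is to rewrite the quadratic part of $E$ via integration by parts. For $u \in H_p$, vanishing Dirichlet data give $\int_\Omega \|\nabla u\|^2 \, d\Omega = \langle u, Lu\rangle_{L^2}$. Next, I would use the Mercer representation developed just before the statement: since $(\psi_{n_1\dots n_d}, \lambda_{n_1\dots n_d})$ diagonalize $C = L^{-1}$, the eigenvalues of $L$ are $\lambda_n^{-1}$, so expanding $u = \sum_n \langle u,\psi_n\rangle \psi_n$ yields
\begin{equation*}
\langle u, Lu\rangle_{L^2} = \sum_n \lambda_n^{-1} \langle u,\psi_n\rangle^2 = \|u\|_{H_k}^2.
\end{equation*}
This identifies the Dirichlet energy as precisely the squared RKHS norm associated with the Green's function kernel.

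For the linear term, I would rewrite $\int_\Omega q\, u \, d\Omega$ as an RKHS inner product involving $Cq$. Using the Mercer expansions $\langle Cq, \psi_n\rangle_{L^2} = \lambda_n \langle q,\psi_n\rangle$, the $\lambda_n^{-1}$ weights in the RKHS inner product cancel, giving
\begin{equation*}
\langle Cq, u\rangle_{H_k} = \sum_n \lambda_n^{-1}\lambda_n \langle q,\psi_n\rangle\langle u,\psi_n\rangle = \langle q,u\rangle_{L^2}.
\end{equation*}
Combining the two pieces yields $E(u) = \tfrac{1}{2}\|u\|_{H_k}^2 - \langle Cq,u\rangle_{H_k}$, and completing the square gives
\begin{equation*}
E(u) = \tfrac{1}{2}\|u - Cq\|_{H_k}^2 - \tfrac{1}{2}\|Cq\|_{H_k}^2.
\end{equation*}
Since $\tfrac{1}{2}\|Cq\|_{H_k}^2 = \tfrac{1}{2}\langle Cq, q\rangle_{L^2}$ depends only on $q$, it is a constant with respect to $u$ and can be dropped for the purposes of optimization. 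Finally, dividing the entire loss by $n$ to match the $1/n$ normalization in eq.~\eqref{eqn:propploss} (which changes neither the minimizer nor the problem) and relabeling $\eta/n \mapsto \eta$ gives exactly the claimed expression.

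The main obstacles are technical rather than conceptual: justifying termwise manipulation of the Mercer series for $u \in H_p$ (which requires $H_p \subset H_k$ and convergence of $\sum_n \lambda_n^{-1}\langle u,\psi_n\rangle^2$, following from $u \in H^2 \cap H^1_0$ and the $\lambda_n \sim 1/(n_1^2+\cdots+n_d^2)$ decay), and justifying the integration by parts identity $\int \|\nabla u\|^2 = \langle u, Lu\rangle$ for $u \in H_p$. Both are standard given the regularity assumptions stated after eq.~\eqref{eqn:poisson}, but they should be explicitly invoked so the identity $\|u\|_{H_k}^2 = \int \|\nabla u\|^2$ is rigorous on the function class over which the loss is minimized.
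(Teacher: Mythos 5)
Your proposal is correct and follows essentially the same route as the paper's proof: integration by parts to turn the Dirichlet energy into the quadratic form $\tfrac{1}{2}\langle u, Lu\rangle - \langle q,u\rangle$, the Mercer representation to identify this quadratic form with the squared $H_k$-norm, and completing the square around $Cq$. If anything, you are slightly more careful than the paper in explicitly tracking the $u$-independent constant $-\tfrac{1}{2}\|Cq\|_{H_k}^2$ and the $1/n$ normalization, both of which the paper's chain of equalities silently absorbs.
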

\begin{proof}
    In both eq.~(\ref{eqn:pinnsloss}) and eq.~(\ref{eqn:propploss}), the data contribution term is exactly the same, so we only need to verify the energy functional is the correct RKHS-norm.
    An equivalent expression for the energy as given by eq.~(\ref{eqn:pinnsloss}) is the quadratic form $E(u) = \frac{1}{2}\langle u - Cq, L(u-Cq)\rangle$, which can be seen by completing the square.
    We have by integration by parts
    \begin{align*}
        \int_{\Omega}\frac{1}{2}\|\nabla u\|^2 - qu\: d\Omega &= \int_{\partial \Omega}\frac{1}{2}u\nabla u\cdot \mathbf{n}\: d\Omega - \int_{\Omega}\frac{1}{2}u\nabla^2u\: d\Omega \color{red}-\color{black} \int_{\Omega} qu\: d\Omega \\ 
        &= \int_{\Omega}\frac{1}{2}uLu - qu\: d\Omega \\
        &= \int_{\Omega}\frac{1}{2}(u-L^{-1}q)L(u-L^{-1}q)\: d\Omega + \mathrm{const.}\\
        &= \frac{1}{2}\langle u - Cq, L(u-Cq)\rangle + \mathrm{const.},
    \end{align*}
    where the surface integral vanishes due to the imposed boundary conditions.
    To connect the quadratic form to the RKHS norm, note by Mercer representation
    \begin{align*}
        (Lu)(x) &= \int_{\Omega}\sum_{|\alpha|\in\N}\lambda_{\alpha}^{-1}\psi_{\alpha}(x)\psi_{\alpha}(x')u(x')dx' \\
        &= \sum_{|\alpha|\in\N} \lambda_{\alpha}^{-1}\psi_{\alpha}(x)\langle u, \psi_{\alpha}\rangle,
    \end{align*}
    where interchanges between summation and integration are permitted by the absolute continuity of Mercer's theorem.
    Plugging this into the quadratic form, we get
    \begin{align*}
        E(u) &= \frac{1}{2}\left\langle u - Cq, \sum_{|\alpha|\in\N} \lambda_{\alpha}^{-1}\psi_{\alpha}\langle u - Cq, \psi_{\alpha}\rangle\right\rangle + \mathrm{const.} \\ 
        &= \frac{1}{2}\sum_{|\alpha|\in\N}\lambda_{\alpha}^{-1} \left\langle u - Cq, \psi_{\alpha}\langle u - Cq, \psi_{\alpha}\rangle\right\rangle + \mathrm{const.} \\
        &= \frac{1}{2}\sum_{|\alpha|\in\N}\lambda_{\alpha}^{-1} \langle u-Cq,\psi_{\alpha}\rangle^2 + \mathrm{const.} \\
        &= \frac{1}{2}\|u-Cq\|_{H_k}^2 + \mathrm{const.}
    \end{align*}
    where the last line holds by Mercer representation of the RKHS-norm.
    The additive constant may be dropped from the optimization.
\end{proof}

So, we have established a connection between KRR with the Brownian bridge kernel and the loss function coming from the variational form of the Poisson equation.
Observe that the objective function is shifted to minimize the distance in RKHS between the estimator and $Cq$.
As $C$ is defined by the Green's function, $Cq$ is the unique solution to eq.~(\ref{eqn:poisson}), and we verify that the model is indeed trying to find the closest possible match to the solution of eq.~(\ref{eqn:poisson}), while also fitting the data.
In fact, unlike the integrated square residual which seeks to minimize the $L^2(\Omega)$-norm, this objective function is also trying to match the smoothness.
This is evidenced later when we identify $H_k = H^1_0(\Omega)$.

From Proposition~\ref{prop:pinnsloss}, it is now fairly trivial to connect the physics-regularized inverse problem to a GP regression scheme.
Recall now Proposition~\ref{prop:gpkrrequiv}, which shows an equivalence between the GP posterior mean function and the KRR estimator.
That is, $\Tilde{m}$ is the function which solves the problem
\begin{equation*}
    \Tilde{m} = \argmin_{u\in H_k} \frac{1}{n}\sum_{i=1}^n\left(u(x_i)-y_i\right)^2 + \frac{\sigma^2}{n}\|u - u^0\|_{H_k}^2.
\end{equation*}
Inspired by this, we define the GP prior, which we term as the physics-informed prior for the Poisson equation, $u \sim \mathcal{GP}(u^0, \beta^{-1}k)$, $u^0 = Cq$, and $k$ is the Brownian bridge kernel as given by eq.~(\ref{eqn:browniankernel}).
In doing so we obtain a natural Bayesian interpretation of the familiar variational formulation of the Poisson equation.
We have included a hyperparameter $\beta \in (0,\infty)$ in order to control the variance of this prior.
Later we will prove that $\beta$ plays a key role in detecting model-form error.
Notice that the prior is centered at the unique solution to eq.~(\ref{eqn:poisson}).

\begin{figure}[htbp]
    \centering
    \includegraphics[width=.35\textwidth]{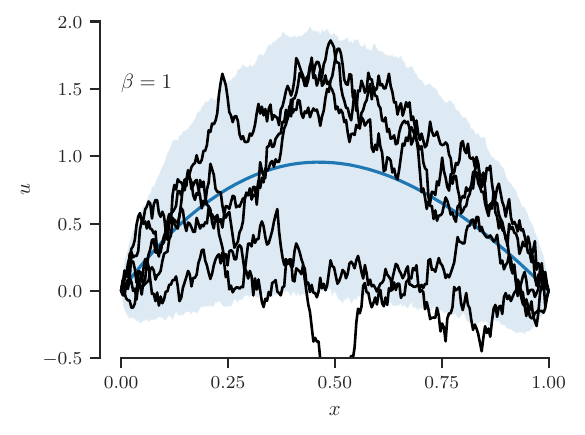}
    \includegraphics[width=.35\textwidth]{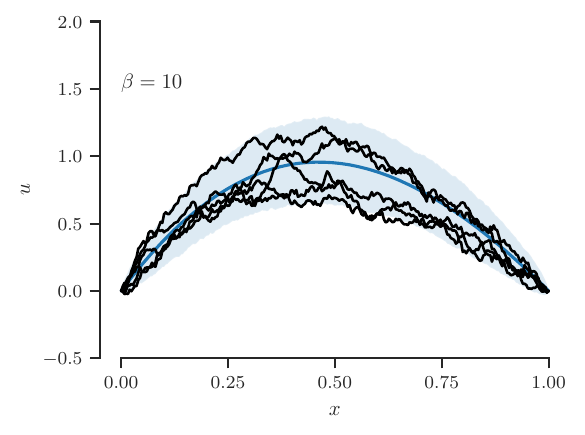}
    \hfill
    \includegraphics[width=.35\textwidth]{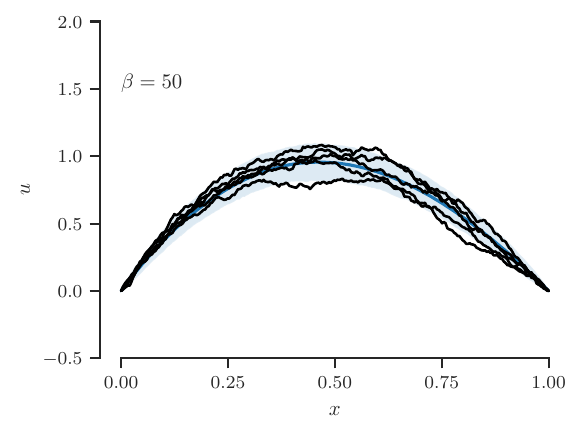}
    \includegraphics[width=.35\textwidth]{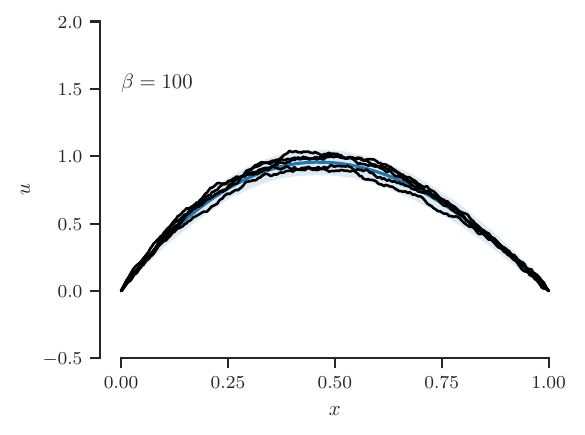}
    \caption{Physics-informed prior for the Poisson equation with source term $q(x) = 10\exp\{-|x-1/4|^2\}$ for varying values of $\beta$.
    The solution to this equation is in blue, and sample paths are shown in black.}
    \label{fig:exampleprior}
\end{figure}

The prior allows the sample paths to vary around $u^0$, which is desirable in the case of an imperfect model.
For the Brownian bridge, the variance reaches its maximum in the center of the domain, with no variance on $\partial\Omega$.
The additional hyperparameter $\beta$ controls the magnitude of the variance, which can be seen in the limiting cases.
As $\beta \to 0$, $\mathbb{V}[u]\to \infty$.
This essentially corresponds to placing a flat, uninformative prior on $L^2(\Omega)$, and the physics plays no role.
If $\beta \to \infty$, the prior collapses to a Dirac centered at $u^0$.
This corresponds to the ultimate belief that the underlying field truly is governed by the Poisson equation.
The only field we consider is the one a priori assumed to be correct.
It is for this reason that we view the prior as a \emph{soft-constraint} for the physics, with $\beta$ encoding the degree of model-trust.
An example of this behavior is shown in Fig.~\ref{fig:exampleprior}.

Setting $r = \sigma^2\beta/n$ gives an equivalence between the posterior mean function, when starting with the physics-informed prior, and the minimizer of the physics-regularized inverse problem.
We summarize this result in the following theorem.
\begin{theorem}
    \label{thm:iftpinnequiv}
    Consider training data of the form $y_i = u(x_i) + \gamma_i$, $i=1,\dots,n$, where $\gamma_i \overset{\mathrm{i.i.d.}}{\sim} \mathcal{N}(0,\sigma^2)$ and let $u:\Omega \to \mathbb{R}$ be the target function.
    Let $E$ be the energy functional for the Poisson equation, i.e. $E(u) = \int \frac{1}{2}\|\nabla u\|^2 + qu\:d\Omega$.
    Letting $r = \sigma^2\beta/n$, we have $\Tilde{m} = \hat{u}$, where
    \begin{enumerate}
        \item $\Tilde{m}$ is the GP regression posterior mean function with prior $u \sim \mathcal{GP}(u^0, \beta^{-1}k)$, where $u^0$ is the unique solution to eq.~(\ref{eqn:poisson}) and $k$ given by eq.~(\ref{eqn:browniankernel}).
        \item $\hat{u}$ is the solution to the physics-regularized inverse problem
        $$
        \hat{u} = \argmin_{u \in H^1_0(\Omega)}\: \mathcal{L}_{\mathrm{data}}(u) + r E(u).
        $$
    \end{enumerate}
\end{theorem}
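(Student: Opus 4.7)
The plan is to chain Proposition~\ref{prop:pinnsloss}, which rewrites the physics-informed loss as a shifted kernel ridge regression (KRR) objective in the Brownian bridge RKHS $H_k$, with Proposition~\ref{prop:gpkrrequiv}, which equates the KRR minimizer with the GP posterior mean. The only additional work is to accommodate the non-zero prior mean $u^0 = Cq$ and the variance rescaling by $\beta^{-1}$.

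First, apply Proposition~\ref{prop:pinnsloss} to realize $\hat u$ as the minimizer over $H_k$ of a shifted KRR objective whose shift is exactly $u^0 = Cq$ and whose quadratic penalty is proportional to $\|u - u^0\|_{H_k}^2$. The choice of prior mean $u^0 = Cq$ in the statement is not incidental: it is precisely the shift that emerges from completing the square in $E(u)$, and this is what makes the downstream identification with a GP posterior clean.

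Next, centre the problem by setting $v := u - u^0$. Since $u^0 \in H_k$, this is a bijection of $H_k$ onto itself; the observations become $\tilde y_i := y_i - u^0(x_i) = v(x_i) + \gamma_i$; and $\hat v := \hat u - u^0$ is the minimizer of the standard, unshifted KRR objective in $v$ with data $\tilde y$. The RKHS of the rescaled kernel $\beta^{-1}k$ coincides with $H_k$ as a set, with norm $\|v\|_{H_{\beta^{-1}k}}^2 = \beta\,\|v\|_{H_k}^2$, so the centred problem is an honest KRR problem in $H_{\beta^{-1}k}$. Equating the coefficient in front of $\|v\|_{H_{\beta^{-1}k}}^2$ with the KRR regularization parameter $\eta_{\mathrm{KRR}}$ and enforcing the hypothesis $\sigma^2 = n\,\eta_{\mathrm{KRR}}$ of Proposition~\ref{prop:gpkrrequiv} selects the stated relation $\eta = \sigma^2\beta/n$.

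Finally, Proposition~\ref{prop:gpkrrequiv} applied to the centred problem identifies $\hat v$ with the posterior mean function of the zero-mean GP regression with prior $v \sim \mathcal{GP}(0, \beta^{-1}k)$ conditioned on the shifted data $\tilde y$. Undoing the shift, $\hat u = \hat v + u^0$ coincides, via the explicit formula~\eqref{eqn:GPmean}, with the posterior mean function $\tilde m$ of the uncentred prior $u \sim \mathcal{GP}(u^0, \beta^{-1}k)$ conditioned on $y$. The main obstacle is not conceptual but bookkeeping: the $1/n$ averaging convention in eq.~\eqref{eqn:KRRobj}, the factor of $1/2$ hidden in $E(u)$, and the $\beta$ introduced by rescaling the kernel must all reconcile into the single relation $\eta = \sigma^2\beta/n$, and one must verify that the shift-invariance of Gaussian conditioning really does reduce the uncentred problem to its centred version in the last step.
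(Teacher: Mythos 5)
Your proposal is correct and follows essentially the same route as the paper, which states Theorem~\ref{thm:iftpinnequiv} as an immediate consequence of chaining Proposition~\ref{prop:pinnsloss} with Proposition~\ref{prop:gpkrrequiv} and offers no separate proof beyond that. Your explicit centering argument ($v = u - u^0$) and the observation that $\|v\|_{H_{\beta^{-1}k}}^2 = \beta\|v\|_{H_k}^2$ simply make precise the bookkeeping the paper leaves implicit (including the factors of $\tfrac{1}{2}$ and $\tfrac{1}{n}$ that the paper itself does not fully reconcile).
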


The above result allows us to analyze the behavior of the physics-regularized inverse problem through the established theory of GP regression.
For example, we can study convergence conditions for the field reconstruction problem and the effect of the hyperparameters.
This is reserved for later sections.

\subsection{Physics-informed prior as a Gaussian measure}
We derive a similar result to Theorem~\ref{thm:iftpinnequiv} in the setting of infinite-dimensional Bayesian inverse problems~\cite{stuart2010inverse} in the 1D case.
This is useful, for instance, in the case where the measurement operator is nonlinear, and we cannot easily rely on the GP formulae.
We use this relationship to derive a parallel variational problem, modified to account for a general measurement operator, by identifying the functional with produces the MAP estimate of the inverse problem starting with the Brownian bridge as the prior measure.

The reason we must restrict ourselves to 1D is the fact that the covariance operator associated with the Brownian bridge kernel is trace class in $L^2(\Omega)$ only when $d = 1$.
That is, $\tr{C} = \sum_{n_1,\dots,n_d\in\N} \lambda_{n_1,\dots,n_d}<\infty$ for $d = 1$ and diverges for $d > 1$.
The GP prior is related to a Gaussian measure on $L^2(\Omega)$ in the following manner
\begin{theorem}[Theorem 2~\cite{rajput1972gaussian}]
    \label{thm:GMGP}
    Let $u \sim \mathcal{GP}(m,k)$ be measurable.
    Then, the sample paths $u \in L^2(\Omega)$ a.s. if and only if
    $$
    \int_{\Omega}m^2(x)dx < \infty, \quad \int_{\Omega}k(x,x)dx <\infty.
    $$
    If the above holds then $u$ induces the Gaussian measure $\mathcal{N}(m,C)$ on $L^2(\Omega)$, where the covariance operator is given by $\left(Cv\right)(x) \coloneqq \int_{\Omega} k(x,x')v(x')dx'$, for $v \in L^2(\Omega)$.
\end{theorem}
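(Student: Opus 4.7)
The plan is to split the theorem into two claims: (i) the equivalence of the two integrability conditions with the statement $u \in L^2(\Omega)$ a.s., and (ii) identification of the pushforward law on $L^2(\Omega)$ as $\mathcal{N}(m,C)$. Throughout I would exploit joint measurability of $(x,\omega) \mapsto u(x,\omega)$, which is the content of the ``measurable'' hypothesis and is exactly what licenses the Tonelli/Fubini exchanges used below. Reducing to the centered process $v = u - m$ is convenient but not essential.

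The sufficient direction of (i) is one Tonelli calculation on the non-negative integrand $u^2$:
\begin{equation*}
\mathbb{E}\,\|u\|_{L^2(\Omega)}^2 \;=\; \int_\Omega \mathbb{E}\,u(x)^2\, dx \;=\; \int_\Omega \bigl(k(x,x) + m(x)^2\bigr)\, dx,
\end{equation*}
finite by hypothesis, so $\|u\|_{L^2} < \infty$ almost surely. For the converse, assume $u \in L^2(\Omega)$ a.s. Given $\phi \in L^2(\Omega)$, Cauchy--Schwarz makes the scalar $\langle u, \phi\rangle = \int u\,\phi\, dx$ a.s.\ finite. I would then approximate $\langle u,\phi\rangle$ in probability by Riemann-type sums $\sum_i c_i^{(n)} u(x_i^{(n)})\,\phi(x_i^{(n)})$, each of which is a one-dimensional Gaussian by the GP property; closedness of the class of Gaussian laws under convergence in probability forces $\langle u,\phi\rangle$ to be Gaussian as well. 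Consequently $u$ is a Gaussian random element of the separable Hilbert space $L^2(\Omega)$, so Fernique's theorem gives $\mathbb{E}\,\|u\|_{L^2}^2 < \infty$, and reversing the Tonelli computation above recovers $\int_\Omega k(x,x)\, dx < \infty$. Since $m$ is deterministic, the residual $\int_\Omega m^2\, dx = \mathbb{E}\,\|u\|_{L^2}^2 - \int_\Omega k(x,x)\, dx$ is then also finite.

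For (ii), the Gaussianity of each $\langle u,\phi\rangle$ from the previous step, combined with the mean and variance identities $\mathbb{E}\,\langle u,\phi\rangle = \langle m,\phi\rangle$ and $\mathrm{Var}\,\langle u,\phi\rangle = \iint k(x,x')\phi(x)\phi(x')\, dx\, dx' = \langle C\phi,\phi\rangle$ (again Tonelli), identifies the characteristic functional of the pushforward as that of $\mathcal{N}(m,C)$; uniqueness of Borel measures on a separable Hilbert space given their characteristic functional then closes the argument. The main obstacle is the subtle step in the converse direction of (i): invoking Fernique wants $u$ to already be a Gaussian random element of $L^2(\Omega)$, which a priori looks to need the moment bound one is trying to prove. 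The resolution is that the Riemann-sum limit argument only requires a.s.\ finiteness of $\|u\|_{L^2}$ (to make $\langle u,\phi\rangle$ a well-defined random variable via Cauchy--Schwarz), and Gaussianity of the limit then comes for free from the GP structure, so the moment bound emerges as a consequence of Fernique rather than a hypothesis.
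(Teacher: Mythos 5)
The paper does not prove this statement at all --- it is imported verbatim as Theorem~2 of Rajput's 1972 paper on Gaussian measures on $L^p$ spaces --- so your proposal can only be judged against the standard argument in the literature. Your overall architecture (Tonelli for sufficiency; Fernique for necessity; characteristic functionals for the identification of the law) is the right one, and the sufficiency direction together with the mean/variance computations in part (ii) are correct as written. The genuine gap is the step you yourself flag as ``the main obstacle'': the claim that $\langle u,\phi\rangle$ is a limit in probability of Riemann-type sums $\sum_i c_i^{(n)} u(x_i^{(n)})\phi(x_i^{(n)})$. Your proposed resolution addresses the wrong difficulty. The circularity with Fernique is real but repairable; what is not repairable is the Riemann-sum convergence itself. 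The hypothesis is only that $u$ is \emph{jointly measurable} with sample paths in $L^2(\Omega)$ almost surely. For a fixed $\omega$ such a path is a general (equivalence class of an) $L^2$ function, and Riemann sums of a non-Riemann-integrable $L^2$ function simply do not converge to its integral --- pointwise in $\omega$ or otherwise. Nothing in the GP structure rescues this: the theorem is precisely about processes (like those built from Mercer series with slowly decaying eigenvalues) whose paths are rough, so one cannot assume any pointwise regularity that would make a quadrature rule work. As stated, the necessity direction and the Gaussianity claim in part (ii) both rest on an approximation that fails.

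There are two standard repairs. For the necessity direction, apply Fernique's theorem in its \emph{measurable-seminorm} form: if $(u,u')$ are independent copies, the rotation invariance $\left((u+u')/\sqrt{2},\,(u-u')/\sqrt{2}\right)\overset{d}{=}(u,u')$ holds at the level of finite-dimensional distributions and transfers to the jointly measurable functional $N(u)=\|u\|_{L^2(\Omega)}$ by a monotone-class argument; Fernique's original proof then yields $\mathbb{E}\,e^{\varepsilon N^2}<\infty$ from the a.s.\ finiteness of $N$ alone, with no need to first exhibit $u$ as a Gaussian random element of $L^2(\Omega)$. (One applies this to the symmetrized process $(u-u')/\sqrt{2}$ to get $\int_\Omega k(x,x)\,dx<\infty$ first, and only then deduces $m\in L^2(\Omega)$; your remark that centering is ``not essential'' hides this order of operations, since $u-m\in L^2$ a.s.\ is not equivalent to $u\in L^2$ a.s.\ until $m\in L^2$ is known.) For the Gaussianity of $\langle u,\phi\rangle$ in part (ii), once $\int_\Omega \mathbb{E}\,u(x)^2\,dx<\infty$ is in hand, the correct device is the Karhunen--Lo\`eve-type lemma that for a jointly measurable second-order process the integrals $\int_\Omega u(x)\phi(x)\,dx$ lie in the $L^2(\mathbb{P})$-closure of the linear span of $\{u(x):x\in\Omega\}$; that closure consists of Gaussian variables, and the characteristic-functional argument then closes as you describe.
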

In the above theorem the condition $\int_{\Omega}k(x,x)dx < \infty$ is exactly the condition that $\tr{C} < \infty$.
So, in one-dimension we may interpret the physics-informed prior as the Gaussian measure $\mu_0 \sim \mathcal{N}(u^0,\beta^{-1}C)$ on $L^2(\Omega)$, and follow the infinite-dimensional Bayesian framework.

For derivations, it is often convenient to shift the space so that the prior is centered. 
According to Theorem~\ref{thm:equiv}, this is permitted so long as $u^0 \in H_k$.
As $u^0$ is exactly the solution of eq.~(\ref{eqn:dirichlet}), we have $u^0 \in H^1_0(\Omega)$.
Later in Lemma~\ref{lem:bridgerkhs}, we show that $H_k = H^1_0(\Omega)$ so that the shift is justified.

Let $\mathcal{X}$ denote the function space for which the target function lives.
Suppose now we have data $y\in \mathbb{R}^n$ generated according to $y = R(u) + \gamma$, where $R:\mathcal{X} \to \mathbb{R}^n$ is the observation map, which is in general nonlinear, and $\gamma \sim \mathcal{N}(0,\Gamma)$ is an additive noise process.
Following the Bayesian approach~\cite{stuart2010inverse}, we look to derive the posterior in function space.
To identify the posterior measure $\mu^y$, we apply Bayes's rule, 
which takes the following form in infinite-dimensions.
\begin{theorem}[Bayes's theorem~\cite{kaipio2006statistical,stuart2010inverse}]
    \label{thm:bayes}
    Let $\mu_0\sim \mathcal{N}(u^0,C)$ be the prior, and suppose that $R: \mathcal{X} \to \mathbb{R}^n$ is continuous with $\mu_0(\mathcal{X})=1$.
    Then the posterior distribution over the conditional random variable $u|y$ obeys $\mu^y \ll \mu_0$.
    It is given by the Radon-Nikodym derivative
    $$
    \frac{d\mu^y}{d\mu_0}(u) \propto \exp\left\{-\Phi(u)\right\},
    $$
    where $\Phi(u) \coloneqq \frac{1}{2}\|\Gamma^{-1}(y-R(u))\|^2$ is called the potential.
\end{theorem}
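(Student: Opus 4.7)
The plan is to realize the posterior as a disintegration of an appropriate joint distribution on $\mathcal{X} \times \mathbb{R}^n$. First, I would introduce two measures on the product space: a reference measure $\nu_0(du,dy) := \mu_0(du) \otimes \rho_0(dy)$, where $\rho_0 := \mathcal{N}(0,\Gamma)$, and the true data-generating joint measure $\nu(du,dy)$ induced by the forward model $y = R(u) + \gamma$, under which the conditional law of $y$ given $u$ is $\mathcal{N}(R(u),\Gamma)$. For each fixed $u$, this conditional law is absolutely continuous with respect to $\rho_0$ on $\mathbb{R}^n$ with Radon-Nikodym derivative equal to a Gaussian shift factor; completing the square in the exponent shows this factor equals $\exp(-\Phi(u))$ times a multiplicative term depending only on $y$. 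Fubini then gives $\nu \ll \nu_0$ with a density of the same form on the product space.

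Next, I would disintegrate both measures along their marginals in $y$. Under $\nu_0$ the $y$-marginal is $\rho_0$ and the conditional on $y$ is simply $\mu_0$. By a standard regular-conditional-probability argument, valid once $\mathcal{X}$ is a Polish space and $\mu_0$ a Borel probability measure with $\mu_0(\mathcal{X})=1$, the $\nu$-marginal $\rho(dy)$ is absolutely continuous with respect to $\rho_0$, with density proportional to $Z(y)$, where
$$
Z(y) := \int_{\mathcal{X}} \exp(-\Phi(u))\, d\mu_0(u),
$$
while the conditional measure $\mu^y$ satisfies $\frac{d\mu^y}{d\mu_0}(u) = Z(y)^{-1}\exp(-\Phi(u))$. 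The $y$-dependent factor from the Gaussian shift cancels between the joint density and the $y$-marginal, leaving exactly $\frac{d\mu^y}{d\mu_0}(u) \propto \exp(-\Phi(u))$ as claimed.

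The main obstacle is verifying that this disintegration is well-posed: one needs $\Phi$ to be $\mu_0$-measurable and $0 < Z(y) < \infty$ for $\rho$-almost every $y$. Measurability follows from the assumed continuity of $R$, which makes $\Phi$ a continuous nonnegative functional on $\mathcal{X}$, together with $\mu_0(\mathcal{X})=1$. Finiteness is immediate because $\Phi \ge 0$ forces $Z(y) \le 1$. Positivity is the subtler point, but since the integrand $\exp(-\Phi(u))$ is strictly positive everywhere and $\Phi$ is continuous, one obtains $Z(y) > 0$ without further hypotheses beyond continuity of $R$. With these verifications in place, the displayed Radon-Nikodym identity is precisely the claim of the theorem.
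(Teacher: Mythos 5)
The paper states this result as an imported theorem, citing Kaipio--Somersalo and Stuart, and gives no proof of its own; your proposal correctly reconstructs the standard argument from the cited source (essentially Theorem 6.31 of Stuart's Acta Numerica survey): build the joint measure on $\mathcal{X}\times\mathbb{R}^n$, compare it to the product reference measure $\mu_0\otimes\mathcal{N}(0,\Gamma)$ via the Gaussian density ratio, and disintegrate, with the positivity and finiteness of $Z(y)$ handled exactly as you do. The only caveat is notational: the Gaussian density ratio produces the quadratic form $\frac{1}{2}\|\Gamma^{-1/2}(y-R(u))\|^2$ rather than $\frac{1}{2}\|\Gamma^{-1}(y-R(u))\|^2$ as written in the statement, so your ``completing the square'' step matches the standard normalization of the potential rather than the one printed in the paper.
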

Theorem~\ref{thm:bayes} admits a closed form expression in a special case.
Assuming that $R$ is linear, the posterior $\mu^d$ is also Gaussian $\mathcal{N}(\Tilde{m},\Tilde{C})$, with
\begin{align*}
    \Tilde{m} &= u^0 + CR^{\dagger}(\Gamma + RCR^{\dagger})^{-1}(y-Ru^0) \\
    \Tilde{C} &= C - CR^{\dagger}(\Gamma + RCR^{\dagger})^{-1}RC,
\end{align*}
where $R^{\dagger}$ denotes the adjoint of $R$.

\begin{remark}
    There is a minor technicality to discuss here about the existence and interpretation of $\mu^{y}$ as a posterior measure.
    The prior measure must be chosen such that $\mu_0(\mathcal{X}) = 1$.
    In much of the literature, the measurement operator involves solving a PDE, in which case care must be taken when choosing the prior.
    The advantage of our approach is that the physics is encoded into the prior, rather than the likelihood.
    For the Brownian bridge, $\mu_0(L^2(\Omega)) = 1$, so the only requirement is that $R$ acts on $L^2(\Omega)$, a fairly trivial assumption.
\end{remark}

To identify the MAP estimate of the posterior, we follow the work laid out in~\cite{dashti2013map}.
The MAP estimate is identified through the Onsager-Machlup functional~\cite{durr1978onsager,fujita1982onsager}.
This is the functional $I: H_k \to \mathbb{R}$ such that
$$
\lim_{\varepsilon\to0} \frac{\mu^y(B(u_2;\varepsilon))}{\mu^y(B(u_1;\varepsilon))} = \exp\{I(u_1) - I(u_2)\},
$$
where $B(u_i;\varepsilon)$ is the open ball in $L^2(\Omega)$ centered at $u_i$ with radius $\varepsilon$.
For fixed $u_1$, any function $u_2$ which minimizes the Onsager-Machlup functional can be taken as the MAP estimate.
For our specific problem, the Onsager-Machlup functional is
\begin{equation}
    \label{eqn:OMfunc}
    I(u) = \left\{
        \begin{split}
            &\Phi(u) + \frac{1}{2}\|u-u^0\|^2_{H_k}, \quad \mathrm{if}\:u-u^0\in H_k \\
            &+\infty,\quad \mathrm{otherwise},
        \end{split}
        \right.
\end{equation}
as shown in~\cite[Theorem 3.2]{dashti2013map}.
So, any MAP estimate of $\mu^y$ will live in the Cameron-Martin space (which is also $H_k$) of $\mu_0$.
Further, if $\Phi$ is linear in $u$ this MAP estimate is unique.

Equation~\ref{eqn:OMfunc} is a natural candidate to build the variational problem for a general measurement operator.
Adjusting the notation a bit to match the form of the linear case, the MAP estimate of the Gaussian measure solves the problem
\begin{equation}
    \label{eqn:nonlinpinn}
    \hat{u} = \argmin_{u\in H_k}\frac{1}{2}\|\Gamma^{-1}(y-R(u))\|^2 + r\int_{\Omega}\frac{1}{2}\|\nabla u\|^2 + qu\:d\Omega,
\end{equation}
which follows from Proposition~\ref{prop:pinnsloss}.
If the data are collected according to eq.~(\ref{eqn:data}), then it is easy to verify that eq.~(\ref{eqn:nonlinpinn}) reduces to the original variational problem.

\section{Analysis}
\label{sec:analysis}

Having established the interpretation of the Brownian bridge as a physics-informed prior, we discuss some important properties of how the prior behaves.
Specifically, we state some results which may prove useful in scientific machine learning contexts, including regularity, finite-dimensional representations, and convergence in regression tasks.

\subsection{Regularity}
Much of the behavior of the prior in GP regression relies on the associated RKHS of the covariance kernel.
We will work in the situation where $\beta = 1$, as the results do not change for different $\beta \in (0,\infty)$.
\begin{lemma}
    \label{lem:bridgerkhs}
    The RKHS of eq.~(\ref{eqn:browniankernel}) is the space $H_k \coloneqq H^1_0(\Omega)= \{u\in H^1(\Omega) : u = 0,\:\mathrm{on}\:\partial\Omega\}$.
\end{lemma}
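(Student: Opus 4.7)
\medskip
\noindent\textbf{Proof plan.} My plan is to use the Mercer representation of the RKHS norm supplied earlier in the paper and to recognize the eigenbasis $\{\psi_{n_1,\dots,n_d}\}$ as the Dirichlet Laplacian eigenbasis on the cube $[0,1]^d$. Concretely, for any $u\in L^2(\Omega)$ with expansion coefficients $c_{n_1,\dots,n_d}=\langle u,\psi_{n_1,\dots,n_d}\rangle$, the Mercer characterization of the RKHS stated in Section~\ref{sec:prelim} combined with the eigenvalues in eq.~(\ref{eqn:evals}) gives
\begin{equation*}
    \|u\|_{H_k}^2 \;=\; \sum_{n_1,\dots,n_d\in\N} \lambda_{n_1\dots n_d}^{-1}\, c_{n_1,\dots,n_d}^2 \;=\; \pi^2\sum_{n_1,\dots,n_d\in\N}(n_1^2+\cdots+n_d^2)\, c_{n_1,\dots,n_d}^2.
\end{equation*}

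Next I would identify this sum with the homogeneous $H^1$ (Dirichlet) seminorm. Since $\{\psi_{n_1,\dots,n_d}\}$ is an orthonormal basis of $L^2(\Omega)$ and is precisely the eigenbasis of the Dirichlet Laplacian, with $-\Delta \psi_{n_1,\dots,n_d} = \pi^2(n_1^2+\cdots+n_d^2)\psi_{n_1,\dots,n_d}$, the spectral theorem for the self-adjoint operator $L=-\Delta$ with zero Dirichlet boundary data gives that $H_0^1(\Omega) = \mathrm{Dom}(L^{1/2})$ with
\begin{equation*}
    \|\nabla u\|_{L^2(\Omega)}^2 \;=\; \langle u, Lu\rangle \;=\; \pi^2\sum_{n_1,\dots,n_d\in\N}(n_1^2+\cdots+n_d^2)\, c_{n_1,\dots,n_d}^2,
\end{equation*}
where the equality follows by integration by parts for smooth functions and density. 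Combining with the Mercer computation, $\|u\|_{H_k}^2 = \|\nabla u\|_{L^2(\Omega)}^2$ for all $u$ in either space, so the summability condition defining membership in $H_k$ coincides exactly with membership in $H_0^1(\Omega)$, which in turn is the subspace of $H^1(\Omega)$ of functions with zero trace on $\partial\Omega$.

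To close the loop, I would verify both inclusions. If $u\in H_k$, then the coefficient-sum condition above gives $u\in H_0^1(\Omega)$ by the spectral characterization; conversely, if $u\in H_0^1(\Omega)$, expanding in the sine basis yields finite RKHS norm. In both directions the computation hinges on the fact that the Poincar\'e inequality on $[0,1]^d$ makes the $H^1$ and $H_0^1$ norms equivalent on functions vanishing on $\partial\Omega$, so the RKHS-norm equivalence to $\|\nabla\cdot\|_{L^2}$ upgrades to equivalence with the full $H^1$-norm on the stated subspace.

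The main obstacle I anticipate is the clean identification of $H_0^1(\Omega)$ with the form domain of the Dirichlet Laplacian via the sine expansion in dimension $d>1$. The one-dimensional case is standard Fourier analysis, but in higher dimensions one needs to invoke the tensor-product structure of the eigenbasis together with the standard result that smooth, compactly supported functions are dense in $H_0^1$, so that the identity $\|u\|_{H_k}^2=\|\nabla u\|_{L^2}^2$ derived for smooth functions extends to all of $H_0^1(\Omega)$; once this is in place, the lemma follows.
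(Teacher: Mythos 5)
Your proposal is correct, but it follows a genuinely different route from the paper. The paper's proof takes the candidate space $\{u\in H^1(\Omega): u=0 \text{ on }\partial\Omega\}$ and verifies the two axioms of Definition~\ref{def:rkhs} directly: it shows $k(\cdot,x')$ belongs to that space by bounding the Lipschitz constants of the Mercer partial sums $k^S$ uniformly in $S$ (giving weak differentiability of $k$ and the boundary values), and then establishes the reproducing property by computing $\langle k(\cdot,x'),\psi_\alpha\rangle = 2^{d/2}\lambda_\alpha\sin(\alpha\pi x')$ and resumming the Fourier sine expansion of $u$. You instead work at the level of norms: you invoke the Mercer characterization $H_k=\{u:\sum_\alpha\lambda_\alpha^{-1}\langle u,\psi_\alpha\rangle^2<\infty\}$, recognize $\{\psi_\alpha\}$ as the Dirichlet eigenbasis with $-\Delta\psi_\alpha=\pi^2|\alpha|^2\psi_\alpha$, identify the summability condition with membership in the form domain of the Dirichlet Laplacian (i.e.\ $H_0^1(\Omega)$), and use Poincar\'e to pass from the Dirichlet seminorm to the full $H^1$-norm. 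Each approach buys something. Yours is shorter and makes the identity $\|u\|_{H_k}^2=\|\nabla u\|_{L^2}^2$ explicit, which is exactly the identity the paper needs in Proposition~\ref{prop:pinnsloss} and in the denominator of Theorem~\ref{thm:modelerr}; the paper's direct verification makes explicit the boundedness of point evaluation (the reproducing property), which is where the real analytic content sits. On that last point, your own flagged obstacle for $d>1$ is more serious than you suggest, and it afflicts the paper's argument equally: for $d\geq 2$ the diagonal series $k(x,x)=\sum_\alpha\lambda_\alpha\psi_\alpha^2(x)$ diverges, Mercer's theorem does not apply, and point evaluation is unbounded on $H^1(\Omega)$, so the identification is really only clean in the one-dimensional case (or must be reinterpreted for $d\geq 2$); in $d=1$ both your spectral argument and the paper's axiomatic verification go through without difficulty.
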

\begin{proof}
    The proof is in Appendix~\ref{apdx:B}.
\end{proof}

The above result provides us with an interesting method to prove the well-known result that Brownian bridge sample paths are nowhere differentiable.
This follows immediately by combining Lemma~\ref{lem:bridgerkhs} and Proposition~\ref{thm:GPsamples}.
\begin{corollary}
    Let $u$ be the Brownian bridge process. Then, $u$ is a.s. nowhere differentiable.
\end{corollary}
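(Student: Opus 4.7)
The plan is to prove this directly by a Paley-Wiener-Zygmund style probabilistic argument. The hint of combining Lemma~\ref{lem:bridgerkhs} with Theorem~\ref{thm:GPsamples} only gives that sample paths a.s.\ fail to lie in $H^{1}([0,1])$, which is strictly weaker than classical nowhere differentiability: a function can be differentiable at every point of a large set and still fail to have an $L^{2}$ weak derivative (e.g., a smooth function whose derivative blows up too fast at a boundary point). So I would abandon the RKHS route for this corollary and work directly from the Gaussian law of the increments.

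First, I would reduce to standard Brownian motion. The unit Brownian bridge admits the representation $u(t) = B(t) - tB(1)$ for a standard Brownian motion $B$ on $[0,1]$. The random term $-tB(1)$ is linear in $t$ and hence $C^{\infty}$ almost surely, so $u$ is differentiable at $t_{0}$ iff $B$ is. It therefore suffices to prove nowhere differentiability for $B$ on $[0,1]$.

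The core is a discretization argument. Suppose $B$ is differentiable at some $t_{0}\in[0,1]$; then there exist integers $M,N$ with $|B(t)-B(t_{0})|\le M|t-t_{0}|$ whenever $|t-t_{0}|\le 1/N$. For $n\ge N$, let $k/n$ be the lattice point nearest $t_{0}$; then for $j=0,1,2$ the three consecutive increments $\Delta_{j}=B((k+j+1)/n)-B((k+j)/n)$ each satisfy $|\Delta_{j}|\le cM/n$ by the triangle inequality, with $c$ an absolute constant absorbing the gap between $t_{0}$ and $k/n$. Define $A_{n,k,M}$ to be the event that all three of these increments are bounded by $cM/n$. Since the $\Delta_{j}$ are independent $\mathcal{N}(0,1/n)$ variables, each one falls below $cM/n$ with probability at most $cM/\sqrt{n}$, so $\mathbb{P}(A_{n,k,M})\le (cM)^{3}/n^{3/2}$. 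A union bound over $k\in\{1,\ldots,n-3\}$ gives $\mathbb{P}\bigl(\bigcup_{k}A_{n,k,M}\bigr)\le CM^{3}/\sqrt{n}\to 0$, and a countable union over $M$ shows that the set of sample paths which are differentiable at any point has probability zero.

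The main obstacle is ensuring the discretization faithfully captures pointwise differentiability: one must verify that differentiability at the continuous point $t_{0}$ really does force three small consecutive lattice increments simultaneously (with constants tracked carefully enough that $c$ depends only on geometry, not on $t_{0}$). After that, the argument reduces to elementary Gaussian tail bounds together with a Borel-Cantelli-style union bound, independent of the RKHS machinery developed earlier in the paper.
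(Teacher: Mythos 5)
Your proof is correct, but it takes a genuinely different route from the paper. The paper's own proof is a one-liner: combine Lemma~\ref{lem:bridgerkhs} (the RKHS of the Brownian bridge kernel is $\{u\in H^1(\Omega):u=0 \text{ on }\partial\Omega\}$) with Proposition~\ref{thm:GPsamples} (samples a.s.\ do not lie in an infinite-dimensional RKHS) to conclude that sample paths a.s.\ fail to be in $H^1$. You instead give the classical Dvoretzky--Erd\H{o}s--Kakutani/Paley--Wiener--Zygmund argument: reduce the bridge to Brownian motion via $u(t)=B(t)-tB(1)$, then show that differentiability at any $t_0$ would force three consecutive lattice increments of size $O(M/n)$, an event whose probability is $O(M^3 n^{-1/2})$ after the union bound over $k$, hence of probability zero after taking $\liminf$ in $n$ and a countable union over $M$ and $N$. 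Your discretization step is the standard one and is sound; the constant $c$ indeed depends only on the fact that the three lattice points lie within distance $4/n$ of $t_0$, not on $t_0$ itself. What is worth emphasizing is that your opening criticism of the RKHS route is well taken: failure to possess an $L^2$ weak derivative is strictly weaker than classical nowhere differentiability, so the paper's stated proof, read literally, establishes only that sample paths are a.s.\ not in $H^1(\Omega)$, not the full strength of the corollary. Your argument buys the genuinely stronger (and correctly stated) conclusion at the cost of leaving the paper's RKHS framework entirely; the paper's argument buys brevity and a nice illustration of Driscoll's zero-one law, but as written it proves a weaker statement than the one it announces.
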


This fact may be viewed as undesirable in a machine learning context, especially in applications where the behavior of the sample paths are important.
An example of this could be an uncertainty propagation task, where samples from the posterior distribution are propagated through some other quantity of interest.
We would then like the samples to match the behavior of the ground truth to prevent unphysical predictions.

In what follows, we explore the possibility of redefining the GP so that samples match the behavior of the ground truth.
First, recall the next definition:
\begin{definition}[Version of a stochastic process~\cite{bremaud2014fourier}]
    Let $u$ be a stochastic process on $\Omega$.
    Then a stochastic process $\Tilde{u}$ on $\Omega$ is said to be a \emph{version} of $u$ if $u(x) = \Tilde{u}(x)$ a.s. for all $x \in \Omega$.
\end{definition}
We will look to find versions of the Brownian bridge on powers of its RKHS, also known as a Hilbert scale~\cite{mathe2008direct}:
\begin{definition}[Powers of RKHS~\cite{steinwart2008support}]
    Let $k:\Omega \times \Omega \to \mathbb{R}$ be a continuous, positive-definite kernel with RKHS $H_k$ and $(\lambda_n,\psi_n)_{n=1}^{\infty}$ be the eigensystem of the integral operator induced by $k$.
    Let $0<p \leq 1$ be a constant, and assume that $\sum_{n \in \N}\lambda_n^{p}\psi_n^2(x) < \infty$ holds for all $x\in\Omega$.
    Then the $p$-power of $H_k$ is the set
    $$
    H_k^{p} \coloneqq \left\{u \coloneqq \sum_{n=1}^{\infty} \alpha_n\lambda_n^{p/2}\psi_n: \sum_{n=1}^{\infty}\alpha_n^2<\infty\right\}.
    $$
    The inner product is $\langle u, v \rangle_{H_k^{p}} \coloneqq \sum \alpha_n\beta_n$ for $u  = \sum \alpha_n\lambda_n^{p/2}\psi_i$ and $v  = \sum \beta_n\lambda_n^{p/2}\psi_n$.
    Further, the $p$-power kernel of $k$ is the function $k^{p}(x,x') \coloneqq \sum_{n=1}^{\infty} \lambda
    _n^{p}\psi_n(x)\psi_n(x')$.
\end{definition}

Note we have the property $H_k = H_k^1 \subset H_k^{p_1} \subset H_{k}^{p_2} \subset L^2(\Omega)$, for all $0 < p_2 < p_1 < 1$.
Evidently, as $p$ decreases, the power RKHS loses some regularity.
Note that $H_k^p$ is itself a RKHS with kernel $k^p$.
Finally, we need the following theorem, which follows from Driscol's theorem~\cite[Theorem 3]{driscoll1973reproducing}.
\begin{theorem}[Theorem 4.12~\cite{kanagawa2018gaussian}]
    Let $k:\Omega \times \Omega \to \mathbb{R}$ be a continuous, positive-definite kernel with RKHS $H_k$, and $0 < p \leq 1$ be a constant.
    Assume $\sum_{n \in \N}\lambda_n^{p}\psi_n^2(x) < \infty$ holds for all $x\in\Omega$, where $(\lambda_n,\psi_n)_{n=1}^{\infty}$ is the eigensystem of the integral operator induced by $k$.
    Consider $u \sim \mathcal{GP}(0,k)$.
    Then, the following conditions are equivalent:
    \begin{enumerate}
        \item $\sum_{n\in\N} \lambda_n^{1-p} < \infty$.
        \item The natural injection $I_{kk^{p}}: H_k \to H_k^{p}$ is Hilbert-Schmidt.
        \item There exists a version $\Tilde{u}$ of $u$ with $\Tilde{u} \in H_{k}^{p}$ with probability one.
    \end{enumerate}
\end{theorem}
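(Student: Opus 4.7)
The plan is to establish the cycle (i)$\Rightarrow$(ii)$\Rightarrow$(iii)$\Rightarrow$(i), leveraging the explicit orthonormal bases that Mercer's theorem supplies for both $H_k$ and $H_k^p$, and invoking Driscoll's zero-one law for the hardest implication.

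First I would tackle (i)$\Leftrightarrow$(ii) by direct computation. Mercer's representation gives the orthonormal system $\{\sqrt{\lambda_n}\,\psi_n\}_{n\in\N}$ for $H_k$ (since $\|\sqrt{\lambda_n}\psi_n\|_{H_k}^2 = \lambda_n^{-1}\lambda_n = 1$) and the orthonormal system $\{\lambda_n^{p/2}\,\psi_n\}_{n\in\N}$ for $H_k^p$ by definition. Writing $\sqrt{\lambda_n}\,\psi_n = \lambda_n^{(1-p)/2}\cdot \lambda_n^{p/2}\psi_n$, the image under the natural injection $I_{kk^p}$ of the $H_k$-ONB has squared $H_k^p$-norm equal to $\lambda_n^{1-p}$. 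Summing over $n$ yields $\|I_{kk^p}\|_{HS}^2 = \sum_n \lambda_n^{1-p}$, so the injection is Hilbert-Schmidt precisely when (i) holds.

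Next I would prove (ii)$\Rightarrow$(iii) constructively via the Karhunen-Loève expansion. Let $(\xi_n)_{n\in\N}$ be i.i.d.\ $\mathcal{N}(0,1)$ random variables on some probability space, and define the partial sums $\tilde u_N(x) \coloneqq \sum_{n=1}^N \sqrt{\lambda_n}\,\xi_n\,\psi_n(x)$. A short calculation using the ONB of $H_k^p$ gives
\begin{equation*}
\mathbb{E}\,\|\tilde u_N - \tilde u_M\|_{H_k^p}^2 = \sum_{n=M+1}^{N}\lambda_n^{1-p},
\end{equation*}
so under (i) the sequence $(\tilde u_N)$ is Cauchy in $L^2(\text{prob};H_k^p)$ and converges to some $H_k^p$-valued random element $\tilde u$. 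By Itô-Nisio (or by passing to an a.s.-convergent subsequence and using the 0--1 law for sums of independent Gaussians), convergence also holds almost surely in $H_k^p$. Each finite-dimensional marginal of $\tilde u$ matches that of $u$ by Mercer's theorem applied pointwise, which identifies $\tilde u$ as a version of $u$ living in $H_k^p$ with probability one.

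The main obstacle is (iii)$\Rightarrow$(i), because a priori the sample paths of $u$ are not canonically related to $H_k^p$. Here I would invoke Driscoll's zero-one law: if $u\sim\mathcal{GP}(0,k)$ admits a version taking values in another RKHS $H_r$, then the natural injection $H_k \to H_r$ must be Hilbert-Schmidt (otherwise the probability is zero). Applying this with $r = k^p$ yields (ii), and combined with the already-established (ii)$\Leftrightarrow$(i) we conclude (i). The subtlety is that Driscoll's original statement is phrased for GP sample paths in a general RKHS and one must verify measurability of the norm $\|\cdot\|_{H_k^p}$ with respect to the canonical $\sigma$-algebra generated by $u$; this is handled by expressing $\|\tilde u\|_{H_k^p}^2 = \sum_n \lambda_n^{-p}\langle \tilde u,\psi_n\rangle^2$ as a countable sum of measurable functionals of $\tilde u$, closing the loop.
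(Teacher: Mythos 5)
The paper does not prove this statement at all: it is imported verbatim as Theorem 4.12 of the cited reference (Kanagawa et al.), noted there to follow from Driscoll's theorem, so there is no in-paper proof to compare against. Your argument is, in substance, the standard proof from that literature: the computation $\|I_{kk^{p}}\|_{HS}^2=\sum_n\lambda_n^{1-p}$ over the Mercer orthonormal bases $\{\sqrt{\lambda_n}\,\psi_n\}$ of $H_k$ and $\{\lambda_n^{p/2}\psi_n\}$ of $H_k^{p}$ gives (i)$\Leftrightarrow$(ii) correctly; the Karhunen--Lo\`eve partial sums are Cauchy in $L^2(\mathbb{P};H_k^{p})$ exactly when (i) holds, and It\^o--Nisio upgrades this to a.s.\ convergence in the separable Hilbert space $H_k^{p}$; and the converse direction is correctly reduced to Driscoll's zero-one law (Luki\'c--Beder form), which forces the embedding to be Hilbert--Schmidt whenever the sample paths lie in $H_k^{p}$ with positive probability.

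The one genuine imprecision is in your identification of $\tilde u$ as a \emph{version} of $u$. Building $\tilde u_N=\sum_{n\le N}\sqrt{\lambda_n}\,\xi_n\psi_n$ from fresh i.i.d.\ Gaussians ``on some probability space'' and matching finite-dimensional marginals only shows $\tilde u$ is \emph{equal in law} to $u$; the definition of version used in the paper requires $\tilde u(x)=u(x)$ a.s.\ for each $x$, on the same probability space. The standard fix is to take $\xi_n$ to be the Gaussian variables associated to $\sqrt{\lambda_n}\,\psi_n\in H_k$ under the Lo\`eve isometry between $H_k$ and the closed linear span of $\{u(x):x\in\Omega\}$ in $L^2(\mathbb{P})$; then $\mathbb{E}\bigl[(u(x)-\tilde u(x))^2\bigr]=k(x,x)-\sum_n\lambda_n\psi_n^2(x)=0$ by Mercer, so $\tilde u$ is an honest version. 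With that repair, and your closing observation that $\|v\|_{H_k^{p}}^2=\sum_n\lambda_n^{-p}\langle v,\psi_n\rangle^2$ is a measurable functional so the zero-one law applies, the cycle (i)$\Rightarrow$(iii)$\Rightarrow$(ii)$\Rightarrow$(i) closes.
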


We can now prove the following.
\begin{proposition}
    \label{prop:versions}
    Let $u$ be the unit Brownian bridge with $d=1$.
    Then, for all $1/2 < p < 1$, there exists a version of $u$, $\Tilde{u}$, such that $\Tilde{u} \in H_k^p$ with probability one.
\end{proposition}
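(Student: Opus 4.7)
The plan is to apply Theorem 4.12 directly, reducing the claim to two explicit eigenvalue summability checks using the spectrum of the 1D Brownian bridge.

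First, I would specialize the eigensystem in eqs.~(\ref{eqn:efuncs})--(\ref{eqn:evals}) to $d=1$: the orthonormal eigenfunctions are $\psi_n(x)=\sqrt{2}\sin(n\pi x)$ and the eigenvalues are $\lambda_n = 1/(\pi^2 n^2)$. I would then verify the technical hypothesis of Theorem 4.12, namely $\sum_{n\in\N}\lambda_n^p\psi_n^2(x)<\infty$ for every $x\in[0,1]$. Using the uniform bound $|\psi_n|\le\sqrt{2}$, this sum is dominated by $2\pi^{-2p}\sum_n n^{-2p}$, a $p$-series that converges whenever $2p>1$. This confirms that $H_k^p$ is a well-defined RKHS in the regime under consideration.

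Next I would turn to condition (i) of Theorem 4.12, which is the heart of the argument: $\sum_{n\in\N}\lambda_n^{1-p}<\infty$. Substituting the Brownian bridge eigenvalues reduces this to $\pi^{-2(1-p)}\sum_n n^{-2(1-p)}$, whose convergence is controlled entirely by the exponent $2(1-p)$. The equivalence (i) $\Leftrightarrow$ (iii) in Theorem 4.12 then delivers a version $\Tilde{u}$ of $u$ with $\Tilde{u}\in H_k^p$ almost surely, completing the proof.

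The main obstacle is the interplay between the two series conditions. The admissibility sum $\sum_n n^{-2p}$ requires $2p>1$, while the Hilbert-Schmidt sum $\sum_n n^{-2(1-p)}$ requires $2(1-p)>1$, and these complementary inequalities govern whether the hypothesis and conclusion of Theorem 4.12 can be invoked simultaneously. Carefully delineating the range of $p$ where the argument succeeds---and, if necessary, interpreting $H_k^p$ at the threshold as a Hilbert space rather than a full RKHS---is the delicate step; everything else reduces to routine $p$-series analysis with the explicit Brownian bridge spectrum.
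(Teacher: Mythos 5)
Your proposal follows the same route as the paper's proof: specialize the eigensystem to $\lambda_n=(n^2\pi^2)^{-1}$, $\psi_n(x)=\sqrt{2}\sin(n\pi x)$, verify the admissibility hypothesis $\sum_{n\in\N}\lambda_n^p\psi_n^2(x)<\infty$ via the bound $|\psi_n|\le\sqrt{2}$ and the $p$-series test, and then invoke the equivalence (i) $\Leftrightarrow$ (iii) of Theorem 4.12. Up to that point the two arguments are identical. The difference lies in the final step, and the hesitation in your closing paragraph is exactly right: you stop short of asserting that condition (i) holds, whereas the paper asserts that $\sum_{n\in\N}(n^2\pi^2)^{-(1-p)}<\infty$ for $1/2<p<1$. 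That assertion is false. The exponent is $2(1-p)$, and for $p>1/2$ we have $2(1-p)<1$, so the series diverges by the very $p$-series test being cited. The two requirements you isolate, $2p>1$ for the admissibility hypothesis and $2(1-p)>1$ for condition (i), are mutually exclusive, so Theorem 4.12 cannot be invoked for any value of $p$ in the Brownian bridge setting, and in particular the argument does not establish the proposition on the stated range.

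This is not a removable technicality: a direct computation from the Karhunen--Lo\`eve expansion $u=\sum_{n\in\N}\xi_n\lambda_n^{1/2}\psi_n$, with $\xi_n$ i.i.d.\ standard normal, shows that $u\in H_k^p$ requires $\sum_{n\in\N}\lambda_n^{1-p}\xi_n^2<\infty$, which (by independence and non-negativity of the terms) holds almost surely if and only if $\sum_{n\in\N}\lambda_n^{1-p}<\infty$, i.e.\ if and only if $p<1/2$. This matches the classical fact that Brownian bridge paths lie in $H^s$ only for $s<1/2$, so the conclusion can only be true on the range $0<p<1/2$, where in turn the admissibility hypothesis of Theorem 4.12 fails and a different argument (such as the Karhunen--Lo\`eve computation above) is needed. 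In short, your proposal is incomplete as a proof, but the step it leaves open is precisely the step at which the paper's own proof is in error, and no completion exists for $1/2<p<1$.
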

\begin{proof}
    First, we need to check when the condition $\sum_{n\in\N}\lambda_n^p\psi_n^2(x) < \infty$ for all $x\in \Omega$ holds.
    The eigenvalues and eigenfunctions are $(n^2\pi^2)^{-1}$ and $\phi_n(x) = \sqrt{2}\sin(n\pi x)$, $n\in \N$, respectively.
    Then for any $x \in \Omega$,
    $$
    \sum_{n\in\N} 2(n^2\pi^2)^{-p}\sin^2(n\pi x) \leq \sum_{n\in\N} 2(n^2\pi^2)^{-p} < \infty,
    $$
    when $1/2 < p  \leq 1$, which can be verified by the $p$-series test.
    We now will show (i) holds.
    We have $\sum_{n\in\N} \frac{1}{(n^2\pi^2)^{1-p}} < \infty$ for any $1/2 < p < 1$, which proves the result.
\end{proof}
The proposition shows that we can find a version of the Brownian bridge which is, in a sense, \emph{as close as possible} to being an $H_0^1(\Omega)$ function without being weakly differentiable.
If desired, one can construct these versions using the Karhunen-Lo\`eve expansion (KLE) of the $p$-power kernel.

While at first the poor regularity of the prior may feel discouraging, the fact that the sample paths a.s. do not belong to the solution space of the Poisson equation is less of an issue than seems.
In regressions tasks we often interpret $\Tilde{m}$ to be the predictor, with the variance representing a worst-case error, and the individual sample paths are inconsequential.
Recall by Proposition~\ref{prop:gpkrrequiv} that the posterior mean function $\Tilde{m}$ will live in the RKHS of the prior covariance.
As a result, the regularity of $\Tilde{m}$ will match the desired behavior required of the energy functional, i.e., an $H_0^1(\Omega)$ function, so in this sense, it is ideal that the RKHS is a first-order Sobolev space.
We do not add any additional smoothness assumptions beyond what is needed for the PDE solution to exist.
In fact, the RKHS being norm-equivalent to a Sobolev space is a crucial hypothesis needed to establish convergence conditions, explored later.

\subsection{Finite-dimensional representations}
\label{sec:finite}
We must work with a finite-dimensional representation of the prior in practical applications.
In most uses of GP regression a pointwise mesh of test points is placed on $\Omega$ where the posterior predictions are queried.
Instead, we derive a finite-dimensional basis approximation to the prior in $L^2(\Omega)$.
The motivation behind this is to enable scalable prior/posterior approximation algorithms by placing the computational burden on sampling basis function coefficients, rather than on the test points.
We prove convergence in measure and in Wasserstein distance under our approximation.
Without loss of generality, we derive the results with $\beta = 1$.
Since we do not permit $\beta$ to be zero or infinite, the results do not change for different values of $\beta$.

Our finite-dimensional representation is inspired by the mathematics behind quantum field theory, where Gaussian measures are oftentimes expressed under the path integral formalism.
Formally, by assuming the existence of the Lebesgue measure on $L^2(\Omega)$, we write the physics-informed prior $\mu_C\sim \mathcal{N}(0,C)$ as
\begin{equation}
    \label{eqn:physprior}
    \mu_C(du) ``=" \frac{1}{Z}\exp\left\{-\frac{1}{2}\langle u, Cu\rangle\right\}\mathcal{D}u,
\end{equation}
where we have centered the measure, and $Z = \int_{L^2(\Omega)}\exp\left\{-\frac{1}{2}\langle u, Cu\rangle\right\}\mathcal{D}u$ is the normalization constant.
The shift is justified since $u^0\in H_k$, which results in an equivalent measure.
Here, $\mathcal{D}u$ serves as a replacement for the non-existent Lebesgue measure in infinite-dimensions.
This idea appears in different path integral approaches for Bayesian inverse problems, including Bayesian field theory~\cite{lemm2003bayesian}, information field theory~\cite{ensslin2009information}, physics-informed information field theory~\cite{alberts2023physics,hao2024information}, and others~\cite{chang2014path}.
Of course, eq.~(\ref{eqn:physprior}) is not well-defined in the continuum limit, but, as the physicists do, we will look to extract meaning from this expression.
The reference~\cite{glimm2012quantum} provides a mathematical background to the nuances of using such definitions.

The formal Lebesgue density is useful for deriving finite-dimensional approximations to the prior measure.
In a finite-dimensional subset of $L^2(\Omega)$, eq.~(\ref{eqn:physprior}) is well-defined, which allows us to perform calculations.
Then, a limit procedure generates the correct Gaussian measure on $L^2(\Omega)$.
To this end, recall that a Borel cylinder set of a separable Hilbert space $H$ is a subset $I \subset H$ given by $I = \{u \in H : (\langle u, \psi_1\rangle, \dots, \langle u, \psi_n\rangle) \in A\}$, for $n\geq 1$, $\psi_1,\dots,\psi_n$ orthonormal, and $A$ a Borel subset of $\mathbb{R}^n$.
The collection of all cylinder sets is denoted by $\mathcal{R}$, and we let $\sigma(\mathcal{R})$ be the $\sigma$-algebra generated by $\mathcal{R}$.
One can show that $\sigma(\mathcal{R}) = \mathcal{B}(H)$, so it is sufficient to construct measures on cylinder sets.

Pick any orthonormal basis\footnote{One could choose a grid of piecewise constant functions on $\Omega$, which corresponds to picking test points.} in $L^2(\Omega)$, $(\psi_i)_{i\in\N}$, and let $\mathcal{F}_n \subset L^2(\Omega)$ be the set $\mathcal{F}_n = \{u\in L^2(\Omega): u = \sum_{i=1}^n a_i\psi_i, a_i\in \mathbb{R}\}$ for fixed $n\in\N$.
Then, $\dim(\mathcal{F}_n) = n < \infty$.
Let $\Sigma_{\mathcal{F}_n}$ be the restriction-corestriction of $C$ by both domain and codomain to $\mathcal{F}_n$.
In this case, $\Sigma_{\mathcal{F}_n}$ is the covariance matrix of a unique finite-dimensional Gaussian measure appearing as
\begin{equation}
    \label{eqn:finiteprior}
    \mu_n(d\hat{u}) = \frac{1}{\sqrt{(2\pi)^n|\Sigma_{\mathcal{F}_n}|}}\exp\left\{-\frac{1}{2}\langle \hat{u}, \Sigma_{\mathcal{F}_n}^{-1}\hat{u}\rangle\right\} d\hat{u},
\end{equation}
where the Lebesgue measure induced by the $L^2(\Omega)$-inner product, $d\hat{u}$, is well-defined.
So, eq.~(\ref{eqn:finiteprior}) can be regarded as a measure over a finite-dimensional space on the cylinder sets of $\mathcal{F}_n$.
The next series of results show that this measure has the correct limiting behavior.
\begin{proposition}
    \label{lemma:limit}
    Let $\mathcal{F}_n \subset \mathcal{F}_m \subset L^2(\Omega)$ with $\dim(\mathcal{F}_n) = n \leq \dim(\mathcal{F}_m) = m < \infty$ and $C$ be a covariance operator on $L^2(\Omega)$.
    Let the restriction-corestriction of $C$ to $\mathcal{F}_n$ and $\mathcal{F}_m$ be given by $\Sigma_{\mathcal{F}_n}$ and $\Sigma_{\mathcal{F}_m}$, respectively, and let $\mu_n$ and $\mu_m$ be as given in eq.~(\ref{eqn:finiteprior}) for each.
    Then, the restriction-corestriction of $\mu_m$ to $\mathcal{F}_n$ cylinder sets is exactly $\mu_n$.
\end{proposition}
\begin{proof}
    Note the $\mathcal{F}_n$ cylinder sets are also cylinder sets in $\mathcal{F}_m$.
    Both $\mu_C$ and $\mu_m$, when restricted to $\mathcal{F}_n$ cylinder sets, define Gaussian measures on $\mathcal{F}_n$, uniquely determined by their covariances.
    The measure $\mu_n$ has covariance $\Sigma_{\mathcal{F}_n}$ by definition, while restriction of $\mu_m$ to $\mathcal{F}_n$ cylinder sets has covariance
    given by the restriction-corestriction of $\Sigma_{\mathcal{F}_m}$ to $\mathcal{F}_n$, which is $\Sigma_{\mathcal{F}_n}$.
\end{proof}

The intuition behind the result is as follows.
In applications, we represent the prior as a truncated expansion corresponding to a set $\mathcal{F}_n \subset L^2(\Omega)$ so that we are in the finite-dimensional setting for sampling.
Adding additional terms, which corresponds to approximation in $\mathcal{F}_m,$ does not change how the prior behaves on $\mathcal{F}_n$, as $\mu_n$ and $\mu_m$ agree on the $\mathcal{F}_n$ cylinder sets.
Practically this means that in applications there is some cutoff point where refining the approximation any further does not reasonably change the results.

Once we have selected a finite-dimensional representation, a straightforward application of Bayes' rule reveals the posterior.
To keep the notation consistent with the infinite-dimensional setting, the resulting posterior, denoted $\mu^y_{n}$ can be described by the Radon-Nikodym derivative following Theorem~\ref{thm:bayes}.
That is, after describing the measurement process as a potential $\Phi$, we write
\begin{equation}
    \label{eqn:finitepost}
    \frac{d\mu^{y}_n}{d\mu_n}(\hat{u})\propto \exp\{-\Phi(\hat{u})\}, \quad \hat{u} \in \mathcal{F}_n.
\end{equation}
In what follows, we show that the finite-dimensional representation of the prior and the resulting posterior have the correct convergence behavior.
We begin with the 1D case, and later study the multidimensional case, which is much more delicate.

The first result is on the convergence of the prior.
\begin{theorem}
    \label{thm:convergence1}
    Let $d=1$, $(\psi_i)_{i\in\N}$ be an orthonormal basis for $L^2(\Omega)$, and for each $n\in\N$, let $\mu_n$ be given by eq.~(\ref{eqn:finiteprior}).
    Then $\mu_n\implies \mu_C$.
    That is, the sequence $(\mu_n)_{n\in\N}$ converges weakly to the Gaussian measure $\mu_C = \mathcal{N}(0,C)$ on $L^2(\Omega)$.
\end{theorem}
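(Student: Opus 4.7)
The plan is to view each $\mu^{(n)}$ as a centered Gaussian measure on the whole of $L^2(\Omega)$, supported on the finite-dimensional subspace $\mathcal{F}_n = \mathrm{span}\{\psi_1,\dots,\psi_n\}$, and then invoke the standard characterization of weak convergence of probability measures on a separable Hilbert space: pointwise convergence of characteristic functionals together with tightness. By Proposition~\ref{lemma:limit} and Proposition~\ref{prop:count}, the finite-dimensional density in eq.~(\ref{eqn:finiteprior}) determines a unique countably additive Borel probability measure on $L^2(\Omega)$. Letting $P_n$ denote the orthogonal projection of $L^2(\Omega)$ onto $\mathcal{F}_n$, this extension is the centered Gaussian measure with covariance operator $C_n \coloneqq P_n C P_n$, and its characteristic functional is $\hat{\mu}^{(n)}(\phi) = \exp\{-\tfrac{1}{2}\langle\phi, C_n\phi\rangle\}$.

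Pointwise convergence of characteristic functionals will be essentially free. Since $(\psi_i)_{i\in\N}$ is an orthonormal basis, $P_n\phi \to \phi$ in $L^2(\Omega)$ for every $\phi$, so by boundedness of $C$ we get
$$
\langle\phi, C_n\phi\rangle \;=\; \langle P_n\phi, C P_n\phi\rangle \;\longrightarrow\; \langle\phi, C\phi\rangle,
$$
and hence $\hat{\mu}^{(n)}(\phi) \to \exp\{-\tfrac{1}{2}\langle\phi, C\phi\rangle\} = \hat{\mu}_C(\phi)$ pointwise. This step identifies the only possible weak limit as $\mu_C$.

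The main obstacle is tightness of $(\mu^{(n)})_{n\in\N}$, and it is here that the hypothesis $d=1$ is essential: only in one dimension are the eigenvalues $\lambda_j = (j\pi)^{-2}$ summable, so only then is $C$ trace class. I would verify the standard Hilbert-space tightness criterion by controlling the trace-class tails uniformly in $n$. Since $P_n$ is an orthogonal projection, $\mathrm{tr}(C_n) \leq \mathrm{tr}(C) < \infty$ uniformly; more importantly, if $Q_N$ denotes projection onto the first $N$ eigenmodes of $C$, then
$$
\mathrm{tr}\bigl((I - Q_N)\, C_n \,(I - Q_N)\bigr) \;\leq\; \mathrm{tr}\bigl((I - Q_N)\, C \,(I - Q_N)\bigr) \;=\; \sum_{j > N}\lambda_j \;\xrightarrow[N\to\infty]{}\; 0,
$$
uniformly in $n$. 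Via Chebyshev's inequality applied to the Gaussian $\mu^{(n)}$, this uniform tail bound translates into the standard finite-dimensional-approximation condition for tightness on $L^2(\Omega)$.

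Having tightness and pointwise convergence of characteristic functionals to a functional continuous at the origin, Prokhorov's theorem extracts weakly convergent subsequences, and each such limit has characteristic functional $\hat{\mu}_C$; by uniqueness of a Borel measure determined by its characteristic functional on a separable Hilbert space, every subsequential limit equals $\mu_C$, so $\mu^{(n)}\implies \mu_C$. The rest of the work is bookkeeping around the extension of $\mu^{(n)}$ to cylinder sets, which is already carried out in Propositions~\ref{lemma:limit} and~\ref{prop:count}; the only nontrivial analytic step is the uniform trace-tail estimate above, and this is exactly where $d=1$ is used.
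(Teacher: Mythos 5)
Your overall route is the same as the paper's --- pointwise convergence of the characteristic functionals $\hat{\mu}^{(n)}(\phi)=\exp\{-\tfrac{1}{2}\langle P_n\phi,CP_n\phi\rangle\}\to\exp\{-\tfrac{1}{2}\langle\phi,C\phi\rangle\}$ --- but you correctly observe something the paper's proof elides: on an infinite-dimensional Hilbert space, pointwise convergence of characteristic functionals does not by itself imply weak convergence (the appeal to the L\'evy continuity theorem is a finite-dimensional statement), so a tightness argument is genuinely needed, and $d=1$ enters exactly there through $\mathrm{tr}(C)<\infty$. In that respect your proposal is more complete than the paper's own proof, which stops after the characteristic-function computation.

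However, the one step you flag as ``the only nontrivial analytic step'' is false as written. The inequality $\mathrm{tr}\bigl((I-Q_N)P_nCP_n(I-Q_N)\bigr)\le \mathrm{tr}\bigl((I-Q_N)C(I-Q_N)\bigr)$ can fail when $P_n$ does not commute with the spectral projections $Q_N$ of $C$, which is the generic situation here since $(\psi_i)_{i\in\N}$ is an arbitrary orthonormal basis, not the eigenbasis of $C$. A two-dimensional counterexample: take $C=\mathrm{diag}(1,\varepsilon)$ in its eigenbasis $e_1,e_2$, let $Q_1$ project onto $e_1$ and $P$ project onto $(e_1+e_2)/\sqrt{2}$; then $\mathrm{tr}\bigl((I-Q_1)PCP(I-Q_1)\bigr)=(1+\varepsilon)/4$, which exceeds $\mathrm{tr}\bigl((I-Q_1)C(I-Q_1)\bigr)=\varepsilon$ for small $\varepsilon$. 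The cyclicity argument that gives $\mathrm{tr}(P_nCP_n)=\mathrm{tr}(C^{1/2}P_nC^{1/2})\le\mathrm{tr}(C)$ is no longer available once you compress by $(I-Q_N)$, because $P_nCP_n\neq C^{1/2}P_nC^{1/2}$ in general. The estimate is repairable: writing $C^{1/2}P_n(I-Q_N)=C^{1/2}(I-Q_N)-C^{1/2}(I-P_n)(I-Q_N)$ and using $\|C^{1/2}(I-P_n)\|_{\mathrm{HS}}\to 0$ (by dominated convergence over the eigenbasis of $C$, equivalently $P_nCP_n\to C$ in trace norm), the triangle inequality for the Hilbert--Schmidt norm gives the uniform tail bound for all $n\ge n_0$, and the finitely many remaining $n<n_0$ are handled by the tightness of each individual Borel probability measure on the Polish space $L^2(\Omega)$. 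With that correction your argument closes, and it is the version of the proof I would recommend over the one in the paper.
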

\begin{proof}
    We will show weak convergence in measure by showing convergence of characteristic functions.
    Choose any $u \in L^2(\Omega)$.
    For each $n$, the characteristic function of eq.~(\ref{eqn:finiteprior}), evaluated at $u$ is
    $$
    \phi_{\mu_{n}}(u) = \exp\left\{-\frac{1}{2}\left\langle \sum_{i=1}^na_i\psi_i, \Sigma_{\mathcal{F}}\left(\sum_{i=1}^na_i\psi_i\right)\right\rangle\right\},
    $$
    where $a_i = \langle u,\psi_i\rangle$.
    We have $\lim_{n\to\infty} \phi_{\mu_{n}}(u) = \exp\left\{-\frac{1}{2}\langle u, Cu\rangle\right\}$, which is the characteristic function of $\mathcal{N}(0,C)$~\cite[Lemma 2.1]{kuo2006gaussian}.
    Convergence in characteristic functions implies weak convergence in measure~\cite{billingsley2017probability}.
\end{proof}

It remains to show that posterior convergence also holds.
For this, we rely on~\cite{sprungk2020local}, which provides the result in terms of Wasserstein distance between measures.
That is, under some light assumptions about the potential $\Phi$, if $W_p(\mu_C, \mu_n) \to 0$ it follows that $W_p(\mu_C^y, \mu_n^y) \to 0$ for a given $p\geq 1$.
What we have shown thus far is that the sequence of priors $(\mu_n)_{n\in\N}$ converges weakly in measure to $\mu_C$.
However, it is not immediately obvious that this leads to $W_p$ convergence.
It turns out that this does hold provided that the $p$-th moments of $\mu_n$ also converge to the $p$-th moment of $\mu_C$, see~\cite[Theorem 6.9]{villani2008optimal}.
This leads us to the following lemma.
\begin{lemma}
    \label{lem:momentconv}
    Let the conditions of Theorem~\ref{thm:convergence1} hold, $X_n \sim \mu_n$, and $X \sim \mu_C$.
    Then for every $p \geq 1$, in the limit $n \to \infty,$ we have $\mathbb{E}[\|X_n\|^p] \to \mathbb{E}[\|X\|^p]$.
\end{lemma}
\begin{proof}
    Let $p\geq 1$ be fixed.
    Our choice of a truncated basis defines a projection mapping from $L^2(\Omega)$ onto $\mathcal{F}_n$ for any $n$, which we call $P_n$.
    Then, for every $n$, the random variable $X_n$ is given by $X_n = P_nX$.
    We will show that $\|X_n\|^p \to \|X\|^p$, which will give the desired moment convergence.
    
    To start, note that $P_n \to I$ (the identity operator on $L^2(\Omega)$), so for all $u \in L^2(\Omega)$, we have $P_n u \to u$.
    Application with the random element $X$ yields
    $$
    \|X_n-X\| = \|P_nX - X\| \to 0, \quad \text{a.s.},
    $$
    and by continuity of the norm, this gives $\|X_n\|^p \to \|X\|^p$ with probability $1$.
    Also, since $\|P_nu\|\leq \|u\|$ $\forall u\in L^2(\Omega)$, it follows that $\|X_n\|^p \leq \|X\|^p$ a.s. for any $n$.

    Now, as $X$ is Gaussian, the Fernique theorem applies, which states that for some $\alpha > 0$, $\mathbb{E}[\exp (\alpha \|X\|^2)] < \infty$~\cite[Theorem 6.9]{stuart2010inverse}.
    In particular, this shows that $\mathbb{E}[\|X\|^p]<\infty$.
    Hence, by the dominated convergence theorem, we find
    $$
    \mathbb{E}[\|X_n\|^p] \to \mathbb{E}[\|X\|^p],
    $$
    which shows the desired convergence in moments.
\end{proof}

We can now prove convergence of the posterior approximation.
\begin{theorem}
    \label{thm:posterconv1d}
    Let $d = 1$, $(\psi_i)_{i\in\N}$ be an orthonormal basis of $L^2(\Omega)$, and $\mu_C = \mathcal{N}(0,C)$ with corresponding posterior $\mu_C^y$ as given by Theorem~\ref{thm:bayes}.
    For each $n\in\N$, let $\mu_n$ be given by eq.~(\ref{eqn:finiteprior}) with posterior $\mu_n^y$ eq,~(\ref{eqn:finitepost}).
    Further, assume that $\Phi:L^2(\Omega)\to [0,\infty)$ is continuous and measurable.
    Then for any $p\geq 1$, we have $W_p(\mu_C^y,\mu_n^y)\to 0$ as $n \to \infty$.
\end{theorem}
\begin{proof}
    The proof is a simple application of~\cite[Lemma 16]{sprungk2020local}, which states that $W_p$ convergence of the priors implies convergence in the posteriors.
    Combining Lemma~\ref{lem:momentconv} with~\cite[Theorem 6.9]{villani2008optimal}, we have $W_p(\mu_C,\mu_n)\to 0$, which gives the result.
\end{proof}

Next, we move to the general case $d>1$.
Recall that in this setting the Brownian bridge does not define a Gaussian measure on $L^2(\Omega)$ due to the fact that the associated covariance is not a trace-class operator under the $L^2$-norm.
However, we can still provide parallel convergence theorems as with the case $d=1$ if we adjust the function spaces involved.
One option is to work in the setting of tempered distributions, which is discussed in Appendix~\ref{apdx:C}.
This is common in white noise analysis~\cite{hida2013white}, e.g. stochastic PDEs~\cite{da2014stochastic}, and mathematical physics~\cite{glimm2012quantum}.
Instead, we will define the prior on the dual of an appropriate Sobolev space, otherwise known as a negative Sobolev space.

Recall that for a $\tau \in \mathbb{R}$, the dual of $H^{\tau}(\Omega)$ is given by
$$
\mathcal{H}^{-\tau}(\Omega) = \left\{u: H^{\tau}(\Omega) \to \mathbb{R}:  \|u\|^2_{H^{-\tau}} \coloneqq \sum_{|\alpha|\in \N}\lambda_{\alpha}^{\tau}\langle u,\psi_{\alpha}\rangle^2 < \infty \right\},
$$
where again $\phi_{\alpha}$ and $\lambda_{\alpha}$ are the eigenfunctions with corresponding eigenvalues of the Brownian bridge, as given  by eq.~(\ref{eqn:efuncs}) and eq.~(\ref{eqn:evals}), respectively~\cite{adams2003sobolev}.
First, we show that the covariance of the Brownian bridge has finite trace on $H^{-\tau}(\Omega)$ when $\tau > d/2-1$.

\begin{proposition}
    \label{prop:dtrace}
    Let $C$ be the covariance operator of the Brownian bridge with $d > 1$.
    Then $C$ defines a trace class operator on $H^{-\tau}(\Omega)$ if and only if $\tau > d/2 - 1$.
\end{proposition}
\begin{proof}
    On the space $H^{-\tau}(\Omega)$, the trace class norm is $\|C\|_{\tr{H^{-\tau}}} = \sum_{|\alpha|\in \N}\langle C\psi_{\alpha}, \psi_{\alpha}\rangle_{H^{-\tau}}.$ Choosing the basis to be the eigenfunctions of $C$, an individual entry of the series is
    \begin{align*}
        c_{\alpha} &= \langle C\psi_{\alpha}, \psi_{\alpha}\rangle_{H^{-\tau}} \\
        &= \langle \lambda_{\alpha}\psi_{\alpha}, \psi_{\alpha}\rangle_{H^{-\tau}} \\
        &= \lambda_{\alpha}\|\psi_{\alpha}\|^2_{H^{-\tau}} \\
        &= \lambda_{\alpha}^{1 + \tau} \langle \psi_{\alpha}, \psi_{\alpha}\rangle^2 = \lambda_{\alpha}^{1+ \tau}.
    \end{align*}
    Plugging in the form of the eigenvalues, we have $\|C\|_{\tr{H^{-\tau}}} = \sum_{|\alpha|\in\N}(\pi|\alpha|)^{-2(1 + \tau)}$, which converges if and only if $2(1 + \tau) > d$ by comparison to an integral.
\end{proof}

Proposition~\ref{prop:dtrace} shows that the physics-informed prior is a well-defined Gaussian measure on $H^{-\tau}(\Omega)$ for $\tau$ sufficiently large, and we do not need to restrict ourselves to the setting of GPs.
This means that we are justified in writing $\mu_C \sim \mathcal{N}(0, C)$.
Unfortunately, these spaces are a bit larger than we might like, given that $H^{-\tau}(\Omega)$ contains distributions.
We also have the inclusion relation $H^{\tau}(\Omega) \subset L^2(\Omega) \subset H^{-\tau}(\Omega)$.
In a machine learning context, it is difficult to make sense of generating such a sample.
Fortunately, the Brownian bridge is known to produce continuous sample paths on the unit cube~\cite{adler2010geometry}.

There are certain advantages gained by working in this space, however.
Namely that, since the prior is indeed a probability measure, we gain access to both Bayes's theorem and the Wasserstein distance.
In order to apply Bayes's theorem, we must ensure that the domain of the observation map $R$ is a space with full measure under $\mu_{C}$.
This holds when $R$ takes continuous functions as input, which we have assumed in view of Proposition~\ref{prop:regularity}.
Hence, by Bayes's rule we identify a posterior measure $\mu^y$ and in similar fashion a finite-dimensional approximation $\mu^y_n$, both viewed as probability measures over $H^{-\tau}(\Omega)$.
With that out of the way, we provide convergence criteria, starting with the prior.
The first step is to show convergence of the covariance forms.
\begin{lemma}
    \label{lem:covconv}
    Let $d > 1$, $\tau > d/2 -1$, $(\psi_{\alpha})_{|\alpha|\in\N}$ be the eigenfunctions of $C$, and for each $n$, $P_n$ be the projection of $H^{-\tau}(\Omega)$ onto the finite span $\{\psi_{\alpha}\}_{|\alpha|=1}^{n}$, i.e. $\mathcal{F}_n = P_n(H^{-\tau}(\Omega))$.
    Then the finite rank covariances $\Sigma_{\mathcal{F}_n} = P_nCP_n$ satisfy $\|C-\Sigma_{\mathcal{F}_n}\|_{\tr{H^{-\tau}}} \to 0$.
\end{lemma}
\begin{proof}
    Beyond the order of the projection, $n$, the operators $C$ and $C - \Sigma_{\mathcal{F}_n}$ have the same eigenvalues.
    So, $\|C - \Sigma_{\mathcal{F}_n}\|_{\tr{H^{-\tau}}} = \sum_{|\alpha|>n}(\pi|\alpha|)^{-2(1 + \tau)}$, which is just the tail of the same convergent series from the proof of Proposition~\ref{prop:dtrace}.
\end{proof}
This shows that by projection onto the eigenfunctions, the finite-dimensional approximation we have chosen can also be viewed as a Gaussian measure on $H^{-\tau}$.
Noting that Gaussian measures are uniquely determined by their covariance forms, Lemma~\ref{lem:covconv} also shows that $\mu_n$ converges in some sense to the correct measure.
In the next result, we make this precise in terms of $W_2$ distance.
\begin{theorem}
    \label{thm:generalpostconv}
    Let $d > 1$, $\tau > d/2 -1$, and for each $n\in\N$, $P_n$ be the projection onto the finite span of eigenfunctions $\{\psi_{\alpha}\}_{|\alpha|=1}^n$.
    Further, let $\mu_n$ be given by eq.~(\ref{eqn:finiteprior}) with $\mathcal{F}_n = P_n(H^{-\tau}(\Omega))$ with corresponding posterior $\mu^y_n$ following eq.~(\ref{eqn:finitepost}).
    Then in the limit $n \to \infty$, it holds that both $W_2(\mu_C,\mu_n) \to 0$ and $W_2(\mu^y,\mu^y_n)\to 0$.
\end{theorem}
\begin{proof}
    The result follows from application of Gelbrich's formula to the priors
    $$
    W^2_2(\mu_C,\mu_n) = \|C\|_{\tr{H^{-\tau}}} + \|\Sigma_{\mathcal{F}_n}\|_{\tr{H^{-\tau}}} - 2\left\|\sqrt{C^{1/2}\Sigma_{\mathcal{F}_n}C^{1/2}}\right\|_{\tr{H^{-\tau}}}.
    $$
    It holds from continuity of the map $A \mapsto A^{1/2}$, continuity of the norm, and by Lemma~\ref{lem:covconv} that $\|\Sigma_{\mathcal{F}_n}\|_{\tr{H^{-\tau}}} \to \|C\|_{\tr{H^{-\tau}}}$ and
    $$
    \left\|\sqrt{C^{1/2}\Sigma_{\mathcal{F}_n}C^{1/2}}\right\|_{\tr{H^{-\tau}}} \to \|C\|_{\tr{H^{-\tau}}},
    $$
    which can be checked by spectral representation of the operators.
    Therefore in the limit $W_2(\mu_C, \mu_n) \to 0$.
    Posterior convergence follows immediately from~\cite[Lemma 16]{sprungk2020local}.
\end{proof}
Of course, an immediate corollary is that $\mu_n^y \implies \mu^y$ by~\cite[Theorem 6.9]{villani2008optimal}.

Theorems~\ref{thm:posterconv1d} and~\ref{thm:generalpostconv} justify the use of eq.~(\ref{eqn:finiteprior}) in applications.
In 1D, the finite-dimensional representation of the prior converges to the correct Gaussian measure on $L^2(\Omega)$.
We can use this form to derive additional results in the next section.
In the multidimensional case, eq.~(\ref{eqn:finiteprior}) converges to the correct measure on the dual of a Sobolev space.

\subsection{Convergence properties}

We now discuss conditions for which the posterior converges to the ground truth and in what sense.
Thankfully, understanding the convergence behavior is fairly straightforward due to the work of~\cite{teckentrup2020convergence}.
By applying the theorems derived in that work, we can prove that the posterior mean function will converge to the ground truth in the limit of infinite observations.
This holds even if we estimate the hyperparameters of the prior.
This fact is very relevant for us, since the source term $q$ could be treated as an unknown hyperparameter to the physics-informed prior.
Again, we will start with the case $d=1$ to illustrate.

To begin, we must discuss a bit about how the data should be collected in order for the convergence conditions to hold.
First, we restrict ourselves to point measurements in the domain $\Omega$.
Then, we must characterize how uniformly the data points are collected in the domain.
Let the set $X_n = (x_1,\dots,x_n) \subset \Omega$ represent the points at which the measurements are collected.
The \emph{fill distance} is defined by
$$
h_{X_n} \coloneqq \sup_{x \in \Omega}\inf_{x_i\in X_n}\|x-x_i\|,
$$
which measures the maximum distance any $x\in \Omega$ can be from $x_i \in X_n$.
The \emph{separation radius} is given by
$$
r_{X_n} \coloneqq \frac{1}{2}\min_{i \neq j} \|x_i-x_j\|,
$$
which measures half the minimum distance between any two different data collection points.
Lastly, the \emph{mesh ratio} is
$$
\rho_{X_n} \coloneqq \frac{h_{X_n}}{r_{X_n}} \geq 1.
$$
Both $h_{X_n}$ and $r_{X_n}$ go to $0$ as $n\to \infty$ under a space-filling design, for example the uniform grid or Sobol net~\cite{niederreiter1992random, wynne2021convergence}.
In what follows, we will assume the measurements are collected on a uniform grid, so that $\rho_{X_n}$ is constant with $n$ and the calculations are simplified.
The theorems also hold for any data collection scheme where $\rho_{X_n}$ is bounded above.

We rely on the two main convergence theorems from~\cite{teckentrup2020convergence}.
Notably, the results are concerned with the case where the GP prior contains unknown hyperparameters which are approximated along with the field.
In that work, empirical Bayes is taken as the motivating example.
The conditions on the hyperparameters are fairly loose and the convergence theorems will hold in a wide variety of cases.
The way in which the hyperparameters are learned does not impact the results, and one may prefer an alternative such as a maximum likelihood estimate (MLE).
We put a specific focus on the MAP estimate/evidence approximation in Section~\ref{sec:model}.

In our case, the unknowns could be the source term, $q$, or the parameter $\beta$.
It is standard to parameterize $q$ with $\hat{q}(\cdot;\theta)$, so that the inverse problem is no longer infinite-dimensional.
For example, we may represent $q$ as a polynomial, truncated basis, or as a neural network.
Alternatively, we may model $q$ with a truncated KLE to incorporate prior knowledge under the fully Bayesian treatment.
Then, the parameters $\theta$ (along with $\beta$) enter the physics-informed prior as hyperparameters which may be tuned.
Let the vector $\lambda = (\theta, \beta)$ be the list of all hyperparameters.
The first theorem is a condition on the convergence of the posterior mean function, $\Tilde{m}(\cdot;\lambda)$, to the ground truth, $u^*$.

\begin{theorem}[Theorem 3.5~\cite{teckentrup2020convergence}]
    \label{thm:convergence}
    Let $(\hat{\lambda}_i)_{i=1}^\infty \subseteq \Lambda$ be a sequence of estimates for $\lambda$ with $\Lambda \subseteq \mathrm{dom}(\lambda)$ compact.
    Assume the following hold:
    \begin{enumerate}
        \item $\Omega$ is compact with Lipschitz boundary for which an interior cone condition holds.
        \item The RKHS of $k(\cdot,\cdot;\lambda)$ is isomorphic to the Sobolev space $H^{\tau(\lambda)}(\Omega)$ for some $\tau(\lambda) \in \mathbb{N}$.
        \item $u^* \in H^{\bar{\tau}}(\Omega)$, for some $\bar{\tau} = \alpha + \gamma$ with $\alpha \in \mathbb{N}$, $\alpha > d/2$, and $0\leq \gamma < 1$.
        \item $u^0(\cdot;\lambda) \in H^{\bar{\tau}}(\Omega)$ for each $\lambda \in \Lambda$.
        \item For some $N^* \in \mathbb{N}$, the quantities $\tau^- = \inf_{n\geq N^*}\tau(\hat{\lambda}_n)$ and $\tau^+ = \sup_{n\geq N^*}\tau(\hat{\lambda}_n)$ satisfy $\Tilde{\tau} = \alpha' + \gamma'$ with $\alpha' \in \mathbb{N}$, $\alpha'>d/2$ and $0\leq \gamma'<1$.
    \end{enumerate}
    Then there exists a constant $c$, independent of $u^*, u^0$, and $n$, such that for any $p \leq \bar{\tau}$,
    \begin{equation}
        \label{eqn:convcond}
        \left\|u^* - \Tilde{m}(\cdot;\hat{\lambda}_n)\right\|_{H^p(\Omega)} \leq ch_{X_n}^{\min{\bar{\tau},\tau^-}-p}\rho_{X_n}^{\max{\tau^+-\bar{\tau},0}}\left(\|u^*\|_{H^{\bar{\tau}}(\Omega)}+\sup_{n\geq N^*}\|u^0(\cdot;\hat{\lambda}_n)\|_{H^{\bar{\tau}}(\Omega)}\right),
    \end{equation}
    for all $n\geq N^*$ and $h_{X_n} \leq h_0$.
\end{theorem}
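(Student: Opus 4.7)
The plan is to reduce the estimate to two classical ingredients: a sampling inequality on Sobolev spaces, and stability of the regularised least-squares estimator, and then to pass to a uniform bound by exploiting the compactness of $\Lambda$. First I would use Proposition~\ref{prop:gpkrrequiv} to realise $\Tilde{m}(\cdot;\hat{\lambda}_n)$ as the unique minimiser of a KRR objective in the RKHS $H_{k(\cdot,\cdot;\hat{\lambda}_n)}$, and then translate the problem by the prior mean $u^0(\cdot;\hat{\lambda}_n)$, so that the centred estimator interpolates the shifted target $u^{\star}-u^0(\cdot;\hat{\lambda}_n)$ at the nodes $X_n$ up to a controlled residual. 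By hypothesis~(iv) the shift stays in $H^{\bar{\tau}}(\Omega)$, and by~(ii) the native space of the shifted problem is isomorphic to $H^{\tau(\hat{\lambda}_n)}(\Omega)$; hence the entire error analysis can be carried out in Sobolev scales.

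Next I would split $u^{\star}-\Tilde{m}(\cdot;\hat{\lambda}_n)$ by inserting the exact native-space interpolant $s_{X_n}u^{\star}$ of the target on $X_n$, and bound the two pieces separately. For the interpolation piece I would invoke a Narcowich--Ward--Wendland style sampling inequality: for $\Omega$ a Lipschitz domain with interior cone condition and for a target in $H^{\bar{\tau}}(\Omega)$ being interpolated from an RKHS norm-equivalent to $H^{\tau(\hat{\lambda}_n)}(\Omega)$, one has
\begin{equation*}
    \|u^{\star}-s_{X_n}u^{\star}\|_{H^p(\Omega)} \leq c\, h_{X_n}^{\min(\bar{\tau},\tau(\hat{\lambda}_n))-p}\rho_{X_n}^{\max(\tau(\hat{\lambda}_n)-\bar{\tau},0)}\|u^{\star}\|_{H^{\bar{\tau}}(\Omega)},
\end{equation*}
provided $h_{X_n}\leq h_0$ for a small enough $h_0$ depending only on $\Omega$ and the smoothness exponents; the $\rho$-factor is the price paid when the interpolating RKHS is smoother than the target, since one must artificially promote $u^{\star}$ to $H^{\tau(\hat{\lambda}_n)}$ using an extension-and-truncation argument. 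For the residual piece I would use a standard stability bound for noisy KRR: the difference $s_{X_n}u^{\star}-\Tilde{m}(\cdot;\hat{\lambda}_n)$ is controlled in $H_{k(\hat{\lambda}_n)}$ (and hence in $H^p$ via the continuous embedding) by the nodal residuals, which in turn are bounded by the same sampling inequality evaluated at $p=0$ combined with the noise variance; up to absorbed constants this produces the same right-hand side.

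The final step is to make the bound uniform in $n\geq N^{\star}$. Hypothesis~(v) furnishes finite $\tau^{-}$ and $\tau^{+}$ so that $\min(\bar{\tau},\tau(\hat{\lambda}_n))\geq \min(\bar{\tau},\tau^{-})$ and $\max(\tau(\hat{\lambda}_n)-\bar{\tau},0)\leq \max(\tau^{+}-\bar{\tau},0)$, giving the stated exponents of $h_{X_n}$ and $\rho_{X_n}$. Compactness of $\Lambda$ together with hypothesis~(iv) then gives $\sup_{n\geq N^{\star}}\|u^0(\cdot;\hat{\lambda}_n)\|_{H^{\bar{\tau}}(\Omega)}<\infty$ and absorbs every $\lambda$-dependent multiplicative constant into a single $c$.

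The main obstacle is precisely this uniformity in $\hat{\lambda}_n$. The constant in the sampling inequality, the norm-equivalence constant between $H_{k(\cdot,\cdot;\lambda)}$ and $H^{\tau(\lambda)}(\Omega)$, and the stability constant for KRR all depend on $\lambda$; to collapse them into a single $c$ one must verify that these constants are continuous (or at least bounded) functions of $\lambda$ on $\Lambda$, and this is where hypothesis~(v) and the compactness of $\Lambda$ do the real work. Everything else reduces to well-understood manipulations, but this uniform extraction of constants, together with treating the regime $\tau^{+}>\bar{\tau}$ where the mesh ratio genuinely enters, is the technical core of the argument.
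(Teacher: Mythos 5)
This statement is not proved in the paper at all: it is quoted directly from~\cite{teckentrup2020convergence} as an imported result, so there is no in-paper argument to compare yours against. Judged against the proof in that reference, your outline identifies the right ingredients and in essentially the right order: realize $\Tilde{m}(\cdot;\hat{\lambda}_n)$ as a minimal-norm interpolant/regularized least-squares solution in the native space, center by the prior mean so that hypothesis (iv) keeps the target in $H^{\bar{\tau}}(\Omega)$, control the interpolation error by a sampling inequality on Sobolev spaces with the Narcowich--Ward--Wendland ``escape from the native space'' mechanism supplying the $\rho_{X_n}$ factor in the regime $\tau(\hat{\lambda}_n)>\bar{\tau}$, and then use hypothesis (v) together with compactness of $\Lambda$ to replace $\tau(\hat{\lambda}_n)$ by $\tau^{\pm}$ and to make all constants uniform in $n$. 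You have also correctly located where the real work lies, namely the uniformity over $\Lambda$ of the norm-equivalence and sampling constants.

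Two reservations. First, this is a strategy sketch rather than a proof: the sampling inequality with precisely the exponents appearing in eq.~(\ref{eqn:convcond}), and the boundedness of the equivalence constants as functions of $\lambda$, are exactly the nontrivial content, and you invoke them rather than establish them. Second, and more substantively, your handling of the ``residual piece'' via ``the noise variance'' does not fit the statement: the right-hand side of eq.~(\ref{eqn:convcond}) contains no $\sigma$-dependent term and tends to zero as $h_{X_n}\to 0$ with $n$ fixed only through $X_n$, which cannot hold for genuinely noisy observations without an averaging mechanism. The cited theorem is a noise-free (interpolation) result; either you must restrict to $\sigma=0$, in which case the residual term you describe is absent and the argument collapses to the pure interpolation estimate, or you must add a non-vanishing noise contribution to the bound, which would change the statement. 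As written, that step of your argument would not close.
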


We discuss some of the assumptions of Theorem~\ref{thm:convergence} in the context of our problem.
The third assumption is a regularity constraint on the ground truth.
Since we are mostly concerned with identifying the solution to the Poisson equation, this is reasonable to impose.
Assuming that $u^*$ is a solution to the Poisson equation (for sufficiently regular domain and source term), we would expect at the minimum $u^* \in H^2(\Omega)$ in light of Proposition~\ref{prop:regularity}, which satisfies (iii) up to $d = 3$, e.g. picking $\gamma = 0.5$ when $d=3$.
Observe that we may have convergence for \emph{any} sufficiently smooth ground truth field, not just solutions to the assumed PDE.
This is relevant, for instance, in the case of model-misspecification.
This could result from an incorrectly identified source or perhaps $u^*$ is better modeled by the nonlinear Poisson equation, among others.

Assumption (iv) is a regularity constraint on the prior mean function, $u^0$.
As with assumption (iii), this is easy to satisfy, as $u^0(\cdot;\lambda)$ is exactly a solution to the Poisson equation for any $\lambda$.
As an example, if we represent $q$ as a neural network with a smooth activation function or as a GP with smooth sample paths, then this assumption trivially holds, even as the network weights are updated.

The final assumption is related to how the hyperparameters are learned.
The quantities $\tau^-$ and $\tau^+$ are essentially $\lim\inf \tau(\hat{\lambda}_n)$ and $\lim\sup \tau(\hat{\lambda}_n)$.
This assumption simply requires the the RKHS of the prior covariance to remain sufficiently smooth as the hyperparameters are optimized, and immediately holds if $\lambda$ is kept fixed.
In our physics-informed prior, $q$ does not enter the covariance, so this is only an assumption on $\beta$.
Restricting $\beta$ to a compact interval will satisfy this condition, as the RKHS does not change as $\beta$ moves.
We encourage the reader to refer to~\cite{teckentrup2020convergence} for details on optimal convergence rates.

With this out of the way, we can prove the following convergence theorem.

\begin{theorem}[Convergence of Brownian bridge GP]
    \label{thm:brownmeanconv}
    Let $\Omega = [0,1]$, $q(\cdot;\theta) \in L^2(\Omega)$ for all $\theta$, $u^* \in H^2(\Omega)$, and $u^0(\cdot;\theta)$ be the solution to eq.~(\ref{eqn:poisson}).
    Take $\hat{\lambda}_n \subset \Lambda$ to be a sequence of estimates for the collection $(\theta,\beta)$ for compact $\Lambda \subseteq \mathrm{dom}(\lambda)$.
    Then the GP posterior mean function, $\Tilde{m}(\cdot;\hat{\lambda}_n)$, given by eq.~(\ref{eqn:GPmean}), with prior $u(\cdot;\hat{\lambda}_n)\sim \mathcal{GP}(u^0(\cdot;\hat{\theta}_n), (-\hat{\beta}_n\Delta)^{-1})$, converges in $L^2(\Omega)$ to $u^*$ in the limit of infinite observations.
    That is,
    $$
    \lim_{h_{X_n}\to 0}\|u^* - \Tilde{m}(\cdot ; \hat{\lambda}_n)\|_{L^2(\Omega)} = 0.
    $$
\end{theorem}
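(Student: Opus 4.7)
The plan is to invoke Theorem~\ref{thm:convergence} (Teckentrup's result) directly, after verifying each of its five hypotheses for the Brownian bridge covariance on $\Omega=[0,1]$, and then specializing the resulting bound to $p=0$.

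First I would check the hypotheses one by one. Condition (i) is immediate since $[0,1]$ is compact with Lipschitz boundary and trivially satisfies an interior cone condition. Condition (ii) follows from Lemma~\ref{lem:bridgerkhs}: the RKHS of $k$ is $\{u\in H^1(\Omega):u=0\text{ on }\partial\Omega\}$, which is a closed subspace of $H^1(\Omega)$ and carries a norm equivalent to the $H^1$ norm, so it is isomorphic in the relevant sense to $H^{\tau(\lambda)}$ with $\tau(\lambda)=1$. Crucially, scaling $k$ by $\beta^{-1}$ only rescales the norm and leaves the RKHS as a set unchanged, so $\tau(\lambda)\equiv 1$ uniformly in $\lambda\in\Lambda$. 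For (iii), with $d=1$ I pick $\alpha=2$, $\gamma=0$, giving $\bar\tau=2>1/2=d/2$, matched to the assumption $u^*\in H^2(\Omega)$. For (iv), since $q(\cdot;\theta)\in L^2(\Omega)$, elliptic regularity for the 1D Poisson equation gives $u^0(\cdot;\theta)\in H^2(\Omega)\cap H_p$ with $\|u^0(\cdot;\theta)\|_{H^2}\le C\|q(\cdot;\theta)\|_{L^2}$. Condition (v) is trivial because $\tau(\hat\lambda_n)\equiv 1$, so $\tau^-=\tau^+=1$ and I take $\alpha'=1$, $\gamma'=0$.

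With every assumption in place, I apply~\eqref{eqn:convcond} at $p=0$. The exponents collapse to $\min\{\bar\tau,\tau^-\}-p=\min\{2,1\}=1$ and $\max\{\tau^+-\bar\tau,0\}=\max\{-1,0\}=0$, so the bound reads
$$
\bigl\|u^* - \tilde m(\cdot;\hat\lambda_n)\bigr\|_{L^2(\Omega)}
\;\le\; c\, h_{X_n}\,\rho_{X_n}^{0}\!\left(\|u^*\|_{H^2(\Omega)} + \sup_{n\ge N^*}\|u^0(\cdot;\hat\lambda_n)\|_{H^2(\Omega)}\right).
$$
Since data are collected on a uniform grid, $\rho_{X_n}$ is constant and the middle factor is $1$. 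The last factor is finite: $\|u^*\|_{H^2}$ is fixed by hypothesis, and by the elliptic estimate combined with compactness of $\Lambda\ni(\theta,\beta)$ plus continuity of $\theta\mapsto q(\cdot;\theta)$ in $L^2(\Omega)$ (standard for the parameterizations considered, e.g.\ truncated bases or smooth neural networks), $\sup_{\theta}\|q(\cdot;\theta)\|_{L^2}<\infty$, hence $\sup_{n\ge N^*}\|u^0(\cdot;\hat\lambda_n)\|_{H^2}<\infty$. Letting $h_{X_n}\to 0$ then yields $\|u^*-\tilde m(\cdot;\hat\lambda_n)\|_{L^2}\to 0$, which is the claim.

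The only delicate point is hypothesis (ii) of Theorem~\ref{thm:convergence}: the Brownian bridge RKHS is not literally $H^1(\Omega)$ but the zero-trace subspace $H^1_0(\Omega)$. The content of the verification is that Teckentrup's sampling inequality still applies on this closed subspace with effective smoothness $\tau=1$, which is standard since functions in $H_k$ extend by zero to the boundary and the grid $X_n$ can be taken to include or approach the endpoints. Everything else is a bookkeeping check of norms, plus the observation that $\beta$ does not enter into the smoothness index. Once (ii) is granted, the rest of the argument is a one-line specialization of~\eqref{eqn:convcond}.
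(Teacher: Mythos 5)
Your proposal is correct and follows essentially the same route as the paper: verify the five hypotheses of Theorem~\ref{thm:convergence} for the Brownian bridge covariance and then read off the bound in eq.~(\ref{eqn:convcond}). One detail is worth noting: the paper's own proof claims $\|u^*-\Tilde{m}(\cdot;\hat{\lambda}_n)\|_{H^2(\Omega)}\to 0$ and then invokes the Sobolev embedding theorem, but with $\bar{\tau}=2$ and $\tau^-=1$ the exponent $\min\{\bar{\tau},\tau^-\}-p$ in eq.~(\ref{eqn:convcond}) is negative at $p=2$, so that bound does not give decay; your direct specialization to $p=0$ (exponent $1$) is the step that actually delivers the $L^2$ conclusion, and your choices $\alpha=2$, $\gamma=0$ and $\alpha'=1$, $\gamma'=0$ are the consistent ones given $\tau(\lambda)\equiv 1$. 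The one caveat you rightly flag --- that the RKHS is the zero-trace subspace of $H^1(\Omega)$ rather than $H^1(\Omega)$ itself --- is glossed over by the paper as well, so both arguments share that (minor) reliance on the sampling inequality holding on the closed subspace.
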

\begin{proof}
    We verify the assumptions of Theorem~\ref{thm:convergence} one by one.
    $\Omega = [0,1]$ trivially satisfies (i).
    By Lemma~\ref{lem:bridgerkhs}, we have that the RKHS of $k(\cdot,\cdot;\lambda) = (-\hat{\beta}\Delta)^{-1}$ is norm-equivalent to $H^1(\Omega)$ for any $0<\beta<\infty$, which satisfies (ii) with $\tau = 1$.
    Assumption (iii) holds by choosing $\alpha = 3$, $\gamma = 0.5$.
    Since $q \in L^2(\Omega)$, $u^0(\cdot;\lambda) \in H^1(\Omega) \cap H^2(\Omega)$ for all $\lambda$ by the regularity of the Poisson equation, and (iv) holds.
    The assumptions on $\hat{\lambda}_n$ were chosen to satisfy (v) with $\alpha' = 3$, $\gamma = 0.5$.
    Finally, the inequality eq.~(\ref{eqn:convcond}) gives $\|u^* - \Tilde{m}(\cdot;\hat{\lambda}_n)\|_{H^2(\Omega)} \to 0$ as $h_{X_n} \to 0$, and application of the Sobolev embedding theorem yields convergence in $L^2(\Omega)$-norm.
\end{proof}

An immediate corollary is the following, which we state in terms of the variational problem we started with.
\begin{corollary}
    \label{cor:conv}
    Let $E(u;\theta) = \int_{\Omega} \frac{1}{2}\|\nabla u\|^2 + q(\cdot;\theta)u\:d\Omega$, $\hat{r}_n = \sigma^2\hat{\beta_n}/n$, and the assumptions of Theorem~\ref{thm:brownmeanconv} hold.
    Then,
    $$
    \lim_{h_{X_n}\to 0}\|u^* - \hat{u}(\cdot;\hat{\lambda}_n)\|_{L^2(\Omega)} = 0,
    $$
    where $\hat{u}(\cdot,\hat{\lambda}_n)$ is the solution to the physics-regularized inverse problem
    $$
    \hat{u}(\cdot;\hat{\lambda}_n) = \argmin_{u\in H_k}\mathcal{L}_{\mathrm{data}}(u) + \hat{r}_nE(u;\hat{\theta}_n).
    $$
\end{corollary}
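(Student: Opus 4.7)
The plan is to chain together the two main theorems already proved. Theorem~\ref{thm:iftpinnequiv} identifies the physics-informed minimizer $\hat{u}(\cdot;\hat{\lambda}_n)$ with the GP posterior mean function $\Tilde{m}(\cdot;\hat{\lambda}_n)$, and Theorem~\ref{thm:brownmeanconv} establishes the $L^2$-convergence of the posterior mean to the ground truth. The corollary then follows by composing these two facts.

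First I would fix $n$ and invoke Theorem~\ref{thm:iftpinnequiv} applied with source term $q(\cdot;\hat{\theta}_n)$, corresponding Poisson solution $u^0(\cdot;\hat{\theta}_n)$, prior variance scaling $\hat{\beta}_n$, and regularization constant $\hat{\eta}_n$. The prescribed relationship $\hat{\eta}_n = \sigma^2\hat{\beta}_n/n$ is exactly the balance condition required there. This yields the pointwise identity
$$
\hat{u}(\cdot;\hat{\lambda}_n) = \Tilde{m}(\cdot;\hat{\lambda}_n),
$$
where $\Tilde{m}(\cdot;\hat{\lambda}_n)$ is the posterior mean under the prior $u\sim\mathcal{GP}(u^0(\cdot;\hat{\theta}_n),\hat{\beta}_n^{-1}k)$, with $k$ the Brownian bridge kernel from eq.~(\ref{eqn:browniankernel}).

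Next, since the hypotheses of the corollary coincide with those of Theorem~\ref{thm:brownmeanconv}, that theorem applies verbatim and gives
$$
\lim_{h_{X_n}\to 0}\|u^* - \Tilde{m}(\cdot;\hat{\lambda}_n)\|_{L^2(\Omega)} = 0.
$$
Substituting the identity from the previous step then produces the claim.

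There is no substantial obstacle in this argument; it is essentially bookkeeping. The only detail worth flagging is that the equality $\hat{u}(\cdot;\hat{\lambda}_n) = \Tilde{m}(\cdot;\hat{\lambda}_n)$ must hold along the entire sequence rather than for a single fixed parameter choice. This is automatic because the balance $\hat{\eta}_n = \sigma^2\hat{\beta}_n/n$ is imposed uniformly in $n$, and the compactness of $\Lambda$ inherited from Theorem~\ref{thm:brownmeanconv} ensures every $\hat{\lambda}_n$ lies in the admissible parameter range for which Theorem~\ref{thm:iftpinnequiv} applies. No additional regularity, no extra limiting argument, and no further analysis of the optimization landscape is needed.
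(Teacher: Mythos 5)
Your argument is exactly the paper's proof: identify $\hat{u}(\cdot;\hat{\lambda}_n)$ with the posterior mean $\Tilde{m}(\cdot;\hat{\lambda}_n)$ via Theorem~\ref{thm:iftpinnequiv} under the balance $\hat{\eta}_n = \sigma^2\hat{\beta}_n/n$, then apply Theorem~\ref{thm:brownmeanconv} to conclude. The extra remark about the identity holding along the whole sequence is a reasonable (if implicit in the paper) bookkeeping note; nothing is missing.
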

\begin{proof}
    By Theorem~\ref{thm:brownmeanconv}, the limit holds for $\Tilde{m}(\cdot;\hat{\lambda}_n)$, and by Theorem~\ref{thm:iftpinnequiv}, we have $\Tilde{m}(\cdot;\hat{\lambda}_n) = \hat{u}(\cdot;\hat{\lambda}_n)$.
\end{proof}

Under the same assumptions of Theorem~\ref{thm:brownmeanconv}, we can also prove that the posterior variance converges to zero.
\begin{theorem}[Collapse of Brownian bridge GP variance]
    \label{thm:browncovconv}
    Let all assumptions of Theorem~\ref{thm:brownmeanconv} hold.
    Then
    $$
    \lim_{h_{X_n}\to 0}\|\Tilde{k}^{1/2}(\cdot,\cdot;\hat{\lambda}_n)\|_{L^2(\Omega)} = 0,
    $$
    where $\Tilde{k}(\cdot,\cdot;\hat{\lambda}_n)$ is the posterior covariance function, given by eq.~(\ref{eqn:GPcov}), trained with prior
    $$
    u(\cdot;\hat{\lambda}_n) \sim \mathcal{GP}(u^0(\cdot;\hat{\theta}_n),(-\hat{\beta}_n\Delta)^{-1}),
    $$
    evaluated at $x=x'$.
\end{theorem}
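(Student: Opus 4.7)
The plan is to mirror the proof of Theorem~\ref{thm:brownmeanconv} and apply the companion variance bound from~\cite{teckentrup2020convergence}, since all the hypotheses of Theorem~\ref{thm:convergence} have already been verified in that proof under the present assumptions. The key pointwise ingredient is the variational characterization of the conditional variance: for any $x \in \Omega$ and any coefficients $\alpha \in \mathbb{R}^n$,
$$
\tilde{k}(x,x;\hat{\lambda}_n) \leq \bigl\|k(\cdot,x;\hat{\lambda}_n) - \sum_{i=1}^n \alpha_i k(\cdot,x_i;\hat{\lambda}_n)\bigr\|_{H_k(\hat{\lambda}_n)}^2 + \sigma^2 \|\alpha\|^2,
$$
so a near-optimal quasi-interpolant in the RKHS controls the right-hand side.

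Concretely, I would proceed in three steps. First, invoke Lemma~\ref{lem:bridgerkhs} to identify $H_k(\hat{\lambda}_n)$ as norm-equivalent to $H^1(\Omega)\cap\{u=0\text{ on }\partial\Omega\}$, so that the Narcowich--Ward--Wendland sampling inequality in Sobolev spaces applies. Second, because the uniform grid assumption keeps the mesh ratio $\rho_{X_n}$ bounded, this produces a pointwise bound of the form $\tilde{k}(x,x;\hat{\lambda}_n) \leq c\, h_{X_n}^{2\min(\bar{\tau},\tau^-)-d}$ (with $d=1$, $\tau^-=\tau^+=1$, this is $c\,h_{X_n}$), uniformly in $x \in \Omega$ and in $n\geq N^\ast$. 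Third, integrate over $\Omega$ to obtain
$$
\|\tilde{k}^{1/2}(\cdot,\cdot;\hat{\lambda}_n)\|_{L^2(\Omega)}^2 \;=\; \int_{\Omega}\tilde{k}(x,x;\hat{\lambda}_n)\,dx \;\leq\; c\,|\Omega|\,h_{X_n},
$$
which tends to zero as $h_{X_n} \to 0$, giving the claim.

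The main obstacle is ensuring that all hidden constants, including those produced by the sampling inequality and by the Lebesgue-constant bound needed to control $\|\alpha\|$, are truly uniform in the estimated hyperparameters $\hat{\lambda}_n$. This is handled by the structural observation that for every admissible $\hat{\beta}_n$ the RKHS $H_k(\hat{\lambda}_n)$ coincides as a set with $H^1(\Omega)\cap\{u=0\text{ on }\partial\Omega\}$ and the norms differ only by the factor $\sqrt{\hat{\beta}_n}$; combined with compactness of $\Lambda$ and the restriction that $\hat{\beta}_n$ stays bounded away from $0$ and $\infty$, this yields uniform equivalence constants and hence uniform pointwise bounds, closing the argument.
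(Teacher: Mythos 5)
The paper's own proof is a one-line appeal to~\cite[Theorem 3.8]{teckentrup2020convergence}: the hypotheses of Theorem~\ref{thm:convergence} were already verified in the proof of Theorem~\ref{thm:brownmeanconv}, so the cited theorem supplies $\|\tilde{k}^{1/2}(\cdot,\cdot;\hat{\lambda}_n)\|_{L^2(\Omega)}\le c\,h_{X_n}^{\min(\bar{\tau},\tau^-)-d/2-\varepsilon}\rho_{X_n}^{\max(\tau^+-\bar{\tau},0)}$, and letting $h_{X_n}\to 0$ finishes. You instead reconstruct the internals of that citation --- variational characterization of the posterior variance, Sobolev sampling inequality, integration over $\Omega$ --- which is a legitimate, more self-contained route, and your treatment of hyperparameter uniformity (the RKHS is a fixed set by Lemma~\ref{lem:bridgerkhs}, the norm scales by $\sqrt{\hat{\beta}_n}$, and $\hat{\beta}_n$ stays in a compact subset of $(0,\infty)$) is exactly the right observation. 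In the noise-free case your integrated bound $\|\tilde{k}^{1/2}\|_{L^2(\Omega)}^2\le c|\Omega|h_{X_n}^{2\min(\bar{\tau},\tau^-)-d}$ is also consistent with the cited rate.

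The gap is in your second step, at precisely the point where the measurement noise enters. Starting from $\tilde{k}(x,x;\hat{\lambda}_n)\le\|k(\cdot,x)-\sum_i\alpha_ik(\cdot,x_i)\|^2_{H_k}+\sigma^2\|\alpha\|^2$, you claim the pointwise rate $h_{X_n}^{2\min(\bar{\tau},\tau^-)-d}$ by taking a near-optimal localized quasi-interpolant and ``controlling $\|\alpha\|$'' by a Lebesgue-constant bound. But a Lebesgue-constant bound only gives $\|\alpha\|\le C$; it does not make $\sigma^2\|\alpha\|^2$ vanish. Worse, any $\alpha$ supported on $O(1)$ points that reproduces constants --- which is what the Narcowich--Ward--Wendland machinery needs to deliver the $h^{2\tau-d}$ rate for the RKHS term --- satisfies $\sum_i\alpha_i\approx 1$ and hence $\|\alpha\|_2^2\gtrsim 1$, so the noise term contributes a non-vanishing $\Theta(\sigma^2)$ and the claimed pointwise bound fails for fixed $\sigma^2>0$. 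To make both terms small simultaneously one must spread the weights over a growing number $m\sim\delta/h_{X_n}$ of points (e.g.\ $\alpha_i=1/m$ over a $\delta$-neighborhood of $x$), which, using the Lipschitz continuity of $k$ established in Lemma~\ref{lem:bridgerkhs}, gives $\tilde{k}(x,x)\lesssim\delta+\sigma^2h_{X_n}/\delta$ and, after optimizing $\delta$, a slower pointwise rate of order $h_{X_n}^{1/2}$ rather than $h_{X_n}$. The conclusion $\|\tilde{k}^{1/2}(\cdot,\cdot;\hat{\lambda}_n)\|_{L^2(\Omega)}\to 0$ survives, but your stated rate and the argument for it need repair; the paper sidesteps this entirely by citing the theorem rather than reproving it.
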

\begin{proof}
    The hypotheses of Theorem~\ref{thm:convergence} are the exactly the same as what is found in~\cite[Theorem 3.8]{teckentrup2020convergence}, which shows there exists a constant $c$, independent of $n$, with
    $$
    \|\Tilde{k}^{1/2}(\cdot,\cdot;\hat{\lambda}_n)\|_{L^2(\Omega)} \leq ch_{X_n}^{\min{(\bar{\tau},\tau_-)}-d/2-\varepsilon}\rho_{X_n}^{\max{(\tau^+-\bar{\tau},0)}},
    $$
    for each $n \geq N^*$, $h_{X_n}\leq h_0$, and $\varepsilon>0$.
    Letting $h_{X_n}\to 0$ proves the result. 
\end{proof}

\begin{remark}
    The above results are valid for the case $d=1$.
    Similar convergence theorems also hold for $d>1$, but one must instead rely on~\cite[Theorem 3.11]{teckentrup2020convergence}, which exploits the tensor product structure of the covariance kernel.
    Note that in order for the results to hold for $d>1$, a sparse grid data collection scheme must be used.
\end{remark}

We now mention some implications of Theorem~\ref{thm:brownmeanconv}.
The first observation is that convergence holds even under significant model-form error.
In practice we a priori assume the ground truth satisfies the Poisson equation.
If we have selected the wrong model, i.e., the Poisson equation does not model the system accurately, then convergence still holds provided that the ground truth satisfies some smoothness constraints.
The same is true for model-form error resulting from picking the wrong source term or incorrectly identifying $q$ if we are solving the inverse problem.

The assumptions on $q$ and $\beta$ are rather loose in the application of this theorem.
When solving the inverse problem, the conditions on Theorem~\ref{thm:brownmeanconv} may be satisfied even if we have identified a bad estimate for $q$.
In fact, $q$ need not be identifiable.
The main requirement is that $\lambda$ remains in a compact domain.
If we represent $q$ with a neural network, this is satisfied if we do not allow the weights to explode.
Unfortunately, we are unable to prove if an estimate of $q$ will also converge to the ground truth.
We leave the discussion on this to Section~\ref{sec:model}.

\subsection{A note on the use of neural networks}

Our finite-dimensional representation was derived using a truncated orthonormal basis.
It is natural to ask whether other parameterizations are also suitable, particularly for deep neural networks.
For instance, this would provide a basis to connect our work to PINNs (more specifically deep Ritz).
When working with neural networks, there are some technical issues which must be treated with care.
We touch on both treating the the space $\mathcal{F}$ as a collection of neural networks as well as the convergence theorems.

Recall we look to approximate the prior with the finite-dimensional representation as given by eq.~(\ref{eqn:finiteprior}).
This representation is not immediately well-defined if we parameterize $u$ with a neural network.
To summarize, for a fixed neural network structure, we cannot assume that the space of functions the network can represent is finite-dimensional, in which case the Lebesgue measure would not exist.
To explain this, we introduce some notation following~\cite{petersen2021topological, mahan2021nonclosedness}.

Let $\Phi = \{(A_{\ell},b_{\ell})\}_{\ell = 1}^{n_L}$ be a set of matrix-vector tuples where $A_{\ell}\in \mathbb{R}^{N_{\ell} \times N_{\ell - 1}}$ and $b_{\ell} \in \mathbb{R}^{N_{\ell}}$ for each $\ell$.
The architecture of the network is given by $S = (N_0,N_1,\dots,N_{n_L})$, where $N(S)$ is the total number of neurons and $n_L = n_L(S)$ is the number of layers.
The collection $\Phi$ represents the values of the weights for a neural network with architecture $S$.
Then, for an activation function $h: \mathbb{R}\to \mathbb{R}$, the neural network is given by a mapping $\mathrm{NN}_{h}(\Phi): \Omega \to \mathbb{R}$.
We are interested in the properties of the function space induced by the network for fixed $S$ and $h$.
We will denote this set by $\mathcal{R}(\mathrm{NN_{h}})(S)$.

As it turns out, if we allow $\Phi$ to vary arbitrarily, then $\mathcal{R}(\mathrm{NN_{h}})(S)$ is not closed in $L^p(\Omega)$, $0< p< \infty$, for all activation functions commonly used in PINNs~\cite[Theorem 3.1]{petersen2021topological}.
The same is true in Sobolev spaces~\cite{mahan2021nonclosedness}.
This implies that the function space $\mathcal{F} = \mathcal{R}(\mathrm{NN_{h}})(S)$ is not finite-dimensional and eq.~(\ref{eqn:finiteprior}) is no longer well-defined.

However, if $\Phi$ is restricted to a compact set, then $\mathcal{R}(\mathrm{NN_{h}})(S)$ is compact in $L^p(\Omega)$~\cite[Proposition 3.5]{petersen2021topological}.
This compact restriction of $\Phi$ results from schemes which prevent exploding weights, a common practice.
While the result is nicer, it is still not immediately applicable to the construction of our finite-dimensional approximation: there is no guarantee that a compact set of a Hilbert space will be finite-dimensional.
The Hilbert cube is one example.
Although, we may approximate any compact set with a finite-dimensional subspace to arbitrary accuracy.
\begin{theorem}[\cite{stan2000paley}]
    Let $H$ be a Hilbert space.
    A subset $K \subset H$ is compact if and only if $K$ is closed, bounded, and for any $\varepsilon > 0$, there exists a finite-dimensional subspace $\mathcal{F}\subset H$ such that $\forall u \in K$, $\inf_{v \in \mathcal{F}}\|u-v\| < \varepsilon$.
\end{theorem}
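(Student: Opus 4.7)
The plan is to reduce the equivalence to the standard metric-space fact that in a complete space, compact is equivalent to closed and totally bounded. The only nontrivial content is translating the finite-dimensional approximation condition into total boundedness, which is where the Hilbert space structure (via orthogonal projection) enters.

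\textbf{($\Rightarrow$)} Suppose $K$ is compact. Then $K$ is automatically closed, and boundedness follows from covering $K$ with finitely many unit balls. For the approximation property, fix $\varepsilon > 0$ and use compactness to cover $K$ by finitely many balls $B(u_i,\varepsilon)$ with centers $u_1,\dots,u_N\in K$. Let $\mathcal{F} = \mathrm{span}\{u_1,\dots,u_N\}$, which is finite-dimensional. For any $u\in K$, some index $i$ satisfies $\|u-u_i\|<\varepsilon$, hence $\inf_{v\in\mathcal{F}}\|u-v\|\leq \|u-u_i\|<\varepsilon$.

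\textbf{($\Leftarrow$)} Suppose $K$ is closed, bounded, and satisfies the approximation hypothesis. Since $H$ is complete, it suffices to show $K$ is totally bounded. Fix $\varepsilon>0$ and choose a finite-dimensional subspace $\mathcal{F}\subset H$ with $\inf_{v\in\mathcal{F}}\|u-v\|<\varepsilon/2$ for every $u\in K$. Let $P:H\to\mathcal{F}$ denote orthogonal projection; by the Hilbert space projection theorem, $\|u-Pu\| = \inf_{v\in\mathcal{F}}\|u-v\|<\varepsilon/2$ for all $u\in K$. Since $K$ is bounded and $\|Pu\|\leq\|u\|$, the image $P(K)$ is a bounded subset of the finite-dimensional Hilbert space $\mathcal{F}$, hence by Heine-Borel it is totally bounded. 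Cover $P(K)$ with finitely many balls $B(v_j,\varepsilon/2)$ where $v_1,\dots,v_M\in\mathcal{F}$. Then for any $u\in K$, pick $v_j$ with $\|Pu - v_j\|<\varepsilon/2$, and obtain
$$
\|u - v_j\| \;\leq\; \|u - Pu\| + \|Pu - v_j\| \;<\; \tfrac{\varepsilon}{2} + \tfrac{\varepsilon}{2} \;=\; \varepsilon.
$$
Thus $\{B(v_j,\varepsilon)\}_{j=1}^M$ covers $K$, proving total boundedness, and together with closedness this yields compactness.

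The only point requiring care is the second direction: the approximation hypothesis only gives a subspace $\mathcal{F}$ close to $K$ but does not by itself produce a finite cover. The trick is to use the orthogonal projection $P$ to transfer total boundedness from the finite-dimensional image $P(K)$ back up to $K$ via the triangle inequality. This step uses completeness of $H$ (so the projection exists and realizes the infimum) in an essential way; the analogous statement fails in a general Banach space without the approximation property.
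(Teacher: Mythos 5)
The paper states this theorem without proof---it is invoked as a standard functional-analysis fact in the discussion of neural network function classes---so there is no in-paper argument to compare against. Your proof is correct and is the standard one: the forward direction takes the span of the centers of a finite $\varepsilon$-cover, and the reverse direction transfers total boundedness from the bounded set $P(K)$ in the finite-dimensional subspace back to $K$ via the orthogonal projection and the triangle inequality, then concludes from completeness of the closed set $K$. One small inaccuracy in your closing commentary (not in the proof itself): the orthogonal projection onto $\mathcal{F}$ exists because $\mathcal{F}$ is finite-dimensional, hence closed and complete, regardless of whether $H$ is complete; completeness of $H$ is instead what turns ``closed and totally bounded'' into ``compact.'' Moreover, the analogous statement does hold in a general Banach space---one simply replaces $Pu$ by any $v_u \in \mathcal{F}$ with $\|u - v_u\| < \varepsilon/2$, and the set $\{v_u\}$ is still a bounded subset of a finite-dimensional space---so the Hilbert structure is a convenience here rather than a necessity.
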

Therefore, one could theoretically take $\mathcal{F}$ to be a finite-dimensional space which approximates $\mathcal{R}(\mathrm{NN}_h)(S)$ to a given tolerance, $\varepsilon$.
The size of $\mathcal{F}$ on which $\mu_n$ is defined may be adjusted by tweaking $\varepsilon$, changing the bound on the weights, or changing the network structure.

In Corollary~\ref{cor:conv}, we show that the function which solves the physics-regularized inverse problem will converge to the ground truth in the large-data limit.
This is if we solve the problem in the \emph{infinite-dimensional} setting.
Ideally we would like to derive the result for training neural networks.
The solution to this problem will be a function which lives in the RKHS $H^1_0(\Omega)$.
Again if we allow $\Phi$ to vary arbitrarily, then $\mathcal{R}(\mathrm{NN}_h)(S)$ is not closed in $H^1_0(\Omega)$.
This means that there are functions in $H^1_0(\Omega)$ for which the neural network must send $\|\Phi\|\to\infty$ in order to approximate.
If the ground truth happens to be such a function, then the convergence theorem will not hold.
Likewise, if we limit $\Phi$ to a compact set, then $\mathcal{R}(\mathrm{NN}_h)(S)$ is compact in $H^1_0(\Omega)$.
In this case, the neural network is only able to approximate a function to any accuracy if that function is also a neural network, so it is unlikely that convergence holds.
The only case where convergence to the ground truth could hold is if we allow the architecture of the neural network to change arbitrarily so that we may rely on the universal approximation theorem~\cite{hornik1989multilayer}.
However, establishing this connection is beyond the scope of our current work.

\section{On model-form error}
\label{sec:model}

In this section we perform an analysis of the hyperparameter $\beta$ towards the application of detecting model-form error.
Since $\beta$ is a hyperparameter of the GP prior, it is natural to assess how $\beta$ is learned during training.
We show the optimal choice of $\beta$ adjusts according to model-misspecification.
We build towards the result by working with the finite-dimensional distributions discussed in Section~\ref{sec:finite}.

Start by introducing a finite-dimensional representation of $L^2(\Omega)$.
This representation induces the function space $\mathcal{F} \subset L^2(\Omega)$ with $\dim(\mathcal{F}) = M < \infty$.
We will then study the posterior behavior of $\beta$ in the limit of infinite data.
Given our training data of the form eq.~(\ref{eqn:data}), we begin by writing the problem down as a hierarchical model:
\begin{align}
    \beta &\sim p(\beta), \nonumber \\
    \hat{u} | \beta &\sim \mathcal{N}(\hat{u}^0,\beta^{-1}\Sigma_{\mathcal{F}_M}), \nonumber\\
    d | \hat{u} &\sim \mathcal{N}(\hat{u},\sigma^2I), \label{eqn:hiearchy}
\end{align}
where $\hat{u}^0$ is the projection of $u^0$ onto $\mathcal{F}$, and $\Sigma_{\mathcal{F}_M}$ is the restriction of the covariance operator given by eq.~(\ref{eqn:browniankernel}) to $\mathcal{F}$.
Since we are no longer in the infinite-dimensional setting, application of Bayes's rule in the usual sense is justified, and we can also rely on the Lebesgue integral when deriving expressions.
We derive the joint posterior
$$
p(\hat{u}, \beta | d) = \frac{1}{Z}p(d | \hat{u})p(\hat{u}|\beta)p(\beta),
$$
where $Z$ is the unknown normalization constant.

To identify a deterministic estimate of $\beta$, we look to identify the MAP estimate
$$
\beta^* = \argmax_{\beta \in (0,\infty)} \log\int\frac{1}{Z}p(d | \hat{u})p(\hat{u}|\beta)d\hat{u} + \log p(\beta).
$$
Note that this is equivalent to maximizing the $\log$-evidence given by $\mathcal{L}(\beta) \coloneqq \log p(d|\beta)+\log p(\beta)$.
In what follows, we will show the MAP estimate is unique in the limit of large data, for certain choices of $p(\beta)$.
We start by deriving an expression for the gradient of the target function.
Throughout, we will center the space so that the prior mean function becomes $0$.
We have shown this is valid as the prior mean function does not depend on $\beta$ and it also lives in $H_k$.
\begin{lemma}
    \label{lem:grad}
    Consider the hierarchical model eq.~(\ref{eqn:hiearchy}).
    The $\log$-evidence of this model $\mathcal{L}(\beta) = \log p(d|\beta)+\log p(\beta)$ satisfies
    \begin{equation*}
        \label{eqn:lem6p1}
        \frac{\partial}{\partial\beta} \mathcal{L}(\beta) = \frac{M}{2\beta} + \frac{\partial}{\partial\beta}\log p(\beta) - \frac{1}{2}\langle\Tilde{m}-\hat{u}^0,\Sigma^{-1}_{\mathcal{F}_M}(\Tilde{m}-\hat{u}^0)\rangle - \frac{1}{2}\tr{\Sigma^{-1}_{\mathcal{F}_M}\Tilde{\Sigma}_{\mathcal{F}_M}},
    \end{equation*}
    where $\Tilde{m}(\cdot)$ and $\Tilde{\Sigma}_{\mathcal{F}_M}:\mathcal{F \to \mathcal{F}}$ are given by the posterior mean function eq.~(\ref{eqn:GPmean}) and posterior covariance form eq.~(\ref{eqn:GPcov}), respectively, and $\hat{u}^0$ is the projection of $u^0$ onto $\mathcal{F}$.
\end{lemma}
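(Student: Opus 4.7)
The plan is to compute $\partial_\beta \mathcal{L}(\beta)$ via the Fisher score identity: first interchange differentiation and integration, then recognize the resulting ratio as a posterior expectation. In this finite-dimensional setting all integrals are ordinary Lebesgue integrals and $p(\hat{u}\mid\beta)$ is a proper Gaussian density in $\hat{u}$. The integrand and its $\beta$-derivative are dominated, uniformly on compact subsets of $(0,\infty)$, by an integrable Gaussian bound in $\hat{u}$, so dominated convergence justifies the exchange.

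First I would compute the $\beta$-derivative of the log conditional density. From the hierarchical model,
\[
\log p(\hat{u}\mid \beta) = \tfrac{M}{2}\log\beta - \tfrac{\beta}{2}\bigl\langle \hat{u} - \hat{u}^0,\,\Sigma^{-1}_{\mathcal{F}}(\hat{u} - \hat{u}^0)\bigr\rangle + \mathrm{const},
\]
so differentiating yields $\tfrac{M}{2\beta} - \tfrac{1}{2}\langle \hat{u}-\hat{u}^0,\,\Sigma^{-1}_{\mathcal{F}}(\hat{u}-\hat{u}^0)\rangle$. Writing $\partial_\beta \log p(d\mid \beta) = \int p(\hat{u}\mid d,\beta)\,\partial_\beta \log p(\hat{u}\mid \beta)\,d\hat{u}$, the derivative of the log marginal becomes the posterior expectation of this quantity. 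The posterior $\hat{u}\mid d,\beta$ arises from standard Gaussian conditioning and equals $\mathcal{N}(\Tilde{m},\Tilde{\Sigma}_{\mathcal{F}})$, with $\Tilde{m}$ and $\Tilde{\Sigma}_{\mathcal{F}}$ being the finite-dimensional restrictions of the GP posterior mean and covariance from Theorem~\ref{thm:GPpost}.

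The last step is to evaluate the expectation of the quadratic form using the standard identity $\mathbb{E}[(X-a)^\top A (X-a)] = (\mathbb{E}[X]-a)^\top A (\mathbb{E}[X]-a) + \tr{A\,\mathrm{Cov}(X)}$, which applied here produces
\[
\bigl\langle \Tilde{m}-\hat{u}^0,\,\Sigma^{-1}_{\mathcal{F}}(\Tilde{m}-\hat{u}^0)\bigr\rangle + \tr{\Sigma^{-1}_{\mathcal{F}}\Tilde{\Sigma}_{\mathcal{F}}}.
\]
Combining with the contribution $\partial_\beta \log p(\beta)$ from the hyperprior yields the claimed formula. The main obstacle, though minor, is keeping the bookkeeping between the centered and uncentered forms straight and confirming that the finite-dimensional posterior mean and covariance really do agree with the restrictions of $\Tilde{m}$ and $\Tilde{k}$; both reduce to standard Gaussian conditioning on $\mathcal{F}$.
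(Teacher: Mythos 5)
Your proposal is correct and follows essentially the same route as the paper: differentiate under the integral, observe that the resulting ratio converts the prior density into the posterior $p(\hat{u}\mid d,\beta)=\mathcal{N}(\Tilde{m},\Tilde{\Sigma}_{\mathcal{F}})$ via Bayes's rule, and evaluate the Gaussian expectation of the quadratic form. Your packaging of the first two steps as the Fisher score identity is merely a more compact statement of the paper's explicit manipulation of the Gaussian density, and your handling of the centering (the paper works with the centered measure and shifts back at the end) is equivalent.
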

\begin{proof}
    The $\log \beta$ term is obvious.
    What remains is the marginal $\log$-likelihood term $\log p(d|\beta)$.
    We start by using Fisher's identity, see~\cite[eq. 3.1]{louis1982finding} or~\cite[Prop. 10.1.6]{cappe2005inference}, which relates the derivative of the marginal $\log$-likelihood to an expectation over the posterior.
    Letting $\partial_{\beta}$ denote the partial derivative with respect to $\beta$, we have
    \begin{align*}
    \partial_{\beta}\log p(d|\beta) &= \mathbb{E}_{\hat{u}|d,\beta}\left[\partial_{\beta}\log p(\hat{u}|\beta)\right] \\
    &= \mathbb{E}_{\hat{u}|d,\beta}\left[\partial_{\beta}\left(\frac{M}{2}\log\beta -\frac{1}{2}\log\det \Sigma^{-1}_{\mathcal{F}_M} - \frac{\beta}{2}\langle \hat{u}-\hat{u}^0, \Sigma^{-1}_{\mathcal{F}_M}(\hat{u}-\hat{u}^0)\rangle\right)\right] \\
    &= \mathbb{E}_{\hat{u}|d,\beta}\left[ \frac{M}{2\beta} -\frac{1}{2}\langle \hat{u}-\hat{u}^0, \Sigma^{-1}_{\mathcal{F}_M}(\hat{u}-\hat{u}^0)\rangle\right] \\
    &= \frac{M}{2\beta} -\frac{1}{2}\mathbb{E}_{\hat{u}|d,\beta}\left[\langle \hat{u}-\hat{u}^0, \Sigma^{-1}_{\mathcal{F}_M}(\hat{u}-\hat{u}^0)\rangle\right],
    \end{align*}
    which gives the first term.
    What remains is to compute the posterior expectation.
    Application of a standard Gaussian identity yields
    $$
    \mathbb{E}_{\hat{u}|d,\beta}\left[\langle \hat{u}-\hat{u}^0, \Sigma^{-1}_{\mathcal{F}_M}(\hat{u}-\hat{u}^0)\rangle\right] = \langle\Tilde{m} - \hat{u}^0, \Sigma^{-1}_{\mathcal{F}_M}(\Tilde{m} - \hat{u}^0)\rangle + \tr{\Sigma^{-1}_{\mathcal{F}_M}\Tilde{\Sigma}_{\mathcal{F}_M}},
    $$
    which shows the remaining terms.
\end{proof}

\begin{remark}
    The expression we have derived in Lemma~\ref{lem:grad} for $\partial_{\beta}\mathcal{L}(\beta)$ can be seen elsewhere, but written in terms of a quadratic expression of the data $d$.
    For example, this appears in~\cite[Equation 5.9]{williams2006gaussian} or in the function space setting, which is our case, in~\cite{dunlop2020hyperparameter} (although they do not derive the gradient).
    What we have done is state the result in a way in which the dependency of the $\log$-evidence on the posterior mean is made explicit.
    This enables us to asymptotically identify the unique estimate of $\beta$ through application of Theorem~\ref{thm:brownmeanconv} and Theorem~\ref{thm:browncovconv}.
    Whereas the previous works do so only in numerical experiments.
\end{remark}

We now identify the MAP estimate of $\beta$ in the large data limit under different prior choices.
\begin{theorem}
    \label{thm:modelerr}
    Let $\mathcal{L}(\beta) \coloneqq \log\int\frac{1}{Z}p(d | \hat{u})p(\hat{u}|\beta) d\hat{u} + \log p(\beta)$ as given by the hierarchical model eq.~(\ref{eqn:hiearchy}).
    Further, let the assumptions of Theorem~\ref{thm:convergence} hold.
    Then, in the limit $h_{X_n}\to 0$, we have the following.
    \begin{enumerate}
        \item If $\beta$ is assigned a flat prior, then
        $$
        \beta^* = \frac{M}{\langle \hat{u}^* - \hat{u}^0, \Sigma_{\mathcal{F}_M}^{-1}(\hat{u}^*-\hat{u}^0)\rangle}.
        $$
        \item If $\beta$ is assigned Jeffreys prior, then
        $$
        \beta^* = \frac{M -2 }{\langle \hat{u}^* - \hat{u}^0, \Sigma_{\mathcal{F}_M}^{-1}(\hat{u}^*-\hat{u}^0)\rangle}.
        $$
    \end{enumerate}
    Here, $\hat{u}^*$ is the ground truth field which generated the data and $\hat{u}^0$ is the prior mean function, both projected onto $\mathcal{F}$.
\end{theorem}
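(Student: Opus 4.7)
The plan is to start from Lemma~\ref{lem:grad}, set the derivative of $\mathcal{L}(\beta)$ to zero, and then pass to the large-data limit using the convergence results already established in Section~\ref{sec:analysis}. Concretely, a MAP estimate $\beta^{*}$ must satisfy
\begin{equation*}
\frac{M}{2\beta^{*}} + \left.\frac{\partial}{\partial\beta}\log p(\beta)\right|_{\beta^{*}} = \frac{1}{2}\langle \Tilde{m}-\hat{u}^0, \Sigma_{\mathcal{F}}^{-1}(\Tilde{m}-\hat{u}^0)\rangle + \frac{1}{2}\tr{\Sigma_{\mathcal{F}}^{-1}\Tilde{\Sigma}_{\mathcal{F}}}.
\end{equation*}
In the limit $h_{X_n}\to 0$, the two data-dependent terms on the right-hand side simplify: Theorem~\ref{thm:brownmeanconv} (applied after projection onto $\mathcal{F}$, which is a bounded linear operation) gives $\Tilde{m}\to u^{*}$ in $L^2(\Omega)$, so $\Tilde{m}$ projected onto $\mathcal{F}$ converges to $\hat{u}^{*}$ and the quadratic form converges to $\langle \hat{u}^{*}-\hat{u}^{0},\Sigma_{\mathcal{F}}^{-1}(\hat{u}^{*}-\hat{u}^{0})\rangle$. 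Theorem~\ref{thm:browncovconv} forces $\Tilde{\Sigma}_{\mathcal{F}}\to 0$ as an operator on the finite-dimensional space $\mathcal{F}$, and since $\Sigma_{\mathcal{F}}^{-1}$ is a fixed bounded operator on $\mathcal{F}$, the trace term vanishes.

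With those limits in hand, the proof splits by prior choice. For case (i), the flat prior contributes $\partial_{\beta}\log p(\beta) = 0$, so the stationarity condition collapses to $M/(2\beta^{*}) = \tfrac{1}{2}\langle \hat{u}^{*}-\hat{u}^{0},\Sigma_{\mathcal{F}}^{-1}(\hat{u}^{*}-\hat{u}^{0})\rangle$, yielding the stated formula. For case (ii), Jeffreys prior gives $p(\beta)\propto 1/\beta$ so that $\partial_{\beta}\log p(\beta) = -1/\beta$; the stationarity condition becomes $(M-2)/(2\beta^{*}) = \tfrac{1}{2}\langle \hat{u}^{*}-\hat{u}^{0},\Sigma_{\mathcal{F}}^{-1}(\hat{u}^{*}-\hat{u}^{0})\rangle$, again giving the stated formula (valid for $M>2$, which holds once the mesh $\mathcal{F}$ is moderately refined).

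To conclude that each stationary point is the global maximum, I would observe that the limiting gradient has the form $A/\beta - B$ with $A,B>0$ (here $A=M/2$ or $(M-2)/2$, and $B = \tfrac{1}{2}\langle \hat{u}^{*}-\hat{u}^{0},\Sigma_{\mathcal{F}}^{-1}(\hat{u}^{*}-\hat{u}^{0})\rangle>0$ whenever the model is misspecified, which is the regime of interest). This function is strictly decreasing in $\beta$ on $(0,\infty)$, is positive near $0$ and negative near $\infty$, so it has a unique zero, and $\mathcal{L}(\beta)$ is strictly concave in a neighborhood of $\beta^{*}$, certifying the maximum.

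The main obstacle is a technical one: the convergence theorems from Section~\ref{sec:analysis} are stated for the infinite-dimensional GP posterior mean in $L^{2}(\Omega)$ and the pointwise posterior variance, whereas the quantities appearing in Lemma~\ref{lem:grad} live on the finite-dimensional subspace $\mathcal{F}$. I would need to justify that projecting onto $\mathcal{F}$, multiplying by the bounded operator $\Sigma_{\mathcal{F}}^{-1}$ on $\mathcal{F}$, and taking a trace of a finite-rank operator all commute well with the $h_{X_n}\to 0$ limit. For a fixed $\mathcal{F}$ of dimension $M$ this is routine because $\Sigma_{\mathcal{F}}^{-1}$ has bounded operator norm and the trace is a continuous linear functional on $M\times M$ matrices. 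A secondary technicality is the degenerate case $\hat{u}^{*}=\hat{u}^{0}$ (the model is exactly correct), in which the quadratic form vanishes and $\beta^{*}\to\infty$; this is consistent with the interpretation of $\beta$ as encoding model trust.
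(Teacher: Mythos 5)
Your proposal follows essentially the same route as the paper's proof: apply Lemma~\ref{lem:grad}, pass to the limit $h_{X_n}\to 0$ via Theorems~\ref{thm:brownmeanconv} and~\ref{thm:browncovconv} to kill the trace term and replace $\Tilde{m}$ with $\hat{u}^*$, then set the stationarity condition to zero separately for the flat and Jeffreys priors. Your additional remarks on the monotonicity of the limiting gradient (certifying a unique maximizer), the finite-dimensional projection technicality, and the degenerate case $\hat{u}^*=\hat{u}^0$ go slightly beyond what the paper writes but do not change the argument.
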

\begin{proof}
    We have by Lemma~\ref{lem:grad}
    \begin{equation*}
        \frac{\partial}{\partial\beta} \mathcal{L}(\beta) = \frac{M}{2\beta} + \frac{\partial}{\partial\beta}\log p(\beta) - \frac{1}{2}\langle\Tilde{m} - \hat{u}^0,\Sigma^{-1}_{\mathcal{F}_M}(\Tilde{m} - \hat{u}^0)\rangle - \frac{1}{2}\tr{\Sigma^{-1}_{\mathcal{F}_M}\Tilde{\Sigma}_{\mathcal{F}_M}}.
    \end{equation*}
    Now, by Theorem~\ref{thm:brownmeanconv} we have $\Tilde{m} \to u^*$, and by Theorem~\ref{thm:browncovconv} $\Tilde{k} \to 0$ as $h_{X_n}\to 0$.
    Passing to the limit, the gradient becomes
    $$
    \frac{\partial}{\partial\beta} \mathcal{L}(\beta) = \frac{M}{2\beta} + \frac{\partial}{\partial\beta}\log p(\beta) - \frac{1}{2}\langle \hat{u}^* - \hat{u}^0,\Sigma^{-1}_{\mathcal{F}_M}(\hat{u}^* - \hat{u}^0)\rangle.
    $$
    Under a flat prior, $\partial/\partial \beta \log p(\beta) = 0$.
    Setting the gradient to zero, and solving for $\beta$ gives (i).
    Under Jeffreys prior, $p(\beta)\propto 1/\beta$, so $\partial/\partial \beta \log p(\beta) = -1/\beta$, and again setting the gradient to zero, and solving for $\beta$ gives (ii).
\end{proof}

Note that statement (i) of Theorem~\ref{thm:modelerr} is simply the MLE.
Theorem~\ref{thm:modelerr} shows that the MAP estimate of $\beta$ is sensitive to model-form error.
Observe that in each estimate, the term in the denominator is
$$
\langle \hat{u}^* - \hat{u}^0, \Sigma_{\mathcal{F}_M}^{-1}(\hat{u}^*-\hat{u}^0)\rangle = \|\hat{u}^*-\hat{u}^0\|_{H_k}^2,
$$
restricted to $\mathcal{F}$, which was derived in the proof of Proposition~\ref{prop:pinnsloss}.
That is, the optimal value of $\beta$ is inversely proportional to the RKHS distance between the prior mean function $u^0$ and the ground truth $u^*$.
The same holds for $L^2(\Omega)$ distance by the Sobolev embedding theorem.
Recall that $u^0$ is exactly the unique solution to the chosen physical model eq.~(\ref{eqn:poisson}).
Hence, $\beta$ is sensitive to the distance between the true field $u^*$, and the one we have a priori assumed is correct $u^0$.
As $u^*$ moves further away from $u^0$, the optimal value of $\beta$ decreases.
This manifests in larger variance of the samples from the physics-informed prior, as evidenced in Fig.~\ref{fig:exampleprior}, and can be interpreted as a lower level of trust in the assumed physics.
On the other hand, if we have selected the perfect model, i.e. $\|u^*-u^0\|_{H_k}^2 = 0$ then $\beta \to \infty$.
The prior then collapses to a Dirac centered at $u^0$, which signals the absence of model-form error.

Finally, we study how model-form error affects the inverse problem of identifying $q$.
We modify the model eq.~(\ref{eqn:hiearchy}) to
\begin{align*}
    &\beta \sim p(\beta), \quad \hat{q}\sim p(\hat{q}), \nonumber \\
    &\hat{u} | \beta, \hat{q} \sim \mathcal{N}(\hat{u}^0(\cdot;\hat{q}),\beta^{-1}\Sigma_{\mathcal{F}_M}), \nonumber\\
    &d | \hat{u} \sim \mathcal{N}(\hat{u},\sigma^2I),
\end{align*}
where $\hat{q}$ is any parametrization of $q$, e.g. a deep neural network.
We have also explicitly stated the dependency of $\hat{u}^0$ on $\hat{q}$.
As before, the posterior for the inverse problem can be derived with Bayes's rule and taking the marginal:
\begin{equation}
    \label{eqn:inverseprob}
    p(\beta,\hat{q}|d) = \int \frac{1}{Z}p(d|\hat{u})p(\hat{u}|\beta,\hat{q})p(\beta)p(\hat{q})d\hat{u}.
\end{equation}
Since all probabilities involved are Gaussian and the measurement is linear, eq.~(\ref{eqn:inverseprob}) has a known analytical form
\begin{equation}
    \label{eqn:inversesoln}
    p(\beta,\hat{q}|d) \propto \mathcal{N}(d| \hat{u}^0(\cdot;\hat{q}),\beta^{-1}\Sigma_{\mathcal{F}_M} + \sigma^2I)p(\beta)p(\hat{q}).
\end{equation}
Observe how the variance of eq.~(\ref{eqn:inversesoln}) changes according to the MAP estimate of $\beta$ in Theorem~\ref{thm:modelerr}.
A model with relatively high error will result in a smaller value of $\beta$.
This can result from either choosing the wrong PDE, or by incorrectly identifying the source.
In this situation, the variance in the prediction over $\hat{q}$ increases.

The intuition here is that if the model is wrong, the posterior obtained from the methodology responds with a lower confidence in the prediction of $\hat{q}$.
Likewise, if the model-form error is low, the posterior reflects a higher degree of confidence in the prediction.
This behavior is typically absent from Bayesian methods, as the posterior variance is invariant to model-form error.
Also of note is that the posterior variance never entirely disappears due to the presence of measurement noise.
This agrees with the usual result that identifying the source term of the Poisson equation is an ill-posed inverse problem~\cite{ghattas2021learning}.

\section{Conclusions and outlook}

In this work, we established a connection between the variational formulation of the Poisson equation and GP regression.
Specifically, we showed the resulting inverse problem can be viewed as a kernel method.
Then, from the connections between kernel methods and GP regression, we showed that the loss function provides the MAP estimator for GP regression when starting with a Brownian bridge prior.
In one-dimension, we may even move beyond GP regression and consider the prior as a Gaussian measure on $L^2(\Omega)$, and on the dual of a Sobolev space in the general case.
This is in an effort to incorporate nonlinear measurement modalities into the framework.

Using the connection to GP regression, we studied different properties of the field reconstruction problem.
In Section~\ref{sec:analysis}, we were able to prove convergence of the GP MAP estimator to the ground truth in the limit of infinite data.
This also provides the result for the related variational problem.
We briefly discussed the consequences of this in the context of PINNs.
We also derived a finite-dimensional basis representation of the prior as a subset of $L^2(\Omega)$.
This is in contrast to the usual approach taken in GP regression, which instead learns the posterior on a mesh of $\Omega$.
We proved that this representation and the corresponding posterior approximation converge to the correct Gaussian measure.

The main results of the paper are in Section~\ref{sec:model}, where we connect the method to the important problem of identifying model-form error.
When we work under a physics-informed framework, we a priori assume the system is modeled by a specific form of the physics, which in this case is eq.~(\ref{eqn:poisson}).
In any given application, it is entirely possible that we have picked the wrong model.
The usual paradigm enforces the physics as a hard constraint and does not take this into account.
We have modified the method so that the physics is enforced as a soft constraint.
This is done through inclusion of the hyperparameter $\beta$.

In Theorem~\ref{thm:modelerr}, we showed that when $\beta$ is learned via a MAP estimate, it is sensitive to this model-form error.
As the model-form error increases, the optimal value of $\beta$ adjusts accordingly.
This has the affect of increasing the variance in the samples from the prior, which corresponds to a smaller a priori trust in the physical model we have selected.
We also showed this impacts the variance in the posterior over the source term if we are solving the inverse problem.

While the main focus of this work was on the Poisson equation, it is possible to extend the results to certain other PDEs.
The main requirement is that the PDE can be cast as a variational problem which admits a quadratic positive-definite form.
This is so that it may be connected to a kernel method, from which we define a suitable GP.
Another example one could study is the Helmholtz equation
\begin{equation}
    \label{eqn:helmholtz}
    -\nabla^2 u + \omega^2u + q = 0.
\end{equation}
One can show that eq.~(\ref{eqn:helmholtz}) with Dirichlet boundary conditions has the energy functional~\cite{brezis2011functional}
$$
E(u) = \int_{\Omega} \frac{1}{2}\|\nabla u\|^2 + \frac{k^2}{2}\|u\|^2 + qu\:d\Omega,
$$
which by completing the square becomes
$$
E(u) = \frac{1}{2}\langle u - Cq, C^{-1}(u-Cq)\rangle + \mathrm{const.},
$$
where, $C$ is operator defined by by the Green's function of eq.~(\ref{eqn:helmholtz}).
In 1D, this is the square RKHS-norm with kernel
$$
k(x,x') = 2\sum_{n\in\N} \frac{\sin(n\pi x)\sin(n\pi x')}{n^2\pi^2 - \omega^2}.
$$
Therefore, it appears this trick is limited to linear PDEs so that the Greens function may be identified.
Also, in in much of the literature, it is common to use the integrated square residual of the PDE to define the loss function, rather than a variational form.
In a future work, we plan to extend this method both to nonlinear PDEs and to loss functions defined by the integrated square residual.
This can be done via Taylor approximation.

Lastly, we restricted our work to theory, and did not touch on any numerical methods.
While standard GP regression techniques may be used in applications, there are some computational issues which should be resolved.
The main bottleneck is the fact that the mean function of the physics-informed prior is given by the solution to the PDE.
If we are solving the inverse problem, then the mean function will change every time $q$ is updated, meaning that the PDE must be resolved.
Note that this is also the case in classical Bayesian inverse problems~\cite{stuart2010inverse}, where the physics is enforced in the likelihood.
We plan to address this issue in future work by developing specialized sampling algorithms which avoid needing to call a PDE solver.
This is based on the finite-dimensional basis representation derived in this work.

\bibliographystyle{plain}
\bibliography{references} 

\appendix

\section{Gaussian measures}
\label{apdx:GM}
We summarize important concepts related to Gaussian measures on separable Hilbert spaces.
Note that the theory of Gaussian measures on Hilbert spaces can easily be extended to the Banach space setting, but this is not needed in this work.
The texts~\cite{kuo2006gaussian,bogachev1998gaussian} along with the notes provided in~\cite{eldredge2016analysis} provide a nice background to the theory.

Similarly to GPs, Gaussian measures on Hilbert spaces are defined using \emph{covariance operators}.
For a linear operator $C:H\to H$ to be a valid covariance operator of any Borel measure on a Hilbert space $H$, it must be self-adjoint and positive semi-definite.
However, there is an important restriction when working in infinite-dimensions, namely that for a Gaussian measure on a Hilbert space, the covariance operator must be trace class.
\begin{definition}[Trace class operator]
    A linear operator $C:H\to H$ is said to be trace class if, for any orthonormal basis $(\psi_n)_{n=1}^{\infty}$ of $H$, we have
    $$
    \tr{C} \coloneqq \sum_{n\in\N} \langle \psi_n,C\psi_n\rangle < \infty,
    $$
    where the sum is independent of the choice of basis.
\end{definition}
\begin{remark}
    When $C$ is self-adjoint, we can choose the basis in the above definition to be the eigenfunctions of $C$ in which case $\tr{C} = \sum_{n=1}^{\infty} \lambda_n$, where $\lambda_n$, $n=1,2,\dots,$ are the corresponding eigenvalues.
\end{remark}

Now, let $H$ be a real, separable Hilbert space, and let $\mathcal{B}(H)$ denote the Borel $\sigma$-algebra generated by the open subsets of $H$.
Given a Borel measure $\mu$ on $H$, we first define the notion of its mean function and covariance operator.
\begin{definition}[Mean function and covariance operator]
    \label{def:meancov}
    Let $\mu$ be a Borel measure on $H$.
    The mean function of $\mu$ is the element $m \in H$ such that
    $$
    \langle u, m\rangle = \int_{H} \langle u, z\rangle\mu(dz), \quad \forall u\in H.
    $$
    The covariance operator of $\mu$, denoted by $C$, is the operator which satisfies
    $$
    \langle u, Cv\rangle = \int_{H} \langle u,z\rangle \langle v,z \rangle \mu(dz), \quad \forall u,v\in H.
    $$
\end{definition}
Let $\mu$ and $\nu$ be two Borel measures on $H$.
Then, $\mu$ is said to be \emph{absolutely continuous} with respect to $\nu$ if $\nu(A) = 0$ implies $\mu(A) = 0$ for all $A \in \mathcal{B}(H)$.
We denote this by $\mu \ll \nu$.
Two such measures are said to be \emph{equivalent} if $\mu \ll \nu$ and $\nu \ll \mu$.
Measures which are supported on disjoint sets are called \emph{singular}.

A Borel measure $\mu$ on $H$ is said to be \emph{Gaussian} if, for each $u\in H$, the measurable function $\langle u, \cdot \rangle$ is normally distributed.
That is, there exist $m_u,\sigma_u \in \mathbb{R}$, $\sigma_u \geq 0$, such that
$$
    \mu\left(\{v \in H : \langle u, v \rangle\leq a\}\right) = \int_{-\infty}^a \frac{1}{\sqrt{2\pi \sigma_u}}\exp\left\{-\frac{1}{2\sigma_u} (x-m_u)^2\right\}dx.
$$   
We allow for the case $\sigma_u = 0$, which is a Dirac mass centered at $m_u$.
A Gaussian measure on $H$ is guaranteed to have a well-defined mean and covariance operator given by Definition~\ref{def:meancov}, therefore we are justified in denoting the measure as $\mu \sim \mathcal{N}(m,C)$.
Note that for a Gaussian measure defined on a Banach space, it is necessarily the case that $\tr{C} < \infty$.
The inverse of $C$ is called the precision operator, which we denote by $L$.

Gaussian measures are often characterized by their characteristic functions.
For a Borel measure $\mu$ on $H$, we define the \emph{characteristic function} $\phi$ of $\mu$ by
$$
\phi(u) = \int_{H} \exp\{i\langle u, z\rangle\}\mu(dz), \quad u \in H.
$$
If $\phi$ and $\psi$ are respectively the characteristic functions of the Borel measures $\mu$ and $\nu$ on $H$, and $\phi(u) = \psi(u)$ for all $u\in H$, then $\mu=\nu$.
We have the following two theorems related to characteristic functions of Gaussian measures:
\begin{theorem}[Theorem 6.4~\cite{stuart2010inverse}]
    Let $\mu \sim \mathcal{N}(m,C)$ be a Gaussian measure on $H$.
    Then the characteristic function of $\mu$ is given by $\phi(u) = \exp\left\{ i\langle m, u\rangle -\frac{1}{2}\langle u, Cu \rangle \right\}.$
\end{theorem}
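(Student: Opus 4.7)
The plan is to pushforward $\mu$ along the continuous linear functional $T_u(z) \coloneqq \langle u, z \rangle$ so as to reduce the infinite-dimensional characteristic-function integral to an ordinary one-dimensional Gaussian calculation. Fix $u \in H$. Since $T_u$ is continuous and therefore Borel measurable, and since $\mu$ is Gaussian on $H$ by hypothesis, the definition of Gaussian measure immediately implies that the pushforward $(T_u)_*\mu$ is a Gaussian measure on $\mathbb{R}$. Such a measure is completely determined by its mean $a_u$ and variance $s_u^2$.

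The next step is to identify these two scalar parameters in terms of $m$ and $C$ via Definition~\ref{def:meancov}. The mean of $T_u$ under $\mu$ is read off directly:
\begin{equation*}
    a_u = \int_H \langle u, z\rangle\, \mu(dz) = \langle u, m\rangle.
\end{equation*}
For the variance, I would apply the covariance definition with $v = u$ (interpreted, as is standard, on the centered measure so that it yields the second central moment rather than the raw second moment), giving $s_u^2 = \langle u, Cu\rangle$. Thus $(T_u)_*\mu = \mathcal{N}(\langle u, m\rangle, \langle u, Cu\rangle)$.

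With this identification in hand, the change-of-variables formula for pushforwards gives
\begin{equation*}
    \phi(u) = \int_H e^{i\langle u, z\rangle}\, \mu(dz) = \int_\mathbb{R} e^{ix}\, d\bigl((T_u)_*\mu\bigr)(x),
\end{equation*}
which is the characteristic function of $\mathcal{N}(\langle u, m\rangle, \langle u, Cu\rangle)$ evaluated at $t = 1$. Substituting into the well-known one-dimensional Gaussian formula $\exp\{ita - \tfrac{1}{2}t^2 s^2\}$ produces $\phi(u) = \exp\{i\langle u, m\rangle - \tfrac{1}{2}\langle u, Cu\rangle\}$, which is the claimed expression.

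The main obstacle is essentially bookkeeping rather than analysis: measurability of $T_u$ is trivial from continuity, integrability is immediate since $|e^{ix}| = 1$, and the characteristic function of a real Gaussian in one variable is a standard result. The only subtle point is aligning the covariance-operator convention so that $\langle u, Cu\rangle$ is truly the variance (as opposed to the uncentered second moment) of the pushforward; once that is settled, no genuine analytic obstruction arises. In particular, no appeal to Mercer-type expansions or Karhunen--Lo\`eve decompositions is required, which is what makes this pushforward reduction the cleanest route.
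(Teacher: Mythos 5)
Your argument is correct and is the standard proof of this result: the paper itself states the theorem as a citation (Theorem 6.4 of Stuart) without supplying a proof, and the one-dimensional pushforward along $T_u(z)=\langle u,z\rangle$, combined with the definition of a Gaussian measure and the scalar Gaussian characteristic function at $t=1$, is exactly the canonical route. Your one flagged subtlety is real --- Definition~\ref{def:meancov} as printed gives the uncentered second moment, so $\langle u,Cu\rangle$ equals the variance of the pushforward only under the centered reading you adopt (which is the convention intended in the cited source) --- and with that convention fixed the proof is complete.
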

\begin{theorem}[Theorem 2.3~\cite{kuo2006gaussian}]
    \label{thm:charfunc}
    Let $m \in H$ and $C$ be a trace class, positive definite, and self-adjoint operator on $H$.
    Then $\phi(u) = \exp\left\{ i\langle m, u\rangle -\frac{1}{2}\langle u, Cu \rangle \right\}$ is the characteristic function of a Gaussian measure on $H$.
\end{theorem}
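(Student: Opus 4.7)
The plan is to construct the measure explicitly as the law of a Karhunen--Lo\`eve type random series and then verify its characteristic function. Since $C$ is self-adjoint, positive, and trace class (hence compact), the spectral theorem furnishes an orthonormal basis $\{e_n\}_{n\in\N}$ of $H$ with $Ce_n=\lambda_n e_n$, $\lambda_n\geq 0$, and $\sum_n \lambda_n=\tr{C}<\infty$. This diagonalization is what collapses the whole question to one-dimensional calculations, since coordinates along $\{e_n\}$ will be independent under the measure to be constructed.

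On an auxiliary probability space let $(\xi_n)_{n\in\N}$ be i.i.d.\ standard normals and set
$$
X \coloneqq m + \sum_{n=1}^{\infty} \sqrt{\lambda_n}\,\xi_n\, e_n.
$$
I would first establish convergence in $L^2(\Omega;H)$: by orthonormality of $\{e_n\}$, the tail of the partial sums satisfies
$$
\mathbb{E}\Big\|\sum_{n=N}^{M}\sqrt{\lambda_n}\,\xi_n\, e_n\Big\|_H^2 = \sum_{n=N}^{M}\lambda_n,
$$
which vanishes as $N,M\to\infty$ precisely because $C$ is trace class. The partial sums form a centered $H$-valued martingale with bounded second moments, so the It\^o--Nisio theorem then upgrades $L^2$ convergence to almost-sure convergence in $H$; thus $X$ is a well-defined $H$-valued random element. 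Let $\mu$ denote its law on $(H,\mathcal{B}(H))$.

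To identify $\mu$ as Gaussian with the claimed characteristic function, fix $u \in H$ and observe that $\langle u,X\rangle=\langle u,m\rangle+\sum_n \sqrt{\lambda_n}\,\xi_n\,\langle u,e_n\rangle$ is a convergent series of independent real Gaussians, hence itself Gaussian; this makes $\mu$ a Gaussian measure in the sense used earlier in the excerpt. Independence of the $\xi_n$ together with dominated convergence yields
$$
\phi_\mu(u) = e^{i\langle m,u\rangle}\prod_{n=1}^{\infty} e^{-\frac{1}{2}\lambda_n\langle u,e_n\rangle^2} = \exp\!\Big\{i\langle m,u\rangle - \tfrac{1}{2}\sum_n \lambda_n\langle u,e_n\rangle^2\Big\},
$$
and the sum in the exponent equals $\langle u,Cu\rangle$ by the spectral representation of $C$, delivering the claimed formula.

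The main obstacle is the convergence of the random series in $H$. This is the one place where the trace-class hypothesis does real work, and it is exactly what fails when one tries to extend the construction to operators like the $d\geq 2$ Brownian bridge covariance, which motivates the shift to tempered distributions in the preceding Theorem~\ref{thm:convergence2}. Once $X\in H$ almost surely, Gaussianity of $\mu$ and the identification of its characteristic function reduce to elementary one-dimensional Gaussian bookkeeping, so no additional technical issues arise.
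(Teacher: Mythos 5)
The paper does not actually prove this statement; it is imported verbatim as Theorem 2.3 of Kuo's monograph, so there is no in-paper argument to compare against. Your construction is correct and is the standard Karhunen--Lo\`eve proof of this existence result: trace class $\Rightarrow$ compact, the spectral theorem diagonalizes $C$, the series $m+\sum_n\sqrt{\lambda_n}\,\xi_n e_n$ converges in $L^2(\Omega;H)$ precisely because $\sum_n\lambda_n<\infty$, It\^o--Nisio (or the $H$-valued martingale convergence theorem) upgrades this to almost-sure convergence, and the characteristic function computation via independence and dominated convergence recovers $\exp\{i\langle m,u\rangle-\tfrac{1}{2}\langle u,Cu\rangle\}$. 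Two minor points worth making explicit: the argument uses separability of $H$ (which the paper assumes throughout its appendix) to get a countable orthonormal eigenbasis, and if $C$ has nontrivial kernel one completes the eigenvectors of the positive eigenvalues to a full orthonormal basis with $\lambda_n=0$ on the kernel, which changes nothing in the estimates.
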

The above results show that a Gaussian measure on $H$ is uniquely determined by its mean function and covariance operator.
Further, it is no sacrifice to characterize the measure by its characteristic function.
An important space when working with a Gaussian measure is the associated \emph{Cameron-Martin space}, typically denoted by $E$.
If $\mu \sim \mathcal{N}(0,C)$ is defined on a Hilbert space $H$, then E is defined to be the intersection of all linear spaces with full $\mu$-measure.
On a Hilbert space, $E = \mathrm{range}(C^{1/2})$.

Just as with the RKHS of a GP, the Cameron-Martin space of a Gaussian measure characterizes important behavior of the measure.
In fact, the reproducing kernel of a RKHS is often viewed as the kernel of the covariance operator of a Gaussian measure on $L^2(\Omega)$.
In the setting of Gaussian measures, the two are the same.
The immediate consequence is that sample paths will a.s. not lie in the Cameron-Martin space.
For example, if $\mu$ is the classical Wiener measure on the unit interval, then $E = \{u\in H^1([0,1]):u(0)=0\}$, and $\mu(E) = 0$.
This is precisely the statement that sample paths from the Wiener measure are a.s. not differentiable, which is a well-known result.
Further, the Cameron-Martin space also provides necessary and sufficient conditions for equivalence of Gaussian measures:
\begin{theorem}[Theorem 1~\cite{minh2024infinite}]
    \label{thm:equiv}
    Let $\mu\sim \mathcal{N}(m_1,C_1)$ and $\nu\sim \mathcal{N}(m_2,C_2)$ be two Gaussian measures defined on a Hilbert space.
    Then $\mu$ and $\nu$ are either equivalent or singular.
    They are equivalent if and only if the following two conditions are satisfied:
    \begin{enumerate}
        \item $m_2 - m_1 \in \mathrm{range}(C_{1}^{1/2})$.
        \item There exists a symmetric, Hilbert-Schmidt operator $S$ on $H$, without the eigenvalue $1$, with $C_2 = C_1^{1/2}(I - S)C_1^{1/2}$.
    \end{enumerate}  
\end{theorem}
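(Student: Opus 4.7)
The plan is to decompose the theorem into three ingredients: (a) the dichotomy between equivalence and singularity, (b) a Cameron-Martin shift argument that handles condition (i), and (c) a Kakutani product-measure reduction that handles condition (ii). The overall architecture is to first use (b) to reduce to centered measures, then simultaneously diagonalize the two covariance operators and invoke Kakutani's theorem, which is the essence of the Feldman-Hajek result.

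I would begin with the Cameron-Martin shift lemma: $\mathcal{N}(m_1, C)$ and $\mathcal{N}(m_2, C)$ are equivalent iff $m_2 - m_1 \in \mathrm{range}(C^{1/2})$, with an explicit Radon-Nikodym derivative given by the Cameron-Martin exponential. The forward direction follows from a direct characteristic-function calculation (or a change-of-variables on the Cameron-Martin space). For the reverse direction, if $m_2 - m_1 \notin \mathrm{range}(C^{1/2})$ then one can construct a sequence of linear functionals that separates the two measures with probability one on disjoint sets, using a Cauchy sequence in the formal $C^{-1}$-inner product whose evaluations diverge. This establishes condition (i) and lets us assume WLOG that $m_1 = m_2 = 0$.

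For the centered case, I would work in an orthonormal eigenbasis of $C_1$, represent both measures as distributions on $\ell^2$, and then analyze the self-adjoint operator $T \coloneqq C_1^{-1/2} C_2 C_1^{-1/2}$ initially defined on $\mathrm{range}(C_1^{1/2})$. In a common eigenbasis of $T$, a formal whitening by $C_1^{-1/2}$ turns both measures into product measures: $\mu$ becomes $\bigotimes_n \mathcal{N}(0, 1)$ and $\nu$ becomes $\bigotimes_n \mathcal{N}(0, \lambda_n)$, where $\lambda_n$ are the eigenvalues of $T$. Kakutani's theorem for infinite products now both furnishes the dichotomy and gives equivalence iff $\prod_n \rho_n > 0$, where $\rho_n = \bigl(2\sqrt{\lambda_n}/(1+\lambda_n)\bigr)^{1/2}$ is the Hellinger affinity of the $n$-th factor. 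A Taylor expansion gives $-\log \rho_n = O((\lambda_n-1)^2)$ as $\lambda_n \to 1$, so the product is positive iff $\sum_n (\lambda_n - 1)^2 < \infty$, i.e., iff $T - I$ is Hilbert-Schmidt. Setting $S \coloneqq I - T$ gives the representation $C_2 = C_1^{1/2}(I - S)C_1^{1/2}$ with $S$ symmetric Hilbert-Schmidt; the exclusion of the eigenvalue $1$ corresponds to $T > 0$, which is forced because $\nu$ is a nondegenerate Gaussian (otherwise it concentrates on a strict subspace and is singular to $\mu$).

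The main obstacle is the rigorous handling of $T = C_1^{-1/2} C_2 C_1^{-1/2}$, because $C_1^{-1/2}$ is unbounded. I would need to show that whenever $\mu$ and $\nu$ are equivalent the Cameron-Martin spaces coincide as sets, $\mathrm{range}(C_1^{1/2}) = \mathrm{range}(C_2^{1/2})$, and that $T$ then admits a bounded, positive self-adjoint extension to all of $H$. This requires careful use of the polar decomposition of $C_2^{1/2} C_1^{-1/2}$ and is the technical core of Feldman-Hajek. Once $T$ is well-defined as a bounded operator the diagonalization step is routine, and combining it with the Cameron-Martin shift yields both directions of the biconditional together with the dichotomy.
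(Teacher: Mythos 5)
The paper offers no proof of this statement: it is the Feldman--Hajek theorem, imported verbatim as background (Theorem 1 of the cited reference) in the appendix on Gaussian measures. So there is nothing in the paper to compare against; what I can assess is whether your sketch is a viable proof of the classical result, and it is. Your architecture --- Cameron--Martin shift for condition (i), reduction to centered measures, simultaneous whitening by $C_1^{-1/2}$, and Kakutani's dichotomy for the resulting product measures $\bigotimes_n \mathcal{N}(0,1)$ versus $\bigotimes_n \mathcal{N}(0,\lambda_n)$ --- is exactly the standard route, your Hellinger affinity $\rho_n = \bigl(2\sqrt{\lambda_n}/(1+\lambda_n)\bigr)^{1/2}$ and the expansion $-\log\rho_n = O((\lambda_n-1)^2)$ are correct, and Kakutani's theorem does deliver both the dichotomy and the Hilbert--Schmidt criterion $\sum_n(\lambda_n-1)^2<\infty$ for $S = I - T$.

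Two places deserve slightly more care than your sketch gives them. First, the ``WLOG centered'' step is not a single application of the shift lemma: one must chain $\mathcal{N}(m_1,C_1)\sim\mathcal{N}(m_2,C_1)\sim\mathcal{N}(m_2,C_2)$, and to go back and forth one needs that conditions (i) and (ii) jointly imply the Cameron--Martin spaces $\mathrm{range}(C_1^{1/2})$ and $\mathrm{range}(C_2^{1/2})$ coincide as sets, so that the shift condition can be stated with respect to either covariance. Second, you correctly identify the real technical core --- extending $T = C_1^{-1/2}C_2C_1^{-1/2}$ from $\mathrm{range}(C_1^{1/2})$ to a bounded positive self-adjoint operator on $H$ via the polar decomposition of $C_2^{1/2}C_1^{-1/2}$ --- but you leave it as a stated obstacle rather than resolving it; a complete proof must carry this out (it is where the closed graph theorem and the equality of Cameron--Martin spaces enter). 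Neither point is a wrong turn, only an acknowledged gap, and both are filled in the standard references (Da Prato--Zabczyk, Bogachev) from which this theorem is quoted.
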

In Theorem~\ref{thm:equiv}, if $\mu$ and $\nu$ share the same covariance operator $C$, then condition (ii) is immediately satisfied by taking $S = 0$.
One only needs to verify whether or not the shift in mean lives in the Cameron-Martin space.
Therefore it often becomes easier to assess properties of a Gaussian measure by centering it, provided the shift lives in the Cameron-Martin space.

We will also make use of the Wasserstein distance as a metric between probability measures when studying convergence.
Unlike other probabilistic metrics, the Wasserstein distance does not require absolute continuity between distributions, making it a bit more general.
The definition is given in terms of the metric on $H$, which we will take to be the metric induced by the norm $\|\cdot\|_H$.
Let $\mathcal{P}_p(H)$ denote the space of all Borel probability measures on $H$ with finite $p$-th moment.
Then, the $p$-Wassertein distance is defined as
$$
W_p(\mu,\nu) \coloneqq \inf_{\gamma\in\Gamma(\mu,\nu)}\left(\int_{H\times H} \|u-v\|_H^p \gamma(dudv)\right)^{1/p},
$$
for two probability measures $\mu,\nu\in \mathcal{P}_p(H)$ and $\Gamma(\mu,\nu)$ is the set of all couplings of $\mu$ and $\nu$.
In the case that the measures are Gaussian, there exists a useful identity, known as Gelbrich's formula~\cite[Theorem 3.5]{gelbrich1990formula}.
This states that for $\mu = \mathcal{N}(m_1,C_1)$ and $\nu = \mathcal{N}(m_2,C_2)$ (both on $H$), the $2$-Wasserstein distance is given by
\begin{equation}
    W_2(\mu,\nu) = \sqrt{\|m_1-m_2\|^2_H + \tr{C_1} + \tr{C_2} - 2\tr{\sqrt{C_1^{1/2}C_2C_1^{1/2}}}}.
\end{equation}

\section{Proof of Lemma~\ref{lem:bridgerkhs}}
\label{apdx:B}

Fix $x' \in \Omega$, with $\Omega = [0,1]^d$.
Let $x = (x_1,\dots,x_d)\in\Omega$.
It is obvious if any $x_1,\dots,x_d$ is $0$ or $1$, then $k(x,x') = 0$.
To show $k \in H^1(\Omega)$, define the partial sum
\begin{align*}
    k^S(x,x') = 2^d\sum_{|\alpha|=1}^S\frac{\sin(\alpha\pi x)\sin(\alpha\pi x')}{\pi^2|\alpha|^2},
\end{align*}
which is uniformly Lipschitz continuous for any order $S$.
By Mercer's theorem, $\lim_{S \to \infty} k^S = k$ absolutely and uniformly.
To show $k$ is also Lipschitz, we must bound the Lipschitz constant uniformly for any $S$.

Because the convergence is absolute and uniform, $\exists M > 0$ such that for any $x \in \Omega$,
\begin{align*}
    2^d \sum_{|\alpha|\in\N} \frac{|\sin(\alpha \pi x)\sin(\alpha \pi x')|}{\pi^2|\alpha|^2} \leq M.
\end{align*}
Now for $x,y$ in $\Omega$,
\begin{align*}
    |k^S(x,x') - k^S(y,x')| &= \left|2^d\sum_{|\alpha|=1}^S\frac{\left[\sin(\alpha \pi x) - \sin(\alpha \pi y)\right]\sin(\alpha \pi x')}{\pi^2|\alpha|^2}\right| \\
    &\leq 2^d\sum_{|\alpha|=1}^S\left|\frac{\left[\sin(\alpha \pi x) - \sin(\alpha \pi y)\right]\sin(\alpha \pi x')}{\pi^2|\alpha|^2}\right| \\
    &= 2^d\sum_{|\alpha|=1}^S\frac{\left|\sin(\alpha \pi x) - \sin(\alpha \pi y)\right| |\sin(\alpha \pi x')|}{\pi^2|\alpha|^2} \\
    &\leq 2^{d+1}\sum_{|\alpha|=1}^S \frac{|\sin(\alpha \pi x')|}{\pi^2|\alpha|} \|x-y\| \leq 2M\|x-y\|.
\end{align*}
As the Lipschitz constant of $k^S$ is bounded for any $S$, and $\lim_{S\to\infty}k^S = k$ uniformly, $k$ is also Lipschitz continuous.
Hence, $k$ is weakly differentiable.
We have shown that $k\in H_k$, satisfying property (i) of Definition~\ref{def:rkhs}.
    
Next, we prove that $k$ is the reproducing kernel for $H_k$.
Pick $u \in H_k$.
To show that $k$ has the reproducing property on $H_k$, we must have $\langle u, k(\cdot,x')\rangle_{H_k} = u(x')$.
The Mercer representation allows us to write the $H_k$-inner product in terms of $L^2(\Omega)$-inner products, i.e.
$$
\langle u, k(\cdot, x')\rangle_{H_k} = \sum_{|\alpha| \in \N}\lambda_{\alpha}^{-1}\langle u, \psi_{\alpha} \rangle \langle k(\cdot,x'), \psi_{\alpha} \rangle,
$$
where $\psi_{\alpha}$ is any orthonormal basis.
Pick the basis to be the d-dimensional Fourier sine series $\psi_{\alpha} = 2^{d/2}\sin(\alpha \pi x)$.
We can expand $u$ by $u = \sum_{\alpha\in\N^d}\langle u, \psi_{\alpha}\rangle \psi_{\alpha}$.
Now, we have for any fixed $\alpha$
\begin{align*}
    \left\langle k(\cdot,x'), 2^{d/2}\sin(\alpha\pi\cdot)\right\rangle & = \int \left(\sum_{|\gamma|\in\N}2^d\lambda_{\gamma}\sin(\gamma\pi x)\sin(\gamma\pi x')\right)\left(2^{d/2}\sin(\alpha\pi x) \right)dx \\
    &= \int \sum_{|\gamma|\in\N}\left\{2^{3d/2}\lambda_{\gamma}\sin(\gamma\pi x)\sin(\alpha\pi x)\sin(\gamma \pi x')\right\}dx\\
    &= \sum_{|\gamma| \in \N}\left\{2^{d/2}\lambda_{\gamma}\sin(\gamma\pi x')\int 2^d \sin(\gamma \pi x) \sin(\alpha \pi x) dx\right\} \\
    &= 2^{d/2} \lambda_{\alpha}\sin(\alpha\pi x'),
\end{align*}
where the last line holds as $\langle 2^d \sin(\gamma\pi x),\sin(\alpha\pi x)\rangle = 1$ for $\alpha = \gamma$ and $0$ otherwise (it is the orthonormal basis we picked).

Returning to the $H_k$-inner product and inserting the above expression yields
\begin{align*}
    \langle u, k(\cdot,x')\rangle_{H_k} &= \sum_{|\alpha|\in\N} \lambda_{\alpha}^{-1}\langle u, \psi_{\alpha}\rangle\langle k(\cdot,x'),\psi_{\alpha}\rangle \\
    &= \sum_{|\alpha|\in\N} \lambda_{\alpha}^{-1}\langle u, \psi_{\alpha}\rangle 2^{d/2}\lambda_{\alpha}\sin(\alpha\pi x') \\
    &= \sum_{|\alpha|\in\N}\langle u,\psi_{\alpha}\rangle2^{d/2}\sin(\alpha\pi x')\\
    &= \sum_{|\alpha|\in\N}\langle u,\psi_{\alpha}\rangle\psi_{\alpha}(x') = u(x').
\end{align*}
This shows requirement (ii) of Definition~\ref{def:rkhs} also holds, and $k$ is the unique reproducing kernel for $H_k$.

\section{Approximation in the Schwartz space}
\label{apdx:C}

While Theorem~\ref{thm:generalpostconv} is valid only in $H^{-\tau}$ (so that the Gaussian measure is well-defined), we can provide a similar convergence condition in the setting of L\'evy white noises (generalized random fields) for $d>1$, provided that we let $\Omega = \mathbb{R}^d$.
The drawback of working with the space $H^{-\tau}$ is that one must choose a specific value of $\tau$.
Thus, the stochastic process described by the prior does not inherently live on a single separable Hilbert space, and a choice must be made on the underlying ambient space.
In contrast, when working with L\'evy white noises, there is no need to worry about ensuring that the covariance form is trace-class.
This choice is perhaps the most natural or intrinsic way to view the prior, as there is no need to pick a specific $\tau$, as the characterization as a L\'evy white noise always exists. 
The drawback, however, is that in this viewpoint, the prior is not a Gaussian probability measure, hence there is no natural Wasserstein metric and Bayes's theorem no longer holds.
Note that GP regression is still possible in this situation.

Recall the Schwartz space of smooth, rapidly decaying functions
$$
\mathcal{S}(\mathbb{R}^d) = \left\{ u \in C^{\infty}(\mathbb{R}^d) : \forall m \in \mathbb{N}, \alpha\in \mathbb{N}^d, \sup_{x\in \mathbb{R}^d}(1+|x|)^m \left|D^{\alpha}u(x)\right| < \infty\right\}.
$$
Let $\mathcal{S}'(\mathbb{R}^d)$ denote the dual of $\mathcal{S}(\mathbb{R}^d)$.
Note that $\mathcal{S}'(\mathbb{R}^d)$ is known as the space of tempered distributions, due to the test function topology of $\mathcal{S}\left(\mathbb{R}^d\right)$.
We can rely on L\'evy's continuity theorem~\cite{bierme2017generalized} in the setting of tempered distributions.
This yields a parallel convergence theorem to Theorem~\ref{thm:generalpostconv} in $\mathcal{S}'(\mathbb{R}^d)$.

\begin{theorem}
    \label{thm:convergence2}
    Let $d>1$, $(\psi_i)_{i\in\N}$ be an orthonormal basis for $L^2(\mathbb{R}^d)$, and for each $n\in\N$, let $\mu_{n}$ be given by eq.~(\ref{eqn:finiteprior}).
    Then there exists a L\'evy white noise $\mu$ on $\mathcal{S}'(\mathbb{R}^d)$ such that $\mu_{n}$ converges in distribution to $\mu$ under the strong topology.
    Further, $\mu$ has characteristic function $\phi_{\mu}(u) = \exp\left\{-\frac{1}{2}\langle u, Cu\rangle\right\}$, $u\in \mathcal{S}(\mathbb{R}^d)$, hence $\mu$ can be regarded as a Gaussian random field over $\mathcal{S}'(\mathbb{R}^d)$.
\end{theorem}

\begin{remark}
    The space $\mathcal{S}'(\mathbb{R}^d)$ is inconveniently large in a machine learning context.
    For instance, the Dirac delta distribution belongs to $\mathcal{S}'(\mathbb{R}^d)$, and we cannot make sense of generating such a sample.
    However, there are often smaller function spaces with full measure under a L\'evy white noise which are easier to characterize.
    A general methodology for identifying a Besov space where this property holds is provided in~\cite{aziznejad2020wavelet}.
    In this case, the distribution is simply the $d$-dimensional Brownian sheet, conditioned to be zero on the boundaries.
    One can show this process is continuous on the unit cube~\cite{adler2010geometry}.
\end{remark}

The proof is quite involved and requires a fair bit of background.
First, recall the following definitions related to random fields.
\begin{definition}[Random field, L\'evy white noise]
    Let $(\Omega, \mathcal{F}, \mu)$ be a probability space, and $U\subseteq\mathbb{R}^d$ an open set.
    \begin{enumerate}
        \item A random field $X$ on $U$ is a measurable mapping $X:U\times \Omega \to \mathbb{R}^n$ such that for any $x\in U$, $X(x;\cdot)$ is a real-valued random vector.
        \item A L\'evy white noise $X$ (generalized random field) is a measurable mapping $X:\mathcal{S}'(\mathbb{R}^d)\times \Omega\to \mathbb{R}^n$. That is, $X(u)$ is a real-valued random vector $\forall u \in \mathcal{S}(\mathbb{R}^d)$.
    \end{enumerate}
\end{definition}
The characteristic function of a L\'evy white noise is defined as the Fourier transform of $X$, i.e.,
$$
\phi_{X}(u) = \mathbb{E}[\exp\{iX(u)\}] = \int_{\mathcal{S}'(\mathbb{R}^d)}\exp\{iL(u)\}d\mu_X(L),\quad u\in \mathcal{S}(\mathbb{R}^d).
$$

The main idea of the proof is to apply the following version of L\'evy's continuity theorem in the setting of tempered distributions.
\begin{theorem}[L\'evy's continuity theorem~\cite{bierme2017generalized}]
    \label{thm:levy}
    Let $(X_n)_{n\in\mathbb{N}}$ be a collection of L\'evy white noises, each with characteristic function $\phi_{X_n}$.
    Suppose that there exists a function $\phi: \mathcal{S}(\mathbb{R}^d)\to\mathbb{C}$, which is continuous at $0$, such that $\phi_{X_n} \to \phi$ pointwise.
    Then there exists a L\'evy white noise $X$, with characteristic function $\phi$, such that $X_n \overset{d}{\to} X$ under the strong topology.
\end{theorem}

To apply Theorem~\ref{thm:levy}, we must show that the finite-dimensional measure given by eq.~(\ref{eqn:finiteprior}) admits a L\'evy white noise representation.
For this, we can rely on the Minlos-Bochner theorem.
\begin{theorem}[Minlos-Bochner theorem~\cite{bierme2017generalized}]
    \label{thm:minlos}
    Let $\phi : \mathcal{S}(\mathbb{R}^d) \to \mathbb{C}$ with $\phi(0) = 1$ be positive-definite and continuous at $0$.
    Then there exists a L\'evy white noise $X$ defined on some probability space $(\Omega, \mathcal{F}, \mu)$ such that $\phi$ is the characteristic function of $X$.
\end{theorem}

\begin{remark}
    We clarify the precise meaning of the convergence appearing in Theorem~\ref{thm:levy}.
    We will use $L$ to denote elements of $\mathcal{S}'(\mathbb{R}^d)$.
    The strong topology $\tau$ on $\mathcal{S}'(\mathbb{R}^d)$ is generated by the collection of semi-norms
    $$
    q_B(L) = \sup_{u\in B} |L(u)|, \quad B\subset \mathcal{S}(\mathbb{R}^d),
    $$
    where $B$ is bounded.
    Now, let $(X_n)_{n\in\N}$ and $X$ be L\'evy white noises with measures $(\mu_n)_{n\in\N}$ and $\mu$.
    We say that $X_n$ converges to $X$ in distribution under the strong topology, denoted by $X_n \overset{d}{\to} X$, if 
    $$
    \lim_{n \to \infty} \int_{\mathcal{S}'(\mathbb{R}^d)} F(L)d\mu_{X_n}(L) = \int_{\mathcal{S}'(\mathbb{R}^d)} F(L)d\mu(L), \quad \forall F \in \mathcal{C}_b(\mathcal{S}'(\mathbb{R}^d),\tau),
    $$
    with $\mathcal{C}_b(\mathcal{S}'(\mathbb{R}^d),\tau)$ being the space of bounded continuous forms on $\mathcal{S}'(\mathbb{R}^d)$ under $\tau$.
\end{remark}

We can now prove the following.
\begin{lemma}
    \label{lem:apdx1}
    For each $\mu_n$ as given by eq.~(\ref{eqn:finiteprior}), the associated random field $X_n$ admits a version which is a L\'evy white noise.
\end{lemma}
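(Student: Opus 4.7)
The plan is to apply Lemma~\ref{lem:grfversion} directly, which reduces the task to checking three items for the random field $X_n$ associated with $\mu^{(n)}$: (a) realize $X_n$ as a collection of real random variables indexed by $u\in\mathcal{S}(\mathbb{R}^d)$, (b) verify linearity in $u$ almost surely, and (c) show that the characteristic function $\phi_{X_n}$ is continuous at the origin of $\mathcal{S}(\mathbb{R}^d)$. The first step is to realize $X_n$ explicitly. The measure $\mu^{(n)}$ is supported on the finite-dimensional subspace $\mathcal{F}_n=\operatorname{span}\{\psi_1,\dots,\psi_n\}\subset L^2(\mathbb{R}^d)$, and admits a coordinate description $\hat{u}=\sum_{i=1}^n\alpha_i\psi_i$ with $(\alpha_1,\dots,\alpha_n)\sim\mathcal{N}(0,\Sigma_{\mathcal{F}_n})$. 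I would define
$$
X_n(u) \;\coloneqq\; \langle \hat{u}, u\rangle_{L^2} \;=\; \sum_{i=1}^n \alpha_i\,\langle \psi_i,u\rangle_{L^2},
\qquad u\in\mathcal{S}(\mathbb{R}^d).
$$
Since each $\psi_i\in L^2(\mathbb{R}^d)$ and $\mathcal{S}(\mathbb{R}^d)\subset L^2(\mathbb{R}^d)$, the pairings $\langle\psi_i,u\rangle_{L^2}$ are well defined, and $X_n(u)$ is measurable as a finite linear combination of the Gaussian coordinates $\alpha_i$.

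Linearity is immediate from the bilinearity of the $L^2$ inner product: for any scalars $\beta_1,\dots,\beta_m$ and test functions $u_1,\dots,u_m\in\mathcal{S}(\mathbb{R}^d)$,
$$
X_n\!\Bigl(\sum_{j=1}^m\beta_j u_j\Bigr) \;=\; \sum_{i=1}^n\alpha_i\sum_{j=1}^m\beta_j\langle\psi_i,u_j\rangle_{L^2} \;=\; \sum_{j=1}^m\beta_j X_n(u_j),
$$
and this identity holds for every outcome $\omega$ in the sample space, hence a.s.

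For the characteristic function, since $X_n(u)$ is a centered Gaussian with variance $v_u^{\top}\Sigma_{\mathcal{F}_n}v_u$, where $v_u=(\langle\psi_i,u\rangle_{L^2})_{i=1}^n\in\mathbb{R}^n$, one has
$$
\phi_{X_n}(u) \;=\; \exp\Bigl\{-\tfrac{1}{2}\,v_u^{\top}\Sigma_{\mathcal{F}_n}v_u\Bigr\}.
$$
Continuity at $0$ in the Schwartz topology reduces to showing $v_{u_k}\to 0$ in $\mathbb{R}^n$ whenever $u_k\to 0$ in $\mathcal{S}(\mathbb{R}^d)$. This follows from the standard weighted estimate
$$
\|u\|_{L^2(\mathbb{R}^d)}^2 \;\leq\; \Bigl(\int_{\mathbb{R}^d}(1+|x|)^{-2(d+1)}\,dx\Bigr)\sup_{x\in\mathbb{R}^d}(1+|x|)^{2(d+1)}|u(x)|^2,
$$
which bounds the $L^2$-norm by one of the seminorms generating the topology of $\mathcal{S}(\mathbb{R}^d)$. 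Thus $u_k\to 0$ in $\mathcal{S}$ implies $\|u_k\|_{L^2}\to 0$, and then Cauchy--Schwarz gives $|\langle\psi_i,u_k\rangle_{L^2}|\leq\|\psi_i\|_{L^2}\|u_k\|_{L^2}\to 0$ for each $i\leq n$. Consequently $v_{u_k}\to 0$, so $\phi_{X_n}(u_k)\to 1$, establishing continuity at $0$.

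With (a)--(c) in hand, Lemma~\ref{lem:grfversion} directly yields a version of $X_n$ which is a L\'evy white noise on $\mathcal{S}'(\mathbb{R}^d)$. The only real obstacle is the continuity check in (c): although the finite-dimensionality keeps the variance finite for each $u$, one must invoke the fact that the Fr\'echet topology of $\mathcal{S}(\mathbb{R}^d)$ is strictly finer than the $L^2$ topology in order to conclude joint continuity. All other steps are formal manipulations of finite-dimensional Gaussians.
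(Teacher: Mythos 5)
Your proposal is correct and follows the same overall strategy as the paper: both reduce the claim to Lemma~\ref{lem:grfversion} by checking linearity of $X_n$ and continuity of $\phi_{X_n}$ at the origin. The differences are in how the two checks are carried out, and in both places your version is the more careful one. For linearity, the paper argues that $X_n\bigl(\sum_j\alpha_ju_j\bigr)$ and $\sum_j\alpha_jX_n(u_j)$ have the same characteristic function and hence the same distribution; but the definition of linearity demands almost-sure equality, which equality in distribution does not deliver (and the paper's expansion of the quadratic form additionally drops the cross terms $\langle\alpha_i\hat{u}_i,\Sigma_{\mathcal{F}}\alpha_j\hat{u}_j\rangle$, $i\neq j$). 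By instead realizing $X_n(u)=\langle\hat{u},u\rangle_{L^2}$ explicitly, you get linearity surely, for every sample point, which closes that gap outright. For continuity at $0$, the paper simply declares it obvious, whereas you supply the actual argument: the weighted estimate bounding $\|u\|_{L^2}$ by a Schwartz seminorm shows the inclusion $\mathcal{S}(\mathbb{R}^d)\hookrightarrow L^2(\mathbb{R}^d)$ is continuous, and Cauchy--Schwarz then gives $v_{u_k}\to0$, hence $\phi_{X_n}(u_k)\to1$. So while the route is the same, your execution is strictly tighter; the only thing the paper's version ``buys'' is brevity.
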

\begin{proof}
    Restrict each $\mu_n$ to $\mathcal{S}(\mathbb{R}^d)$, which can be done as $\mathcal{S}(\mathbb{R}^d) \subset L^2(\mathbb{R}^d)$.
    Then, each $\mu_n$ is a Gaussian measure associated with a Gaussian random field $X_n$ on $\mathcal{S}(\mathbb{R}^d)$.
    Since $X_n$ is Gaussian, it has characteristic function
    $$
    \phi_{X_n}(u) = \exp\left\{-\frac{1}{2}\langle \hat{u}, \Sigma_{\mathcal{F}_n}\hat{u}\rangle\right\}, \quad u\in \mathcal{S}(\mathbb{R}^d),
    $$
    where $\hat{u} = \sum_{j=1}^n\langle u,\psi_j\rangle\psi_j$ given an orthonormal basis $(\psi_j)_{j=1}^n \subset \mathcal{F}_n$.
    Observe that $\phi_{X_n}(0) = 1$ and note that $\phi_{X_n}$ is positive-definite as any characteristic function is positive-definite.
    We also have $\langle \cdot, \Sigma_{\mathcal{F}_n}\cdot\rangle = \|\cdot\|_{\Sigma_{\mathcal{F}_n}}^2$, and recalling that any norm is continuous, we identify that $\phi_{X_n}$ is a composition of continuous functions.
    Hence $\phi_{X_n}$ is continuous at $0$, and application of the Minlos-Bochner theorem completes the proof.
\end{proof}

Finally, we will look to apply Theorem~\ref{thm:levy}.
By Lemma~\ref{lem:apdx1}, we may regard each $\mu_n$ in the sequence as a L\'evy white noise.
As they are Gaussian, each has characteristic function
$$
    \phi_{X_n}(u) = \exp\left\{-\frac{1}{2}\langle \hat{u}, \Sigma_{\mathcal{F}_n}\hat{u}\rangle\right\}, \quad u\in \mathcal{S}(\mathbb{R}^d).
$$
Let $\phi_{\mu}(u) = \exp\left\{-\frac{1}{2}\langle u,Cu\rangle\right\}$, which is continuous at $0$.
Then in the limit, for any $u\in \mathcal{S}(\mathbb{R}^d)$, we have
$$
\lim_{n\to\infty} \phi_{X_n}(u) = \phi(u).
$$
By L\'evy's continuity theorem, there exists a L\'evy white noise $\mu$ on $\mathcal{S}'(\mathbb{R}^d)$ with characteristic function $\phi_{\mu}$ such that $\mu_n\overset{\mathrm{d}}{\rightarrow}\mu$.
Finally, $\phi_{\mu}$ is the form of a characteristic function of a Gaussian random field on $\mathcal{S}'(\mathbb{R}^d)$, which completes the proof of Theorem~\ref{thm:convergence2}.

\end{document}